\setlist[itemize]{noitemsep}
\algrenewcommand\algorithmicindent{0.4cm}
\title{On the Correspondence Between Monotonic Max-Sum GNNs and Datalog}
\author{%
    David Tena Cucala$^1$\and
    Bernardo Cuenca Grau$^1$\and
    Boris Motik$^{1}$\and
    Egor V. Kostylev$^2$ \\
    \affiliations
    $^1$ Department of Computer Science, University of Oxford, UK\\
    $^2$ Department of Informatics, University of Oslo, Norway\\
    \emails \{david.tena.cucala, bernardo.cuenca.grau, boris.motik\}@cs.ox.ac.uk, egork@uio.no
}
\begin{document}

\maketitle

\begin{abstract}
Although there has been significant interest in applying machine learning
techniques to structured data, the \emph{expressivity} (i.e., a description of
what can be learned) of such techniques is still poorly understood. In this
paper, we study data transformations based on \emph{graph neural networks}
(GNNs). First, we note that the choice of how a dataset is encoded into a
numeric form processable by a GNN can obscure the characterisation of a model's
expressivity, and we argue that a \emph{canonical} encoding provides an
appropriate basis. Second, we study the expressivity of \emph{monotonic
max-sum} GNNs, which cover a subclass of GNNs with max and sum aggregation
functions. We show that, for each such GNN, one can compute a Datalog program
such that applying the GNN to any dataset produces the same facts as a single
round of application of the program's rules to the dataset. Monotonic max-sum
GNNs can sum an unbounded number of feature vectors which can result in
arbitrarily large feature values, whereas rule application requires only a
bounded number of constants. Hence, our result shows that the unbounded
summation of monotonic max-sum GNNs does not increase their expressive power.
Third, we sharpen our result to the subclass of \emph{monotonic max} GNNs,
which use only the max aggregation function, and identify a corresponding class
of Datalog programs.

\end{abstract}

\section{Introduction}

Data management tasks such as query answering or logical reasoning can be
abstractly seen as transforming an input dataset into an output dataset. A key
aspect of such transformations is their \emph{expressivity}, which is often
established by identifying a logic-based language that realises the same class
of transformations. For example, core aspects of the SQL and SPARQL query
languages have been characterised using fragments of first-order logic
\cite{abiteboul95foundation,DBLP:journals/tods/PerezAG09}, and logical
deduction over RDF datasets has been described using the rule-based language
\emph{Datalog} \cite{owl2profiles}. Such correspondences enable rigorous
understanding and comparison of different data management languages.

Recently, there has been an increasing interest in applying machine learning
techniques to data management tasks. A key benefit is that the desired
transformation between datasets can be induced from examples, rather than
specified explicitly. Many models have been proposed for this purpose, such as
recurrent \cite{DBLP:journals/apin/HolldoblerKS99}, fibring
\cite{DBLP:conf/flairs/BaderGH05}, and feed-forward networks
\cite{DBLP:conf/ijcai/BaderHHW07}, architectures that simulate forward
\cite{DBLP:conf/iclr/DongMLWLZ19,DBLP:journals/corr/abs-1809-02193} and
backward chaining \cite{DBLP:conf/nips/Rocktaschel017}, and architectures for
rule learning \cite{DBLP:conf/nips/YangYC17,DBLP:conf/nips/SadeghianADW19}.
\emph{Graph neural networks} (GNNs) have proved particularly popular since they
can express graph transformations and have been widely applied to link
prediction and node classification tasks in structured datasets
\cite{DBLP:conf/esws/SchlichtkrullKB18,DBLP:conf/semweb/PfluegerCK22,DBLP:conf/nips/LiuGHK21,DBLP:conf/icassp/IoannidisMG19,DBLP:conf/icml/QuBT19,DBLP:conf/icml/YangCS16,DBLP:conf/iclr/KipfW17,DBLP:conf/nips/ZhangC18,DBLP:conf/icml/TeruDH20}.

Characterising the expressivity of ML models for data management has thus
steadily gained importance, and computational logic provides a well-established
methodology: we can describe conditions under which ML-induced models become
equivalent to logical formalisms in the sense that applying the ML model to an
arbitrary dataset produces the same result as applying a specific logical
formula. In a pioneering study, \citea{DBLP:conf/iclr/BarceloKM0RS20} showed
that each GNN-induced transformation expressible in first-order logic is
equivalent to a concept query of the $\mathcal{ALCQ}$ \emph{description logic}
\cite{dl-handbook-2}---a popular KR formalism.
\citea{DBLP:journals/corr/abs-2302-02209} proved an analogous result for a
class of GNNs with a dedicated vertex and colour.
\citea{DBLP:conf/aaai/0001RFHLRG19} showed that GNNs can express certain types
of graph isomorphism tests. \citea{DBLP:journals/ml/SourekZK21} characterised
the expressivity of GNNs using a hybrid language where each Datalog rule is
annotated with a tensor. \citea{tccgkm22explainable-gnn-models} characterised
the expressivity of \emph{monotonic GNNs} (MGNNs), which use the max
aggregation function and require all weights in the matrices to be nonnegative,
in terms of a class of Datalog programs. Finally,
\citea{tccgm22faithful-approaches} characterised the expressivity of the
Neural-LP model of rule learning.

In this paper, we take a next step in the study of the expressivity of
GNN-based transformations of structured data. A key technical challenge can be
summarised as follows. GNNs typically use summation to aggregate feature
vectors of all vertices adjacent to a given vertex in the input graph. The
number of adjacent vertices in the input is unbounded (i.e., there is no a
priori limit on the number of neighbours a vertex can have), and so the
summation result can be unbounded as well; hence, it appears that arbitrarily
many vertices can influence whether a fact is derived. This seems fundamentally
different to reasoning in fragments of first-order logic such as Datalog: the
number of constants that need to be jointly considered in an application of a
Datalog rule is determined by the number of rule variables, and \emph{not} by
the structure of the input dataset. Thus, at first glance, one might expect
GNNs with summation to be fundamentally different from Datalog rules. To shed
light on this issue, we present several novel contributions.

In Section~\ref{sec:encoding} we focus on a key obstacle: to apply a GNN to a
dataset, the latter must be encoded as a graph where each vertex is assigned a
numeric feature vector; but then, the expressivity of the transformation
inevitably depends on the details of the encoding, which obscures the
contribution of the GNN itself. To overcome this, we adopt a \emph{canonical}
encoding, variants of which have already been considered by
\citea{DBLP:conf/esws/SchlichtkrullKB18},
\citea{DBLP:conf/iclr/BarceloKM0RS20}, and
\citea{DBLP:conf/semweb/PfluegerCK22}. We define a GNN to be \emph{equivalent}
to a Datalog program if applying the GNN to any dataset while using the
canonical encoding produces the same facts as applying the program's rules to
the dataset \emph{once} (i.e., without fixpoint iteration). Finally, we observe
that noncanonical encodings by \citea{tccgkm22explainable-gnn-models},
\citea{DBLP:conf/aaai/0001RFHLRG19}, or \citea{DBLP:conf/nips/LiuGHK21} can be
described using well-known extensions of Datalog, and so the expressivity of
transformations based on such encodings can be characterised by composing all
relevant programs.

In Section~\ref{sec:maxsum} we present our main technical contribution. First,
we introduce a class of \emph{monotonic max-sum} GNNs. Similarly to the MGNNs
by \citea{tccgkm22explainable-gnn-models}, monotonic max-sum GNNs require
matrix weights to be be nonnegative; however, they allow for the max or sum
aggregation functions in each network layer, and they place certain
restrictions on the activation and classification functions (ReLU and threshold
functions are allowed). \citea{tccgkm22explainable-gnn-models} showed that the
performance of such GNNs with just max aggregation on tasks such as knowledge
graph completion is on a par with that of other recent approaches. Hence,
monotonic max-sum GNNs are practically relevant, but they also allow their
predictions to be explained using logical proofs. Second, we prove that each
monotonic max-sum GNN is equivalent to a Datalog program of a certain shape
possibly containing inequalities in rule bodies. Strictly speaking, such a
program can be recursive in the sense that the same predicate can occur in both
rule bodies and heads; however, our notion of equivalence does not involve
fixpoint iteration (i.e., the program's rules are applied just once). Thus,
monotonic max-sum GNNs can derive facts with predicates from the input, but
they cannot express true recursive properties such as reachability; moreover,
the ability to produce unbounded feature values does not lead to a fundamental
increase in expressivity. Our equivalence proof is quite different from the
analogous result for MGNNs: when aggregation is limited to just max, the value
of each feature of a vertex clearly depends on only a fixed number of
neighbours of the vertex. Third, we prove that the equivalent Datalog program
can be computed from the GNN itself. This result is interesting because it
requires enumerating potentially infinite sets of real-valued candidate feature
values in a way that guarantees termination. This provides a starting point for
future development of practical techniques for extracting Datalog programs from
monotonic max-sum GNNs.

Finally, in Section~\ref{sec:max} we sharpen our results to \emph{monotonic
max} GNNs, which allow only for max aggregation. We show that, analogously to
MGNNs, each monotonic max GNN is equivalent to a positive Datalog program;
however, we also present a converse result: we identify a class Datalog
programs such that, for each program in the class, there exists an equivalent
monotonic max GNN. In this way, we obtain an exact characterisation of an
interesting class of GNN-based transformations using logical formalisms.

The proofs of all theorems are given in full in
\iftoggle{withappendix}{Appendices~\ref{sex:proofs-maxsum} and
\ref{sex:proofs-max}.}{the extended version of this paper
\cite{max-sum-extendedversion}.}

\section{Preliminaries}\label{sec:preliminaries}

We next recapitulate the basics of Datalog and GNNs.

\myparagraph{Datasets and Datalog.} We fix a signature consisting of countably
infinite, disjoint sets of \emph{predicates} and \emph{constants}. Each
predicate is associated with a nonnegative integer arity. We also consider a
countably infinite set of \emph{variables} that is disjoint with the sets of
predicates and constants.

A \emph{term} is a variable or a constant. An \emph{atom} is of the form
${P(t_1, \dots, t_n)}$ where $P$ is a predicate of arity $n$ and ${t_1, \cdots,
t_n}$ are terms. An \emph{inequality} is an expression of the form ${t_1 \noteq
t_2}$ where $t_1$ and $t_2$ are terms. A \emph{literal} is an atom or an
inequality. A term or a literal is \emph{ground} if it is variable-free. A
\emph{fact} is a ground atom and a \emph{dataset} is a finite set of facts;
thus, datasets cannot contain inequalities. A conjunction $\alpha$ of facts is
true in a dataset $D$, written ${D \models \alpha}$, if ${A \in D}$ for each
fact $A$ in $\alpha$. A ground inequality ${s \noteq t}$ is true if ${s \neq
t}$; for uniformity with facts, we often write ${D \models s \noteq t}$ even
though the truth of ${s \noteq t}$ does not depend on $D$. A (Datalog)
\emph{rule} is of the form \eqref{eq:rule} where ${n \geq 0}$, ${B_1, \dots,
B_n}$ are \emph{body} literals, and $H$ is the \emph{head} atom:
\begin{align}
    B_1 \wedge \dots \wedge B_n \rightarrow H.  \label{eq:rule}
\end{align}
A (Datalog) \emph{program} is a finite set of rules. A \emph{substitution}
$\nu$ is a mapping of finitely many variables to ground terms;
for $\alpha$ a
literal, $\alpha\nu$ is the result of replacing in $\alpha$ each variable $x$
with $\nu(x)$ provided the latter is defined. Each rule $r$ of form
\eqref{eq:rule} defines an \emph{immediate consequence} operator $T_r$ on
datasets: for $D$ a dataset, $T_r(D)$ is the dataset that contains the fact
$H\nu$ for each substitution $\nu$ mapping all variables of $r$ to terms
occurring in $D$ such that ${D \models B_i\nu}$ for each ${1 \leq i \leq n}$.
For $\Prog$ a program, ${T_\Prog(D) = \bigcup_{r \in \Prog} T_r(D)}$.

To simplify the formal treatment, we do not make the usual \emph{safety}
requirement where each variable in a rule must occur in a body atom; in fact,
the body can be empty, which we denote by $\top$. For example, rule ${r = \top
\rightarrow R(x,y)}$ is syntactically valid; moreover, the definition of $T_r$
ensures that $T_r(D)$ contains exactly each fact $R(s,t)$ for all (not
necessarily distinct) terms $s$ and $t$ occurring in $D$.

Conjunctions $\alpha$ and $\beta$ of literals are \emph{equal up to variable
renaming} if there exists a bijective mapping $\nu$ from the set of all
variables of $\alpha$ to the set of all variables of $\beta$ such that
$\alpha\nu$ and $\beta$ contain exactly the same conjuncts; this notion is
extended to rules in the obvious way. A set $S$ \emph{contains} a conjunction
$\alpha$ of literals \emph{up to variable renaming} if there exists ${\beta \in
S}$ such that $\alpha$ and $\beta$ are equal up to variable renaming.

\myparagraph{Graph Neural Networks.} We use $\real$ and $\nnreal$ for the sets
of real and nonnegative real numbers, respectively. Also, we use $\nat$ for the
set of natural numbers, and ${\nnnat = \nat \cup \{ 0 \}}$.

A function ${\sigma : \real \to \real}$ is \emph{monotonically increasing} if
${x < y}$ implies ${\sigma(x) \leq \sigma(y)}$. Function $\sigma$ is
\emph{Boolean} if its range is ${\{ 0, 1 \}}$. Finally, $\sigma$ is
\emph{unbounded} if, for each ${y \in \real}$, there exists ${x \in \real}$
such that ${\sigma(x) > y}$.

A real \emph{multiset} is a function ${S : \real \to \nnnat}$ that assigns to
each ${x \in \real}$ the number of occurrences $S(x)$. Such $S$ is
\emph{finite} if ${S(x) > 0}$ for finitely many ${x \in \real}$; the
\emph{cardinality} of such $S$ is ${|S| = \sum_{x \in \real} S(x)}$; and
$\mathcal{F}(\real)$ is the set of all finite real multisets. We often write a
finite $S$ as a list of possibly repeated real numbers in double-braces
${\llbrace \dots \rrbrace}$. Finally, we treat a set as a multiset where each
element occurs just once.

We consider vectors and matrices over $\real$ and $\nnreal$. For $\vc{v}$ a
vector and $i$ a natural number, $\elt{\vc{v}}{i}$ is the $i$-th element of
$\vc{v}$. We apply scalar functions to vectors element-wise; for example, given
$n$ vectors ${\vc{v}_1, \dots, \vc{v}_n}$ of equal dimension, ${\max \{
\vc{v}_1, \dots, \vc{v}_n \}}$ is the vector whose $i$-th element is equal to
${\max \{ \elt{\vc{v}_1}{i}, \dots, \elt{\vc{v}_n}{i} \}}$.

For $\col$ a finite set of \emph{colours} and ${\ddim \in \nat}$ a
\emph{dimension}, a $(\col,\ddim)$-graph is a tuple ${\gG = \langle \gV, \{
\gE{c} \}_{c \in \col}, \lab \rangle}$ where $\gV$ is a finite set of
\emph{vertices}; for each ${c \in \col}$, ${\gE{c} \subseteq \gV \times \gV}$
is a set of directed \emph{edges}; and \emph{labelling} $\lab$ assigns to each
${v \in \gV}$ a \emph{feature} vector $\vlab{v}{\lab}$ of dimension $\ddim$.
Graph $\gG$ is \emph{symmetric} if ${\langle v,u \rangle \in \gE{c}}$ implies
${\langle u,v \rangle \in \gE{c}}$ for each ${c \in \col}$, and it is
\emph{Boolean} if ${\elt{\vlab{v}{\lab}}{i} \in \{ 0, 1 \}}$ for each ${v \in
\gV}$ and ${i \in \{ 1, \dots, \ddim \}}$. To improve readability, we
abbreviate $\vlab{v}{\lab}$ to just $\vlab{v}{}$ when the labelling function is
clear from the context; analogously, we abbreviate $\vlab{v}{\lab[\ell]}$ to
$\vlab{v}{\ell}$.

A $(\col,\ddim)$-\emph{graph neural network} (\emph{GNN}) $\GNN$ with ${L \geq
1}$ \emph{layers} is a tuple
\begin{align}
    \begin{array}{@{}l@{}}
        \langle \{ \matA{\ell} \}_{1 \leq \ell \leq L}, \{ \matB{\ell}{c} \}_{c \in \col \text{ and } 1 \leq \ell \leq L}, \\[0.5ex]
        \hspace{2.5cm} \{ \bias{\ell} \}_{1 \leq \ell \leq L}, \{ \agg{\ell} \}_{1 \leq \ell \leq L}, \act, \cls \rangle,
    \end{array} \label{eq:GNN}
\end{align}
where, for each ${\ell \in \{ 1, \dots, L \}}$ and ${c \in \col}$,
$\matA{\ell}$ and $\matB{\ell}{c}$ are matrices over $\real$ of dimension
$\ddim[\ell] \times \ddim[\ell-1]$ with ${\ddim[0] = \ddim[L] = \ddim}$,
$\bias{\ell}$ is a vector over $\real$ of dimension $\ddim[\ell]$, ${\agg{\ell}
: \mathcal{F}(\real) \to \real}$ is an \emph{aggregation} function, ${\act :
\real \to \real}$ is an \emph{activation} function, and ${\cls : \real \to \{
0,1 \}}$ is a \emph{classification} function.

Applying $(\col,\ddim)$-GNN $\GNN$ to $(\col,\ddim)$-graph $\gG$ induces the
sequence ${\lab[0], \dots, \lab[L]}$ of vertex labelling functions such that
${\lab[0] = \lab}$ and, for each ${\ell \in \{ 1, \ldots, L \}}$ and ${v \in
V}$, the value of $\vlab{v}{\ell}$ is given by
\begin{align}
\begin{array}{@{}l@{}}
    \vlab{v}{\ell} = \act\Big(\matA{\ell} \vlab{v}{\ell-1} + \\
    \qquad\quad\; \sum\limits_{c \in \col}\! \matB{\ell}{c} \; \agg{\ell} \big(\llbrace \vlab{u}{\ell-1} \mid \langle v,u \rangle \in \gE{c} \rrbrace\big) + \bias{\ell}\Big).
\end{array} \label{eq:GNN-propagation}
\end{align}
The result $\GNN(\gG)$ of applying $\GNN$ to $\gG$ is the Boolean
$(\col,\ddim)$-graph with the same vertices and edges as $\gG$, but where each
vertex ${v \in \gV}$ is labelled by $\cls(\vlab{v}{L})$.

\section{Choosing an Encoding/Decoding Scheme}\label{sec:encoding}

To realise a dataset transformation using a GNN, we must first encode the input
dataset into a graph that can be processed by a GNN, and subsequently decode
the GNN's output back into a dataset. Several encoding/decoding schemes have
been proposed in the literature, and their details differ considerably. As a
result, when characterising GNN-based transformations of datasets using logic,
it can be hard to understand which properties of the characterisation are due
to the chosen encoding/decoding scheme, and which are immanent to the GNN used
to realise the transformation. In this paper we consider primarily the encoding
scheme that straightforwardly converts a dataset into a graph, but we also
discuss how to take other encoding schemes into account.

\subsection{Canonical Encoding/Decoding Scheme}\label{sec:encoding:canonical}

A straightforward way to encode a dataset containing only unary and binary
facts into a Boolean $(\col,\ddim)$-graph is to transform terms into vertices,
use vertex connectivity to describe binary facts, and encode presence of unary
facts in feature vectors. Such encoding/decoding schemes, which we call
\emph{canonical}, have already been widely used in the literature with minor
variations
\cite{DBLP:conf/esws/SchlichtkrullKB18,DBLP:conf/semweb/PfluegerCK22,DBLP:conf/iclr/BarceloKM0RS20}.
They establish a direct syntactic correspondence between datasets and coloured
graphs and are thus a natural starting point for studying the expressivity of
GNNs.

We next describe one such scheme. In particular, we introduce
$(\col,\ddim)$-datasets, which naturally correspond to a large class of
$(\col,\ddim)$-graphs. Our definitions provide the foundation necessary to
formulate our expressivity results in Section~\ref{sec:maxsum}. In Section
\ref{sec:encoding:noncanonical} we discuss how to combine our expressivity
results with more complex encoding schemes.

\begin{definition}
    Let $\col$ be a set of colours and let ${\ddim \in \nat}$ be a dimension. A
    $(\col,\ddim)$-\emph{signature} contains
    \begin{itemize}
        \item a binary predicate $\edg{c}$ for each colour ${c \in \col}$, and

        \item a unary predicate $U_i$ for each ${i \in \{ 1, \dots, \ddim \}}$.
    \end{itemize}
    A $(\col,\ddim)$-\emph{fact} has a predicate from the
    $(\col,\ddim)$-signature, and a $(\col,\ddim)$-\emph{dataset} contains only
    $(\col,\ddim)$-facts.
\end{definition}

We assume that terms occurring in datasets correspond one-to-one to vertices of
coloured graphs---that is, each term $t$ is paired with a unique vertex $v_t$.
This is again without loss of generality since the result of applying a GNN to
a coloured graph does not depend on the identity of vertices, but only on the
graph structure and the feature vectors.

We are now ready to define the canonical GNN-based transformations of
$(\col,\ddim)$-datasets.

\begin{definition}\label{def:canonical}
    The \emph{canonical encoding} $\canenc(D)$ of a $(\col,\ddim)$-dataset $D$
    is the Boolean $(\col,\ddim)$-graph ${\langle \gV, \{ \gE{c} \}_{c \in
    \col}, \lab \rangle}$ defined as follows:
    \begin{itemize}
        \item $\gV$ contains the vertex $v_t$ for each term $t$ occurring in
        $D$;

        \item ${\langle v_t,v_s \rangle \in \gE{c}}$ if ${\edg{c}(t,s) \in D}$
        for each ${c \in \col}$; and

        \item ${\elt{\vc{v}_t}{i} = 1}$ if ${U_i(t) \in D}$, and
        ${\elt{\vc{v}_t}{i} = 0}$ otherwise.
    \end{itemize}
    The \emph{canonical decoding} $\candec(\gG)$ of a Boolean
    $(\col,\ddim)$-graph ${\gG = \langle \gV, \{ \gE{c} \}_{c \in \col}, \lab
    \rangle}$ is the dataset that contains
    \begin{itemize}
        \item the fact $\edg{c}(t,s)$ for each ${\langle v_t, v_s \rangle \in
        \gE{c}}$ and ${c \in \col}$, and

        \item the fact $U_i(t)$ for each ${v_t \in \gV}$ and ${i \in \{ 1,
        \dots, \ddim \}}$ such that ${\elt{\vc{v}_t}{i} = 1}$.
    \end{itemize}
    Each $(\col,\ddim)$-GNN $\GNN$ induces the \emph{canonical} transformation
    $T_\GNN$ on $(\col,\ddim)$-datasets where ${T_\GNN(D) =
    \candec(\GNN(\canenc(D)))}$ for each $(\col,\ddim)$-dataset $D$.
\end{definition}

This encoding neither introduces nor omits any information from the input
dataset, so a $(\col,\ddim)$-dataset $D$ and its canonical encoding
$\canenc(D)$ straightforwardly correspond to one another. Since datasets are
directional, $(\col,\ddim)$-graphs must be directed as well to minimise the
discrepancy between the two representations. The canonical decoding is
analogous to the encoding, and the two are inverse operations on graphs that
are regular as per Definition~\ref{def:regular}.

\begin{definition}\label{def:regular}
    A $(\col,\ddim)$-graph ${\gG = \langle \gV, \{ \gE{c} \}_{c \in \col}, \lab
    \rangle}$ is \emph{regular} if $\gG$ is Boolean and each vertex ${v \in
    \gV}$ either occurs in $\gE{c}$ for some ${c \in \col}$, or
    ${\elt{\vlab{v}{}}{i} = 1}$ for some ${i \in \{ 1, \dots, \ddim \}}$.
\end{definition}

Our canonical encoding produces only regular graphs, and there is a one-to-one
correspondence between $(\col,\ddim)$-datasets and regular
$(\col,\ddim)$-graphs. Our results from the following sections can be
equivalently framed as characterising expressivity of GNN transformations of
regular graphs in terms of Datalog programs. Graphs that are not Boolean do not
correspond to encodings of datasets, so we do not see a natural way to view GNN
transformations over such graphs in terms of logical formalisms. Finally, a
$(\col,\ddim)$-graph $\gG$ that is Boolean but not regular contains `isolated'
vertices that are not connected to any other vertex and are labelled by zeros
only. When such $\gG$ is decoded into a $(\col,\ddim)$-dataset, such `isolated'
vertices do not produce any facts in $\candec(\gG)$ and thus several
non-regular Boolean graphs can produce the same $(\col,\ddim)$-dataset. Note,
however, that each `isolated' zero-labelled vertex is transformed by a GNN in
the same way---that is, the vector labelling the vertex in the GNN's output
does not depend on any other vertices but only on the matrices of the GNN.
Consequently, such vertices are not interesting for our study of GNN
expressivity.

We are now ready to formalise our central notion of equivalence between a GNN
and a Datalog program.

\begin{definition}\label{def:capture-equivalence}
    A $(\col,\ddim)$-GNN $\GNN$ \emph{captures} a rule or a Datalog program
    $\alpha$ if ${T_\alpha(D) \subseteq T_\GNN(D)}$ for each
    $(\col,\ddim)$-dataset $D$. Moreover, $\GNN$ and $\alpha$ are
    \emph{equivalent} if ${T_\GNN(D) = T_\alpha(D)}$ for each
    $(\col,\ddim)$-dataset $D$.
\end{definition}

The key question we address in Sections~\ref{sec:maxsum} and~\ref{sec:max} is
the following: under what conditions is a given $(\col,\ddim)$-GNN $\GNN$
equivalent to a Datalog program, and can this program (at least in principle)
be computed from $\GNN$?

\subsection{Noncanonical Encoding/Decoding Schemes}\label{sec:encoding:noncanonical}

For each $(\col,\ddim)$-dataset $D$, the binary facts of $D$ and $T_\GNN(D)$
coincide, and so applying $T_\GNN$ to $D$ cannot derive any binary facts. To
overcome this limitation, more complex, noncanonical encodings have been
proposed
\cite{tccgkm22explainable-gnn-models,DBLP:conf/aaai/0001RFHLRG19,DBLP:conf/nips/LiuGHK21}.
These introduce vertices representing combinations of several constants so that
facts of higher arity can be encoded in appropriate feature vectors, but there
is no obvious canonical way to achieve this. Expressivity results based on such
encodings are less transparent because it is not obvious which aspects of
expressivity are due to the encoding/decoding scheme and which are immanent to
the GNN itself.

We argue that noncanonical encoding/decoding schemes can often be described by
a pair of programs $\Prog_\enc$ and $\Prog_\dec$, possibly expressed in a
well-known extension of Datalog, which convert an input dataset into a
$(\col,\ddim)$-dataset and vice versa. Thus, given an arbitrary dataset $D$,
the result of applying the end-to-end transformation that uses a GNN $\GNN$ and
the respective encoding/decoding scheme is
$T_{\Prog_\dec}(T_\GNN(T_{\Prog_\enc}(D)))$. Furthermore, if $\GNN$ is
equivalent to a Datalog program $\Prog_\GNN$, then the composition of
$\Prog_\enc$, $\Prog_\GNN$, and $\Prog_\dec$ characterises the end-to-end
transformation. This allows us to clearly separate the contribution of the GNN
from the contributions of the encoding and decoding.

\myparagraph{\citea{tccgkm22explainable-gnn-models}} recently presented a
dataset transformation based on a class of \emph{monotonic} GNNs (MGNNs). Their
approach is applicable to a dataset $D$ that uses unary predicates ${A_1,
\dots, A_\epsilon}$ and binary predicates ${R_{\epsilon+1}, \dots, R_{\ddim}}$,
and $D$ is encoded into a symmetric $(\col,\ddim)$-graph over the set of
colours ${\col = \{ c_1, c_2, c_3, c_4 \}}$. The encoding introduces a vertex
$v_a$ for each constant $a$ in $D$ as well as vertices $v_{a,b}$ and $v_{b,a}$
for each pair of constants $a,b$ occurring together in a binary fact in $D$.
Predicates are assigned fixed positions in vectors so that the value of a
component of a vector labelling a vertex indicates the presence or absence of a
specific fact in $D$. For example, if ${A_i(a) \in D}$, then
$\elt{\vc{v}_a}{i}$ is set to $1$; analogously, if ${R_j(a,b) \not\in D}$ but
$a$ and $b$ occur in $D$ in a binary fact, then $\elt{\vc{v}_{a,b}}{j}$ is set
to $0$. Moreover, the edges of the coloured graph indicate different types of
`connections' between constants; for example, vertices $v_a$ and $v_{a,b}$ are
connected by an edge of colour $c_1$ to indicate that constant $a$ occurs first
in the constant pair $(a,b)$. A variant of this approach was also proposed by
\citea{DBLP:conf/nips/LiuGHK21} in the context of knowledge graph completion.

We next show how to capture this encoding using rules. Note that the encoder
introduces vertices of the form $v_{a,b}$ for pairs of constants $a$ and $b$,
so the encoding program $\Prog_\enc$ requires value invention. This can be
conveniently realised using functional terms. For example, we can represent
vertex $v_{a,b}$ using term $g(a,b)$, and we can represent each vertex of the
form $v_a$ using a term $f(a)$ for uniformity. Applying the encoding program
$\Prog_\enc$ to a dataset thus produces a $(\col,\ddim)$-dataset with
functional terms, which should be processed by the GNN as if they were
constants; for example, the canonical encoding should transform $g(a,b)$ into
vertex $v_{g(a,b)}$. Based on this idea, the encoding program $\Prog_\enc$
contains rule \eqref{rule:l:1} instantiated for each ${i \in \{ 1, \dots,
\epsilon \}}$, and rules \eqref{rule:l:2}--\eqref{rule:c4:2} instantiated for
each ${j \in \{ \epsilon + 1, \dots, \ddim \}}$.
\begin{align}
    A_i(x)      & \rightarrow U_i(f(x))                 \label{rule:l:1}  \\
    R_j(x,y)    & \rightarrow U_j(g(x,y))               \label{rule:l:2}  \\
    R_j(x,y)    & \rightarrow \edg{c_1}(f(x),g(x,y))    \label{rule:c1:1} \\
    R_j(x,y)    & \rightarrow \edg{c_1}(g(x,y),f(x))    \label{rule:c1:2} \\
    R_j(x,y)    & \rightarrow \edg{c_2}(f(y),g(x,y))    \label{rule:c2:1} \\
    R_j(x,y)    & \rightarrow \edg{c_2}(g(x,y),f(y))    \label{rule:c2:2} \\
    R_j(x,y)    & \rightarrow \edg{c_3}(g(x,y),g(y,x))  \label{rule:c3:1} \\
    R_j(x,y)    & \rightarrow \edg{c_3}(g(y,x),g(x,y))  \label{rule:c3:2} \\
    R_j(x,y)    & \rightarrow \edg{c_4}(f(x),f(y))      \label{rule:c4:1} \\
    R_j(x,y)    & \rightarrow \edg{c_4}(f(y),f(x))      \label{rule:c4:2}
\end{align}
Rules \eqref{rule:l:1} and \eqref{rule:l:2} ensure that all unary and binary
facts in the input dataset are encoded as facts of the form $U_i(f(a))$ and
$U_j(g(a,b))$; thus, when these are further transformed into a
$(\col,\ddim)$-graph, the vectors labelling vertices $v_{f(a)}$ and
$v_{g(a,b)}$ encode all input facts of the form $A_i(a)$ and $R_j(a,b)$ for ${i
\in \{ 1, \dots, \epsilon \}}$ and ${j \in \{ \epsilon+1, \dots, \ddim \}}$. In
addition, rules \eqref{rule:c1:1}--\eqref{rule:c4:2} encode the adjacency
relationships between terms: colour $c_1$ connects terms $g(a,b)$ and $f(a)$,
colour $c_2$ connects $g(a,b)$ and $f(b)$, colour $c_3$ connects $g(a,b)$ and
$g(b,a)$, and colour $c_4$ connects terms $f(a)$ and $f(b)$ provided that $a$
and $b$ occur jointly in a binary fact.

Program $\Prog_\dec$ capturing the decoder contains rule \eqref{rule:dec:1}
instantiated for each ${i \in \{ 1, \dots, \epsilon \}}$, as well as rule
\eqref{rule:dec:2} instantiated for each ${j \in \{ \epsilon + 1, \dots, \ddim
\}}$.
\begin{align}
    U_i(f(x))   & \rightarrow A_i(x)    \label{rule:dec:1} \\
    U_j(g(x,y)) & \rightarrow R_j(x,y)  \label{rule:dec:2}
\end{align}
Intuitively, these rules just `read off' the facts from the labels of vertices
such as $v_{f(a)}$ and $v_{g(a,b)}$. The composition of these three programs is
a (function-free) Datalog program.

It is straightforward to show that, for each dataset $D$, the graph obtained by
applying the encoder by \citea{tccgkm22explainable-gnn-models} is isomorphic to
the graph obtained by applying the canonical encoding from
Definition~\ref{def:canonical} to $T_{\Prog_\enc}(D)$ and thus program
$\Prog_\enc$ correctly captures their encoder.

A limitation of this encoding is that the transformation's output can contain a
fact of the form $R(a,b)$ only if the input dataset contains a fact of the form
$S(a,b)$ or $S(b,a)$. Intuitively, the presence of $S(a,b)$ or $S(b,a)$ in the
input ensures that the resulting $(\col,\ddim)$-graph contains a vertex
$v_{g(a,b)}$ for representing binary facts of the form $R(a,b)$. An obvious way
to overcome this limitation is to introduce terms $g(a,b)$ for all constants
$a$ and $b$ occurring in the input, without requiring $a$ and $b$ to occur
jointly in a binary fact. While this increases the expressivity of the
end-to-end transformation, the increase is due to the encoding step, rather
than the GNN. Our framework makes this point clear. For example, we can extend
$\Prog_\enc$ with rules such as \eqref{rule:ext:1}--\eqref{rule:ext:4} and so
on for all other combinations of unary and binary predicates and colours. The
chaining of $\Prog_\enc$, $\Prog_\GNN$, and $\Prog_\dec$ can now capture
different transformations even if $\Prog_\GNN$ remains the same.
\begin{align}
    A_i(x) \wedge A_j(y)    & \rightarrow \edg{c_1}(f(x),g(x,y))    \label{rule:ext:1} \\
    A_i(x) \wedge A_j(y)    & \rightarrow \edg{c_1}(g(x,y),f(x))    \label{rule:ext:2} \\
    R_i(x,z) \wedge A_j(y)  & \rightarrow \edg{c_1}(g(x,y),f(x))    \label{rule:ext:3} \\
    R_i(z,x) \wedge A_j(y)  & \rightarrow \edg{c_1}(g(x,y),f(x))    \label{rule:ext:4}
\end{align}

\myparagraph{\citea{DBLP:conf/aaai/0001RFHLRG19}} introduced $k$-GNNs and
showed them to be more expressive than standard GNNs. The input to a $k$-GNN is
a symmetric $(\col,\ddim[1])$-graph $\gG_1$ without self-loops where $\col$
contains a single colour $c$ and, for each vertex $v$ of $\gG_1$,
${\elt{\vlab{v}{}}{i} = 1}$ for exactly one ${1 \leq i \leq \ddim[1]}$. To
apply a $k$-GNN to $\gG_1$, the latter is transformed into another
$(\col,\ddim[2])$-graph $\gG_2$ that contains one vertex for each set of $k$
distinct vertices of $\gG_1$, and then a standard $(\col,\ddim[2])$-GNN is
applied to $\gG_2$.

We next show that the transformation of $\gG_1$ into $\gG_2$ can be captured by
a program $\Prog_\enc$ that transforms a $(\col,\ddim[1])$-dataset over unary
predicates $A_1,\dots, A_{\ddim[1]}$ and a binary predicate $R$ into a
$(\col,\ddim[2])$-dataset. Thus, the increase in expressivity of $k$-GNNs does
not come from the GNN model itself, but rather from the encoding implicit in
their approach. For readability, we make several simplifying assumptions.
First, while \citea{DBLP:conf/aaai/0001RFHLRG19} consider sets of $k$ distinct
vertices in order to ensure practical scalability, we consider $k$-tuples
instead and limit our presentation to just $k=2$. Second, we consider just the
\emph{local neighbourhood} approach to connecting vertices in $\gG_2$. Finally,
our encoding requires extending Datalog not only with function symbols, but
also with stratified negation-as-failure $\mathsf{not}$
\cite{DBLP:journals/csur/DantsinEGV01}.

Program $\Prog_\enc$ consists of rules \eqref{rule:kGNN:1}--\eqref{rule:kGNN:4}
instantiated for all ${i,j,k,\ell \in \{ 1, \dots, \ddim[1] \}}$.
\begin{align}
    \begin{array}{@{}r@{\;}l@{}}
        A_i(x) \wedge A_j(y) \wedge x \noteq y      & \wedge \\
        A_k(x) \wedge A_\ell(z) \wedge x \noteq z   & \wedge \\
        R(y,z) \wedge y \noteq z                    & \rightarrow \edg{c}(g(x,y),g(x,z))
    \end{array}  \label{rule:kGNN:1} \\
    \begin{array}{@{}r@{\;}l@{}}
        A_i(y) \wedge A_j(x) \wedge y \noteq x      & \wedge \\
        A_k(z) \wedge A_\ell(x) \wedge z \noteq x   & \wedge \\
        R(y,z) \wedge y \noteq z                    & \rightarrow \edg{c}(g(y,x),g(z,x))
    \end{array}  \label{rule:kGNN:2} \\
    \begin{array}{@{}l@{}}
        A_i(x) \wedge A_j(y) \wedge x \noteq y \wedge \mathsf{not}~R(x,y) \\
        \hspace{4.5cm} \rightarrow U_{i,j,0}(g(x,y))
    \end{array} \label{rule:kGNN:3}  \\
    \begin{array}{@{}l@{}}
        A_i(x) \wedge A_j(y) \wedge x \noteq y \wedge R(x,y) \\
        \hspace{4.5cm} \rightarrow U_{i,j,1}(g(x,y))
    \end{array} \label{rule:kGNN:4}
\end{align}
Conjunctions of the form ${A_i(x) \wedge A_j(y) \wedge x \noteq y}$ in these
rules identify pairs of distinct constants $a$ and $b$ (corresponding to the
vertices of $\gG_1$) in the input dataset, and, for each such pair, $g(x,y)$
introduces a term $g(a,b)$ (corresponding to a vertex of $\gG_2$). Rules
\eqref{rule:kGNN:1} and \eqref{rule:kGNN:2} encode the \emph{local
neighbourhood} approach: terms $g(a,b)$ and $g(d,e)$ are connected in $\gG_2$
if either ${a = b}$ and ${d \neq e}$, or ${a \neq b}$ and ${d = e}$, and
additionally the two constants in the inequality are connected in $\gG_1$.
Finally, rules \eqref{rule:kGNN:3} and \eqref{rule:kGNN:4} identify the type of
the subgraph of $\gG_1$ that $a$ and $b$ participate in. Specifically, a fact
of the form $U_{i,j,0}(g(a,b))$ says that $a$ and $b$ are labelled in $\gG_1$
by $A_i$ and $A_j$ respectively, but they are not connected in $\gG_1$. A fact
of the form $U_{i,j,1}(g(a,b))$ is analogous, but with the difference that $a$
and $b$ are connected in $\gG_1$.

\section{GNNs with Max-Sum Aggregation}\label{sec:maxsum}

In this section, we introduce monotonic max-sum GNNs and prove that each such
GNN corresponds to a Datalog program (possibly with inequalities in the rule
bodies) that can be computed from the GNN's definition. Monotonic max-sum GNNs
can use the following aggregation function in all layers, which generalises
both max and sum.

\begin{definition}
    For ${k \in \nnnat \cup \{ \infty \}}$, a finite real multiset ${S \in
    \mathcal{F}(\real)}$, and ${\ell = \min{(k,|S|)}}$, let
    \begin{displaymath}
        \maxsum{k}(S) = \begin{cases}
            0                           & \text{if } \ell = 0, \\[1ex]
            \sum\limits_{i=1}^\ell s_i  & \begin{array}{@{}l@{}}
                                            \text{where } s_1, \dots, s_\ell \text{ are the } \\
                                            \ell \text{ largest numbers of } S. \\
                                          \end{array}
        \end{cases}
    \end{displaymath}
\end{definition}

Each occurrence of a number is counted separately; for example,
${\maxsum{3}(\llbrace 0, 1, 1, 2, 2, 5 \rrbrace) = 9}$ because the three
largest numbers in $S$ are $5$ and the two occurrences of $2$. Also,
$\maxsum{1}$ is equivalent to $\max$, and $\maxsum{\infty}$ is equivalent to
$\suma$; hence, $\maxsum{k}$ generalises both the $\max$ and $\suma$
aggregation functions. While the ability to sum just the $k$ maximal elements
may not be relevant in practice, it will allow us to formalise a key technical
result. We next introduce monotonic max-sum GNNs.

\begin{definition}\label{def:max-sum-GNN}
    A \emph{monotonic max-sum} $(\col,\ddim)$-GNN is a GNN of form
    \eqref{eq:GNN} satisfying the following conditions:
    \begin{itemize}
        \item for each ${\ell \in \{ 1, \dots, L \}}$ and each ${c \in \col}$,
        all elements of matrices $\matA{\ell}$ and $\matB{\ell}{c}$ are
        nonnegative;

        \item for each ${\ell \in \{ 1, \dots, L \}}$, the aggregation function
        $\agg{\ell}$ is $\maxsum{\aggK{\ell}}$ for some $\aggK{\ell} \in \nnnat
        \cup \{ \infty \}$;

        \item the activation function $\act$ is monotonically increasing and
        unbounded, and the range of $\act$ is $\nnreal$; and

        \item the classification function $\cls$ is a step function---that is,
        there exists a \emph{threshold} ${t \in \real}$ such that ${\cls(t') =
        0}$ for each ${t' < t}$, and ${\cls(t') = 1}$ for each ${t' \geq t}$.
    \end{itemize}
\end{definition}

Monotonic max-sum GNNs are closely related to, but incomparable with MGNNs by
\citea{tccgkm22explainable-gnn-models}: MGNNs do not require the activation
function to be unbounded, but they support only the $\max$ aggregation function
in all layers. While ReLU satisfies Definition~\ref{def:max-sum-GNN}, neither
ELU nor the sigmoid function is compatible.

In Section~\ref{sec:maxsum:limiting}, we show that, in each monotonic max-sum
GNN $\GNN$, one can replace each function $\maxsum{\aggK{\ell}}$ where
${\aggK{\ell} = \infty}$ with $\maxsum{\capacity{\ell}}$ for some
${\capacity{\ell} \in \nnnat}$ without changing the canonical transformation
induced by $\GNN$---that is, to apply a GNN to a dataset, we need to consider
only a bounded number of vertices for aggregation. Number $\capacity{\ell}$
depends solely on $\GNN$ (i.e., it is independent of any dataset to which
$\GNN$ is applied) and is called the \emph{capacity} of layer $\ell$. In
Section~\ref{sec:maxsum:equivalence}, we use this result to show that $T_\GNN$
is equivalent to the immediate consequence operator of a Datalog program
$\Prog_\GNN$ that depends only on $\GNN$. Finally, in
Section~\ref{sec:maxsum:enumerating}, we show that the numbers
$\capacity{\ell}$ can be computed from $\GNN$, and hence program $\Prog_\GNN$
is computable. Our objective is to show that extracting $\Prog_\GNN$ from
$\GNN$ is possible in principle, but further work is needed to devise a
practical procedure.

\subsection{Limiting Neighbour Aggregation}\label{sec:maxsum:limiting}

Throughout the rest of Section~\ref{sec:maxsum}, we fix a monotonic max-sum
$(\col,\ddim)$-GNN $\GNN$ of form \eqref{eq:GNN} and dimensions ${\ddim[0],
\dots, \ddim[L]}$ as specified in Section~\ref{sec:preliminaries}, and we fix
${\aggK{1}, \dots, \aggK{L}}$ as the numbers defining the aggregation functions
of $\GNN$. We next show that each ${\aggK{\ell} = \infty}$ can be replaced with
a natural number $\capacity{\ell}$. We first introduce several auxiliary
definitions.

\begin{definition}\label{def:possible-values}
    A \emph{$(\col,\ell)$-multiset family}, where ${0 \leq \ell \leq L}$, is a
    mapping $\Ys$ that assigns to each colour ${c \in \col}$ a finite multiset
    $\Ys[c]$ of vectors of dimension $\ddim[\ell]$.

    For each ${1 \leq \ell \leq L}$, each ${1 \leq i \leq \ddim[\ell]}$, each
    vector $\x$ of dimension $\ddim[\ell-1]$, and each $(\col,\ell-1)$-multiset
    family $\Ys$, let
    \begin{displaymath}
        \Val{\ell}{i}{\x}{\Ys} = \elt{\matA{\ell} \x + \sum_{c \in \col} \matB{\ell}{c} \; \maxsum{\aggK{\ell}}(\Ys[c]) + \bias{\ell}}{i}.
    \end{displaymath}

    Sets $\X{\ell}{i}$ with ${0 \leq \ell \leq L}$ and ${1 \leq i \leq
    \ddim[\ell]}$ are defined by induction on $\ell$ as follows.
    \begin{itemize}
        \item For each ${1 \leq i \leq \ddim[0]}$, let ${\X{0}{i} = \{ 0, 1
        \}}$.

        \item For each ${\ell \geq 1}$ and each ${1 \leq i \leq \ddim[\ell]}$,
        set $\X{\ell}{i}$ is the least set that contains
        ${\act(\Val{\ell}{i}{\x}{\Ys})}$ for each vector $\x$ of dimension
        $\ddim[\ell-1]$ such that ${\elt{\x}{j} \in \X{\ell-1}{j}}$ for each
        $j$, and each $(\col,\ell-1)$-multiset family $\Ys$ such that
        ${\elt{\y}{j} \in \X{\ell-1}{j}}$ for all ${c \in \col}$, ${\y \in
        \Ys[c]}$, and $j$.
    \end{itemize}
\end{definition}

Intuitively, sets $\X{\ell}{i}$ contain all real numbers that can occur in the
$i$-th position of a vector labelling a vertex at layer $\ell$ when $\GNN$ is
applied to a canonical encoding of some $(\col,\ell)$-dataset. Indeed, by the
base case of the definition, $\X{0}{i}$ contains all values that can be
produced by the canonical encoding, and the inductive step considers all
possible ways in which a vector in layer $\ell$ can be computed from vectors in
layer $\ell-1$ using propagation equation \eqref{eq:GNN-propagation}. In the
latter case, a $(\col,\ell)$-multiset family $\Ys$ represents a collection of
possible neighbour vectors, and $\Val{\ell}{i}{\x}{\Ys}$ is the argument of the
activation function used to compute some $\elt{\vlab{v}{\ell}}{i}$.

Note that sets $\X{\ell}{i}$ are nonempty, and they can be infinite. However,
Theorem~\ref{thm:min-exists} shows that $\X{\ell}{i}$ can be enumerated as a
countable, monotonically increasing sequence of numbers. This is important
because it shows that the notion of a least nonzero element of $\X{\ell}{i}$ is
correctly defined. In the following, for each ${\alpha \in \real}$, let
${\Xsub{\ell}{i}{\alpha} = \{ \alpha' \in \X{\ell}{i} \mid \alpha' > \alpha
\}}$.

\begin{restatable}{theorem}{minexists}\label{thm:min-exists}
    Each set $\X{\ell}{i}$ satisfies ${\X{\ell}{i} \subseteq \nnreal}$, and,
    for each ${\alpha \in \real}$, set ${\X{\ell}{i} \setminus
    \Xsub{\ell}{i}{\alpha}}$ is finite.
\end{restatable}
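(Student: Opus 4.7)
The plan is to proceed by induction on $\ell$. The base case ${\ell = 0}$ holds since ${\X{0}{i} = \{0, 1\}}$ is a finite subset of $\nnreal$. For the inductive step, assume both claims hold for all $j$ at layer ${\ell - 1}$. The inclusion ${\X{\ell}{i} \subseteq \nnreal}$ is immediate: every element of $\X{\ell}{i}$ has the form ${\act(\Val{\ell}{i}{\x}{\Ys})}$, and Definition~\ref{def:max-sum-GNN} specifies that the range of $\act$ is $\nnreal$.

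For the finiteness claim, fix ${\alpha \in \real}$. Since $\act$ is monotonically increasing and unbounded, I would pick ${M \in \real}$ with ${\act(M) > \alpha}$; then ${\act(v) \leq \alpha}$ forces ${v < M}$, so it suffices to show that only finitely many distinct values of ${\Val{\ell}{i}{\x}{\Ys}}$ lie below $M$. Decompose this quantity as ${\elt{\bias{\ell}}{i} + \elt{\matA{\ell}\x}{i} + \sum_{c \in \col} \elt{\matB{\ell}{c}\, \maxsum{\aggK{\ell}}(\Ys[c])}{i}}$. All matrix entries, all coordinates of $\x$, and all coordinates of vectors in $\Ys[c]$ are nonnegative (the latter two by induction), so each of the three kinds of summand is individually nonnegative. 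Hence, once the total is bounded above by $M$, each summand is bounded above by ${M' = M - \elt{\bias{\ell}}{i}}$.

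Bounding the $\matA{\ell}\x$ term is routine: for each $j$ with ${\elt{\matA{\ell}}{i,j} > 0}$, the inductive finiteness hypothesis constrains $\elt{\x}{j}$ to a finite set of values below ${M'/\elt{\matA{\ell}}{i,j}}$, while coordinates with weight $0$ do not influence the sum. The delicate step---and the one I expect to be the main obstacle---is to bound the number of distinct values of $\elt{\maxsum{\aggK{\ell}}(\Ys[c])}{j}$ below a bound $K$, since $\aggK{\ell}$ can be $\infty$ and so $\Ys[c]$ may have unbounded cardinality. Here I would use that, by induction, $\X{\ell-1}{j}$ has only finitely many values in $(0,K]$; call them ${u_1 < \dots < u_m}$ when this set is nonempty. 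Every nonzero contributor to a sum ${\leq K}$ must lie in this finite set (since all entries are nonnegative, any single contributor exceeding $K$ already makes the sum exceed $K$), and at most ${\lfloor K/u_1 \rfloor}$ such contributors can appear, so only finitely many distinct aggregated values arise. Combining the finitely many choices for each summand of $\Val{\ell}{i}{\x}{\Ys}$ below $M$ and applying $\act$ yields only finitely many elements of $\X{\ell}{i}$ at or below $\alpha$, completing the induction.
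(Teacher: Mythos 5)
Your proof is correct, but it takes a more direct route than the paper. The paper does not prove Theorem~\ref{thm:min-exists} by a self-contained induction; instead it first constructs, for each $\ell$ and $i$, an explicit strictly increasing sequence $\Se{\ell}{i}$ that enumerates $\X{\ell}{i}$ (via a successor relation on triples $\triple{\x}{\Ys}{z}$ and a frontier-set construction), proves in Lemma~\ref{lem:S} that this sequence is nonnegative, strictly increasing, finite or divergent, and exhausts $\X{\ell}{i}$, and then derives the theorem as a corollary. Your argument dispenses with the enumeration machinery entirely and proves exactly the statement by induction on $\ell$: you bound the pre-activation value $\Val{\ell}{i}{\x}{\Ys}$ by an $M$ with $\act(M)>\alpha$ (using unboundedness and monotonicity of $\act$ in the same way the paper does), split it into nonnegative summands, and count. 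The quantitative core is the same as in the paper's proof of condition (S3) of Lemma~\ref{lem:S}: every positive contributor to a bounded $\maxsum{\aggK{\ell}}$-aggregate lies in the finite set of positive elements of $\X{\ell-1}{j}$ below the bound, and there can be at most $\lfloor K/u_1\rfloor$ of them where $u_1$ is the least such element; your handling of the degenerate case where no positive element exists is also consistent with the paper's early-return logic. What your approach buys is a shorter, non-contradiction proof of this particular theorem; what it does not provide is the ordered enumeration of $\X{\ell}{i}$, which the paper needs anyway to justify Algorithm~\ref{alg:next} and Theorem~\ref{thm:alg-capacity}, so in the context of the full development the heavier Lemma~\ref{lem:S} is not wasted effort.
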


Theorem~\ref{thm:min-exists} ensures that, for each ${\alpha \in \real}$, set
$\Xsub{\ell}{i}{\alpha}$ is either empty or it contains a smallest number
strictly larger than $\alpha$. The proof uses the fact that the activation
function $\sigma$ is unbounded. We are now ready to define the capacity of
$\GNN$.

\begin{definition}
    The \emph{capacity} of each layer $\ell$ of $\GNN$ is defined in
    Algorithm~\ref{alg:capacity}. Moreover, the \emph{capacity} of $\GNN$ is
    defined as ${\capacity{\GNN} = \max\{ \capacity{1}, \dots, \capacity{L}
    \}}$.
\end{definition}

Sets $\X{\ell}{i}$ can be infinite, so Algorithm~\ref{alg:capacity} can perhaps
be better understood as inductively defining sequences of numbers
$\afteract{\ell}$, $\beforeact{\ell}$, $\capacity{\ell}$ and so on. However, in
Section~\ref{sec:maxsum:enumerating} we show that the smallest positive
elements of $\X{\ell}{i}$ can in fact be computed, which justifies our usage of
the term `algorithm'.

\begin{algorithm}[tb!]
\caption{\textsc{capacity}$(\GNN)$}\label{alg:capacity}
\begin{algorithmic}[1]
    \State let $\afteract{L}$ be the threshold of $\cls$
    \For{$\ell$ from $L$ down to $1$}
        \State $\minW{\ell} \defeq$ the least non-zero element of $\matA{\ell}$ and all $\matB{\ell}{c}$
        \State $\minX{\ell} \defeq$ the least non-zero number in $\bigcup_i \X{\ell-1}{i}$
        \If{either $\minW{\ell}$ or $\minX{\ell}$ does not exist}
            \State $\capacity{\ell} \defeq \capacity{\ell-1} \defeq \capacity{1} \defeq 0$
            \State \Return                                                                                                                      \label{alg:capacity:st}
        \EndIf
        \State $\beforeact{\ell} \defeq$ the least natural number such that $\act(\beforeact{\ell}) \geq \afteract{\ell}$
        \State $\minB{\ell} \defeq$ the least element of $\bias{\ell}$
        \State $\capacity{\ell} \defeq \min(\aggK{\ell}, \lceil \frac{\beforeact{\ell} - \minB{\ell}}{\minW{\ell} \cdot \minX{\ell}} \rceil)$   \label{alg:capacity:C-ell}
        \State $\afteract{\ell-1} \defeq \frac{\beforeact{\ell} - \minB{\ell}}{\minW{\ell}}$
    \EndFor
\end{algorithmic}
\end{algorithm}

Theorem~\ref{thm:bounded-equivalence} shows that, in each layer of $\ell$, every
$\aggK{\ell}$ that is larger than $\capacity{\ell}$ can be replaced by
$\capacity{\ell}$ without affecting the result of applying $\GNN$ to any
dataset.

\begin{restatable}{theorem}{boundedequivalence}\label{thm:bounded-equivalence}
    Let $\GNN'$ be the $(\col,\ddim)$-GNN obtained from $\GNN$ by replacing
    $\aggK{\ell}$ with $\capacity{\ell}$ for each ${1 \leq \ell \leq L}$. Then,
    ${T_\GNN(D) = T_{\GNN'}(D)}$ for each $(\col,\ddim)$-dataset $D$.
\end{restatable}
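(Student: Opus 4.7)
The plan is to prove $T_\GNN(D) = T_{\GNN'}(D)$ by induction on the layer $\ell$, maintaining the following three-part invariant for every vertex $v$ and every coordinate $i$: (A) the $i$-th $\GNN'$-feature at $v, \ell$ is at most the $i$-th $\GNN$-feature at $v, \ell$; (B) the $\GNN$-feature is at least $\afteract{\ell}$ if and only if the $\GNN'$-feature is; and (C) whenever the $\GNN$-feature is strictly below $\afteract{\ell}$, the two features coincide. Clause (A) is a routine induction using nonnegativity of the matrices $\matA{\ell}$ and $\matB{\ell}{c}$, monotonicity of $\act$, and the inequality $\maxsum{\capacity{\ell}}(S) \leq \maxsum{\aggK{\ell}}(S)$ for nonnegative multisets $S$ (which applies because $\X{\ell-1}{j} \subseteq \nnreal$ by Theorem~\ref{thm:min-exists} and $\capacity{\ell} \leq \aggK{\ell}$); moreover, (A) already yields $T_{\GNN'}(D) \subseteq T_\GNN(D)$. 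Applying (B) at $\ell = L$ and using that $\cls$ is a step function with threshold $\afteract{L}$ produces the converse inclusion and closes the theorem.

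For clause (C) in the inductive step, let $z$ and $z'$ denote the pre-activations at $v, \ell, i$ in $\GNN$ and $\GNN'$ respectively. If the $\GNN$-feature is below $\afteract{\ell}$ then $\act(z) < \afteract{\ell}$, and by monotonicity of $\act$ together with $\act(\beforeact{\ell}) \geq \afteract{\ell}$ this forces $z < \beforeact{\ell}$. Every summand in the expansion of $z$ is nonnegative, so each is individually below $\beforeact{\ell} - \minB{\ell}$. Dividing by $\minW{\ell}$, every layer-$(\ell-1)$ value entering position $i$ through a nonzero weight (either via $\matA{\ell}$ or via some $\matB{\ell}{c}$) is strictly below $\afteract{\ell-1}$, and each aggregation multiset attached to such a weight contains fewer than $\capacity{\ell}$ nonzero entries, because each nonzero entry is at least $\minX{\ell}$ and $\capacity{\ell} \geq \lceil \afteract{\ell-1}/\minX{\ell} \rceil$ by line~\ref{alg:capacity:C-ell} of Algorithm~\ref{alg:capacity}. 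The inductive (C) at layer $\ell-1$ then gives exact equality of these inputs in $\GNN'$, and $\maxsum{\aggK{\ell}}$ and $\maxsum{\capacity{\ell}}$ coincide on any multiset with fewer than $\capacity{\ell}$ nonzero entries, so the full aggregations agree and $z = z'$.

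The nontrivial $\Rightarrow$ direction of (B) splits into three sub-cases. (i) Some layer-$(\ell-1)$ value entering position $i$ through a nonzero weight is already $\geq \afteract{\ell-1}$ in $\GNN$; by inductive (B) the same bound holds in $\GNN'$, and the identity $\minW{\ell}\cdot \afteract{\ell-1} = \beforeact{\ell} - \minB{\ell}$ combined with the bias lower bound gives $z' \geq \beforeact{\ell}$, hence $\act(z') \geq \afteract{\ell}$. (ii) All such inputs lie below $\afteract{\ell-1}$ but, for some colour $c$ and coordinate $j$ attached to position $i$ by a nonzero weight in $\matB{\ell}{c}$, at least $\capacity{\ell}$ $c$-neighbours of $v$ have nonzero $j$-th coordinate in $\GNN$; inductive (C) equates the multiset between the two networks, so $\maxsum{\capacity{\ell}}$ in $\GNN'$ still picks up $\capacity{\ell}$ values each $\geq \minX{\ell}$, and the defining inequality of $\capacity{\ell}$ again yields $z' \geq \beforeact{\ell}$. (iii) If neither (i) nor (ii) occurs, the calculation used for (C) shows that inputs and aggregations agree exactly between the two networks, so $z = z'$ and the $\GNN'$-feature equals the $\GNN$-feature, which is $\geq \afteract{\ell}$ by assumption.

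The main obstacle, and the reason the three clauses must be proved jointly, is sub-case (iii): to conclude $z = z'$ when no single big and no many-small witness exist, one needs \emph{exact} numerical agreement of sub-threshold features at layer $\ell-1$, and this is precisely what (C) provides. A minor technicality is that $\act$ is only assumed monotonically non-decreasing, not strictly increasing: the argument uses only the two one-sided implications $z \geq \beforeact{\ell} \Rightarrow \act(z) \geq \afteract{\ell}$ and $\act(z) < \afteract{\ell} \Rightarrow z < \beforeact{\ell}$, both immediate from the choice of $\beforeact{\ell}$. Finally, the boundary cases of Algorithm~\ref{alg:capacity} (where $\minW{\ell}$ or $\minX{\ell}$ fails to exist) are handled by a direct calculation: in both cases the layer-$\ell$ output of $\GNN$ becomes independent of its aggregation (either all matrix weights vanish, or all layer-$(\ell-1)$ values are zero), so setting all remaining capacities to $0$ cannot change the decoded dataset.
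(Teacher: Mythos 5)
Your proposal follows essentially the same route as the paper: a first induction establishing the pointwise inequality between the two networks' features, followed by a second induction on a threshold dichotomy (your (B)$+$(C) is, given (A), logically equivalent to the paper's invariant ``either the two features are equal, or both are at least $\afteract{\ell}$''), with the same case split on a single large input versus many small nonzero neighbours versus neither, and the same treatment of the early-return layers of Algorithm~\ref{alg:capacity}. One recurring imprecision needs fixing before the argument is complete: you repeatedly treat $\capacity{\ell}$ as if it equalled $\lceil(\beforeact{\ell}-\minB{\ell})/(\minW{\ell}\cdot\minX{\ell})\rceil$, whereas line~\ref{alg:capacity:C-ell} defines it as the \emph{minimum} of that quantity and $\aggK{\ell}$. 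Consequently the claim ``$\capacity{\ell}\geq\lceil\afteract{\ell-1}/\minX{\ell}\rceil$'' in your clause (C) has the inequality reversed; the ``defining inequality'' $\capacity{\ell}\cdot\minW{\ell}\cdot\minX{\ell}\geq\beforeact{\ell}-\minB{\ell}$ invoked in sub-case (ii) fails when the minimum is attained by $\aggK{\ell}$; and sub-case (i) silently assumes $\capacity{\ell}\geq 1$ when the large value arrives through an aggregation rather than through $\matA{\ell}$. In all of these sub-cases the conclusion still holds, but for a different reason: $\capacity{\ell}=\aggK{\ell}$ means the two networks use the identical aggregation function in layer $\ell$, so the offending term contributes equally to $z$ and $z'$ and the situation collapses into your sub-case (iii). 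The paper isolates exactly this as an explicit case ($\capacity{\ell}=0$ in its Case~2, and the condition $\capacity{\ell}=\aggK{\ell}'<\aggK{\ell}$ in its Case~4); adding the analogous case distinction closes the gap without any new idea.
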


Theorem~\ref{thm:bounded-equivalence} can be intuitively understood as follows.
Let $\vlab{v}{\lab[\ell]}$ and $\vlab{v}{\lab[\ell]'}$ be vectors labelling a
vertex $v$ in layer $\ell$ when $T_\GNN$ and $T_{\GNN'}$ are applied to some
$D$. We prove the theorem by showing that either
${\elt{\vlab{v}{\lab[\ell]}}{i} = \elt{\vlab{v}{\lab[\ell]'}}{i}}$ or
${\elt{\vlab{v}{\lab[\ell]}}{i} > \elt{\vlab{v}{\lab[\ell]'}}{i} \geq
\afteract{\ell}}$ for each layer ${\ell \geq \ell_{\sta}}$, where $\ell_{\sta}$
is either the layer where Algorithm \ref{alg:capacity} performs an early return
(via line \ref{alg:capacity:st}) or $0$ if this does not happen. Indeed, assume
that ${\cls(\elt{\vlab{v}{\lab[L]}}{i}) = 1}$ for some $v$. If $\matA{L}$ and
all $\matB{L}{c}$ contain only zeros, or if all $\X{L}{i}$ contain only zeros,
then $L = \ell_{\sta}$; no neighbours of $v$ are needed so we can set all
$\capacity{\ell}$ to $0$ and the equality above holds. Otherwise, $\cls$ is a
threshold function, so ${\elt{\vlab{v}{\lab[L]}}{i} \geq \afteract{L}}$ holds
for $\afteract{L}$ the threshold of $\cls$, and so the argument to the
activation function when computing $\elt{\vlab{v}{\lab[L]}}{i}$ is at least
$\beforeact{L}$. Moreover, $\elt{\vlab{v}{\lab[L]}}{i}$ is produced from
$\elt{\vlab{v}{\lab[L-1]}}{i}$ and the values of $\elt{\vlab{u}{\lab[L-1]}}{j}$
where $u$ ranges over the neighbours of $v$. If we assume that
${\elt{\vlab{v}{\lab[L-1]}}{i} = 0}$ and that $\minX{\ell}$ is the least
nonzero value that each $u$ can contribute to $\elt{\vlab{v}{\lab[L]}}{i}$, it
suffices to have at least ${\lceil \frac{\beforeact{\ell} -
\minB{\ell}}{\minW{\ell} \cdot \minX{\ell}} \rceil}$ nonzero neighbours to
reach $\beforeact{L}$. Thus, we can replace $\aggK{\ell}$ with this number
whenever this number is smaller than $\aggK{\ell}$; in contrast, if
$\aggK{\ell}$ is smaller, we need to keep $\aggK{\ell}$ so that $\GNN'$ does
not derive any new consequences. Finally, $\afteract{L-1}$ is the value of
$\elt{\vlab{v}{\lab[L-1]}}{i}$ in layer $L-1$ to which we can apply analogous
reasoning.

\subsection{Equivalence with Datalog Programs}\label{sec:maxsum:equivalence}

We next show that there exists a Datalog program $\Prog_\GNN$ that is
equivalent to $\GNN$ in the sense described in Definition
\ref{def:capture-equivalence}. Towards this goal, in
Definition~\ref{def:tree-like} we capture the syntactic structure of the rules
in $\Prog_\GNN$ as rules of form \eqref{eq:ruleInequality} where $\varphi$ is a
\emph{tree-like} formula for $x$. To understand the intuition, assume that we
construct from $\varphi$ a graph whose vertices are the variables in $\varphi$,
and where a directed edge from $x$ to $y$ is introduced for each $\edg{c}(x,y)$
in $\varphi$; then, such graph must be a directed tree. Moreover, if variable
$x$ has children $y_1$ and $y_2$ in this graph, then $\varphi$ is allowed to
contain inequalities of the form ${y_1 \noteq y_2}$, which provide $\varphi$
with a limited capability for counting; for example, formula ${\edg{c}(x,y_1)
\wedge \edg{c}(x,y_2) \wedge y_1 \noteq y_2}$ is true precisely for those
values of $x$ that are connected via the $\edg{c}$ predicate to at least two
distinct constants. We also introduce intuitive notions of a \emph{fan-out}
(i.e., the number of children) and \emph{depth} of a variable. Tree-like
formulas contain all concepts of the $\mathcal{ALCQ}$ description logic
\cite{dl-handbook-2} constructed from $\top$, atomic concepts, and concepts of
the form ${\geq n R.C}$ and ${C_1 \sqcap C_2}$; however, our definition also
allows for formulas such as ${\edg{c}(x,y_1) \wedge \edg{c}(x,y_2) \wedge
U(y_1) \wedge y_1 \noteq y_2}$, which do not correspond to the translation of
$\mathcal{ALCQ}$ concepts.

\begin{definition}\label{def:tree-like}
    A \emph{tree-like formula for a variable} is defined inductively as follows.
    \begin{itemize}
        \item For each variable $x$, formula $\top$ is tree-like for $x$.

        \item For each variable $x$ and each unary predicate $U$, atom $U(x)$
        is tree-like for $x$.

        \item For each variable $x$ and all tree-like formulas $\varphi_1$ and
        $\varphi_2$ for $x$ that share no variables other than $x$, formula
        ${\varphi_1 \wedge \varphi_2}$ is tree-like for $x$.

        \item For each variable $x$, each binary predicate $\edg{c}$, and all
        tree-like formulas ${\varphi_1, \dots, \varphi_n}$ for distinct
        variables ${y_1, \dots, y_n}$ where no $\varphi_i$ contains $x$ and no
        $\varphi_i$ and $\varphi_j$ with ${i \neq j}$ share a variable, formula
        \eqref{eq:existential-counting} is tree-like for $x$.
        \begin{align}
            \bigwedge_{i=1}^n \Big( \edg{c}(x,y_i) \wedge \varphi_i \Big) \wedge \bigwedge_{1 \leq i < j \leq n} y_i \noteq y_j \label{eq:existential-counting}
        \end{align}
    \end{itemize}

    Let $\varphi$ be a tree-like formula and let $x$ be a variable in
    $\varphi$. The \emph{fan-out} of $x$ in $\varphi$ is the number of distinct
    variables $y_i$ for which $\edg{c}(x,y_i)$ is a conjunct of $\varphi$. The
    \emph{depth} of $x$ is the maximal $n$ for which there exist variables
    ${x_0, \dots, x_n}$ and predicates ${\edg{c_1}, \dots, \edg{c_n}}$ such
    that ${x_n = x}$ and $\edg{c_i}(x_{i-1},x_i)$ is a conjunct of $\varphi$
    for each ${1 \leq i \leq n}$. The \emph{depth} of $\varphi$ is the maximum
    depth of a variable in $\varphi$.

    For $d$ and $f$ natural numbers, a tree-like formula $\varphi$ is
    \emph{$(d,f)$-tree-like} if, for each variable $x$ in $\varphi$, the depth
    $i$ of $x$ is at most $d$ and the fan-out of $x$ is at most $f(d-i)$.
    Moreover, a Datalog rule is \emph{$(d,f)$-tree-like} if it is of form
    \eqref{eq:ruleInequality}, where $\varphi$ is a $(d,f)$-tree-like formula
    for $x$.
    \begin{align}
        \varphi \rightarrow U(x) \label{eq:ruleInequality}
    \end{align}
\end{definition}

Note that $\varphi$ is allowed to be $\top$ in a rule of
form~\eqref{eq:ruleInequality}; for example, ${\top \rightarrow U(x)}$ is a
valid $(0,0)$-tree-like rule. As explained in Section~\ref{sec:preliminaries},
when applied to a dataset $D$, such a rule derives $U(t)$ for each term $t$
occurring in $D$.

Now let ${\ddim[\GNN] = \max(\ddim[0], \dots, \ddim[L])}$. To construct
$\Prog_\GNN$, we proceed as follows: we compute ${f = |\col| \cdot \ddim[\GNN]
\cdot \capacity{\GNN}}$, we enumerate all $(L,f)$-tree-like rules (up to
variable renaming), and we add to $\Prog_\GNN$ each such rule that is captured
by $\GNN$. Lemma~\ref{lem:captured-rule-test} shows that this latter test can,
at least in principle, be operationalised. In particular, to test whether a
rule ${\varphi \rightarrow U(x)}$ with $n$ variables is captured by $\GNN$, we
consider each possible dataset $D$ obtained from the atoms of $\varphi$ by
substituting the variables with up to $n$ distinct constants, and we check
whether applying $\GNN$ to $D$ derives the analogously instantiated rule head;
if this is the case for all such $D$, then the rule is captured by $\GNN$.
\citea{tccgkm22explainable-gnn-models} used a similar test for MGNNs, but their
approach was simpler since it did not need to support inequalities.
Theorem~\ref{thm:equivalence} then shows that program $\Prog_\GNN$ is indeed
equivalent to $\GNN$.

\begin{restatable}{lemma}{capturedruletest}\label{lem:captured-rule-test}
    Let $r$ be a constant-free Datalog rule with head $H$, let $V$ be the set
    of variables in $r$, and let $A$ be the set of body atoms of $r$. Then,
    $\GNN$ captures $r$ if and only if ${H\nu \in T_\GNN(A\nu)}$ for each
    substitution ${\nu : V \to S}$ such that ${H\nu \in T_r(A\nu)}$, where $S$
    is a set of $|V|$ distinct constants.
\end{restatable}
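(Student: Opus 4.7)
The plan is to reduce the capture condition over arbitrary datasets to the stated finite test using two structural properties of the canonical transformation $T_\GNN$: monotonicity under dataset inclusion, and invariance under bijective renaming of constants.

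The $(\Rightarrow)$ direction is immediate from Definition~\ref{def:capture-equivalence}: if $\GNN$ captures $r$, then $T_r(A\nu) \subseteq T_\GNN(A\nu)$ for every substitution $\nu$, which already gives the displayed implication.

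For the $(\Leftarrow)$ direction I would first establish the two structural properties of $T_\GNN$. \emph{Monotonicity:} for all datasets $D_1 \subseteq D_2$, $T_\GNN(D_1) \subseteq T_\GNN(D_2)$. The proof is by induction on the layer $\ell$, showing that for every term $t$ occurring in $D_1$ the feature vector of $v_t$ computed by $\GNN$ on $\canenc(D_2)$ is componentwise at least as large as the one computed on $\canenc(D_1)$. The inductive step relies on all four conditions of Definition~\ref{def:max-sum-GNN}: nonnegativity of $\matA{\ell}$ and $\matB{\ell}{c}$; monotonicity of $\maxsum{\aggK{\ell}}$ both under enlarging the neighbour multiset and under replacing its entries by componentwise larger vectors; the monotonicity of $\act$; and the step-function shape of $\cls$, which guarantees that any unary fact classified from $D_1$ is also classified from $D_2$. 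Binary facts are simply copied through from the input by the canonical encoding and decoding, so they are trivially preserved. \emph{Renaming invariance:} for every bijection $\pi$ on constants, $T_\GNN(\pi(D)) = \pi(T_\GNN(D))$, because $\canenc(\pi(D))$ and $\canenc(D)$ are isomorphic labelled graphs via $v_t \mapsto v_{\pi(t)}$, and the output of $\GNN$ depends only on this graph structure.

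Assuming the test holds, take any $(\col,\ddim)$-dataset $D$ and any substitution $\mu$ of the variables of $r$ to terms of $D$ such that $D \models B_i\mu$ for each body literal $B_i$; the goal is $H\mu \in T_\GNN(D)$. Let $D_\mu = A\mu$, which is a subset of $D$. The image $\mu(V)$ contains at most $|V|$ distinct constants, so there is a bijection $\pi$ between $\mu(V)$ and some $|\mu(V)|$-element subset $S' \subseteq S$. Define $\nu = \pi \circ \mu$, so $\nu : V \to S$. Then $A\nu = \pi(A\mu)$ and $H\nu = \pi(H\mu)$, and because $\pi$ is injective on $\mu(V)$ every body inequality satisfied by $\mu$ is also satisfied by $\nu$; hence $H\nu \in T_r(A\nu)$. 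The hypothesised test then gives $H\nu \in T_\GNN(A\nu)$. Applying renaming invariance with $\pi^{-1}$ yields $H\mu \in T_\GNN(A\mu)$, and monotonicity applied to $A\mu \subseteq D$ finally delivers $H\mu \in T_\GNN(D)$.

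The main obstacle is a clean proof of monotonicity, because the induction over layers has to track simultaneously how the aggregator $\maxsum{\aggK{\ell}}$ reacts to the neighbour multiset growing in cardinality and to its entries growing componentwise, and how these effects then compose through each layer's linear combination and activation. The renaming step and the reduction to a dataset over $S$ are then essentially bookkeeping once monotonicity is in hand.
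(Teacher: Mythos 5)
Your proposal is correct and follows essentially the same route as the paper: the paper likewise factors the argument into an isomorphism-invariance property and a layer-by-layer monotonicity property of $T_\GNN$ (its Lemma on invariance and monotonicity), and then performs the same reduction via an injective renaming $\nu = h\circ\mu$ into the constant set $S$, followed by invariance under $h^{-1}$ and monotonicity over $A\mu \subseteq D$. Your explicit remark that injectivity of the renaming preserves body inequalities is a detail the paper leaves implicit, but the substance of the two proofs is the same.
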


\begin{restatable}{theorem}{equivalence}\label{thm:equivalence}
    Let $\Prog_\GNN$ be the Datalog program containing, up to variable
    renaming, each ${(L,|\col| \cdot \ddim[\GNN] \cdot
    \capacity{\GNN})}$-tree-like rule captured by $\GNN$, where ${\ddim[\GNN] =
    \max(\ddim[0], \dots, \ddim[L])}$. Then, $\GNN$ and $\Prog_\GNN$ are
    equivalent.
\end{restatable}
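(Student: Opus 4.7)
The plan is to establish both inclusions of the equivalence ${T_\GNN(D) = T_{\Prog_\GNN}(D)}$. The containment ${T_{\Prog_\GNN}(D) \subseteq T_\GNN(D)}$ is immediate from the definitions: $\Prog_\GNN$ consists only of rules captured by $\GNN$, so ${T_r(D) \subseteq T_\GNN(D)}$ for each ${r \in \Prog_\GNN}$, and taking the union over $r$ yields the inclusion. All substantive work lies in the converse: I will show that every fact ${U_i(t) \in T_\GNN(D)}$ is derived from $D$ by some $(L, f)$-tree-like rule captured by $\GNN$, where ${f = |\col| \cdot \ddim[\GNN] \cdot \capacity{\GNN}}$, and hence belonging to $\Prog_\GNN$ up to variable renaming.

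For the hard direction I first invoke Theorem~\ref{thm:bounded-equivalence} to assume, without loss of generality, that the aggregation at layer $\ell$ is $\maxsum{\capacity{\ell}}$. I then construct a \emph{witness tree} $\mathcal{T}$ of terms rooted at $t$ as follows: at each node $s$ at depth ${d < L}$, for every colour ${c \in \col}$, every layer ${\ell \in \{1, \dots, L-d\}}$, and every feature index ${j \in \{1, \dots, \ddim[\ell-1]\}}$, include among $s$'s children the (at most) $\capacity{\ell}$ distinct outgoing $c$-neighbours of $s$ in $D$ with the largest values in the $j$-th component of their layer-$(\ell-1)$ feature vector. A node at depth $d$ thus acquires at most ${|\col| \cdot (L-d) \cdot \ddim[\GNN] \cdot \capacity{\GNN} = f \cdot (L-d)}$ distinct children, and the tree has depth at most $L$, matching the fan-out bound $f(d-i)$ at each variable of depth $i$.

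Next, translate $\mathcal{T}$ into a tree-like formula $\varphi$ for $x$: introduce a fresh variable $x_s$ per tree node (with ${x_t = x}$); add $U_j(x_s)$ for each ${U_j(s) \in D}$; add $\edg{c}(x_s, x_{s'})$ whenever $s'$ is a $c$-child of $s$ in $\mathcal{T}$; and add $x_{s'} \noteq x_{s''}$ between each pair of distinct $c$-sibling variables under a common parent. The resulting rule ${r = \varphi \rightarrow U_i(x)}$ is $(L, f)$-tree-like by construction. Moreover, the substitution ${\nu : x_s \mapsto s}$ satisfies the body in $D$, since the sibling inequalities hold by distinctness of the top-$\capacity{\ell}$ selections; hence ${U_i(t) \in T_r(D)}$, and it only remains to show that $r$ is captured by $\GNN$.

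For the capture test I invoke Lemma~\ref{lem:captured-rule-test}: fix any substitution $\nu_1$ of $\varphi$'s variables to distinct fresh constants, so that $\canenc(A\nu_1)$ is a genuine tree with each tree node realised by its own vertex. By induction on $\ell$ from $0$ up to $L$ I prove that, for every tree node $s$ at depth ${d \leq L - \ell}$, the layer-$\ell$ feature vector at $\nu_1(x_s)$ in the run of the modified GNN on $\canenc(A\nu_1)$ coincides with the layer-$\ell$ feature vector at $s$ in the run on $\canenc(D)$. The base case is secured by the unary atoms of $\varphi$ encoding the initial features exactly, while the inductive step uses that the $c$-children of $s$ include the top $\capacity{\ell}$ outgoing $c$-neighbours for every feature index, so the two $\maxsum{\capacity{\ell}}$ aggregations coincide componentwise. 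Specialising the claim to the root at ${\ell = L}$ yields ${U_i(\nu_1(x)) \in T_\GNN(A\nu_1)}$, which by Lemma~\ref{lem:captured-rule-test} shows that $r$ is captured. The main obstacle is orchestrating this inductive argument: the tree must be dense enough that \emph{every} intermediate layer, not only layer ${L-d}$, is faithfully reproduced within the subtree of each node at depth $d$, which is precisely why the fan-out bound grows linearly with the remaining depth ${L-d}$.
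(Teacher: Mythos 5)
Your construction of the witness rule is essentially the paper's: both directions of the equivalence are split the same way, the hard direction first passes through Theorem~\ref{thm:bounded-equivalence}, and your witness tree (the top-$\capacity{\ell}$ contributing $c$-neighbours per colour, layer, and feature index, with per-node unary atoms and sibling inequalities) is the same object as the paper's auxiliary graph $U$ with the sets $M_{c,\ell',j}$, with the same fan-out accounting and the same verification that $U_i(t) \in T_r(D)$.

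The gap is in the capture step. Lemma~\ref{lem:captured-rule-test} requires $H\nu \in T_\GNN(A\nu)$ for \emph{every} substitution $\nu : V \to S$ with $H\nu \in T_r(A\nu)$, where $S$ has only $|V|$ constants; this includes non-injective $\nu$ that fold the tree, provided some matching $\mu$ of the body into $A\nu$ (respecting the sibling inequalities) yields $H\mu = H\nu$. You verify the condition only for the injective instantiation $\nu_1$. Moreover, your inductive invariant --- that the feature vectors computed on $\canenc(A\nu)$ \emph{coincide} with those computed on $\canenc(D)$ at corresponding nodes --- is simply false for folded instantiations: identifying a variable with one of its non-siblings can give the merged vertex extra unary facts or extra out-neighbours and hence strictly larger feature values (for instance, folding $z$ onto $y$ in a body containing $\edg{c}(x,y) \wedge \edg{c}(y,z) \wedge U_1(z)$ gives the image of $y$ a self-loop and the label $U_1$). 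The theorem survives only because larger values can never suppress a derivation, but to exploit that you must weaken the invariant from equality to componentwise domination and prove it for an arbitrary instantiation. That is exactly what the paper does: it does not route capture through Lemma~\ref{lem:captured-rule-test} at all, but takes an arbitrary dataset $D'$ and substitution $\nu'$ with $D' \models \Gamma\nu'$ and shows by induction on $\ell$ that the layer-$\ell$ vector at the vertex for $\nu'(x_s)$ in $D'$ dominates the one at the vertex for $\nu(x_s)$ in $D$, using the sibling inequalities to guarantee that the images of the $|W|$ selected children are distinct neighbours, so that $\maxsum{\capacity{\ell}}$ over all neighbours in $D'$ is at least the corresponding sum in $D$. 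Your equality argument is a correct special case of this domination argument, but as written it does not discharge the hypothesis of the lemma you invoke, so the claim that $r$ is captured is not yet established.
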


To understand this result intuitively, assume that $\GNN$ is applied to a
dataset $D$. The fact that all rules of $\Prog_\GNN$ are captured by $\GNN$
clearly implies $T_{\Prog_\GNN}(D) \subseteq T_{\GNN}(D)$. Furthermore, by
equation \eqref{eq:GNN-propagation}, the value of $\elt{\vlab{v}{L}}{i}$ for
some $i$ is computed from the values of $\elt{\vlab{v}{L-1}}{i}$ and
$\elt{\vlab{u}{L-1}}{j}$ for $k \leq \capacity{L}$ distinct neighbours $u$ of
$v$ per colour and position; but then, if $t$ and $s$ are terms represented by
$v$ and $u$, respectively, the canonical encoding ensures ${\edg{c}(t,s) \in
D}$ for some ${c \in \col}$. Also, $\elt{\vlab{u}{L-1}}{j}$ are computed using
the neighbours of $u$ and so on. Hence, each term $w$ in $D$ that can possibly
influence $\vlab{v}{L}$ must be connected in $D$ to $t$ by at most $L$ such
facts, so all relevant neighbours of $t$ can be selected by a $(d,f)$-tree-like
formula. The inequalities can be used to check for the existence of at least
$k$ distinct neighbours of $t$ in $D$. Now let $D'$ be the subset of $D$
containing precisely the facts that contribute to the value of
$\elt{\vlab{v}{L}}{i}$. We can unfold $D'$ into another tree-like dataset $D''$
that corresponds to the body of an instantiated tree-like rule $r$. Since the
elements of all $\matA{\ell}$ and $\matB{\ell}{c}$ are nonnegative, applying
$\GNN$ to $D$ and $D''$ derives the same value for
$\cls(\elt{\vlab{v}{L}}{i})$. If this value is $1$, then applying the rule $r$
to $D$ produces the same fact as $\GNN$. Furthermore, by definition, $\GNN$
captures $r$ and so ${r \in \Prog_\GNN}$. Thus, $T_{\Prog_\GNN}(D)$ contains
all facts derived by $\GNN$ on $D$.

\subsection{Enumerating Sets $\X{\ell}{i}$}\label{sec:maxsum:enumerating}

The results we presented thus far show that program $\Prog_\GNN$ exists, but it
is not yet clear that $\Prog_\GNN$ is computable: the definition of
$\capacity{\ell}$ in Algorithm~\ref{alg:capacity} uses sets $\X{\ell}{i}$,
which can be infinite. We next show that each $\X{\ell}{i}$ can be enumerated
algorithmically using function $\Next{\ell}{i}{\alpha}$ from
Algorithm~\ref{alg:next} as follows: for $\alpha$ a special symbol $\symstart$,
function $\Next{\ell}{i}{\symstart}$ returns the smallest element of
$\X{\ell}{i}$; moreover, for ${\alpha \in \real}$, function
$\Next{\ell}{i}{\alpha}$ returns the smallest element of
$\Xsub{\ell}{i}{\alpha}$ if ${\Xsub{\ell}{i}{\alpha} \neq \emptyset}$, or
$\symend$ otherwise. For example, $\Next{\ell}{i}{0}$ returns the smallest
nonzero element of $\X{\ell}{i}$, if one exists.

\begin{algorithm}[tb!]
\caption{$\Next{\ell}{i}{\alpha}$}\label{alg:next}
\begin{algorithmic}[1]
    \If{$\ell = 0$}                                                                                         \label{alg:next:ell0:condition}
 	    \If{$\alpha = \symstart$ or $\alpha < 0$}
		    \Return $0$                                                                                     \label{alg:next:ell0:first}
        \ElsIf{$\alpha < 1$}
		    \Return $1$                                                                                     \label{alg:next:ell0:second}
        \Else\;
 		    \Return $\symend$                                                                               \label{alg:next:ell0:third}
 		\EndIf
	\EndIf
    \State let $\Ys_\emptyset$ be such that $\Ys[c]_\emptyset = \emptyset$ for each $c \in \col$            \label{alg:next:init:Y}
	\State $z \defeq \Val{\ell}{i}{\Start{\ell}}{\Ys_\emptyset}$                                            \label{alg:next:init:z}
    \If{$\alpha = \symstart$}
        \Return $\act(z)$                                                                                   \label{alg:next:first}
    \EndIf
	\State $F \defeq \{ \triple{\Start{\ell}}{\Ys_\emptyset}{z} \}$                                         \label{alg:next:init:frontier}
    \While{$F \neq \emptyset$}                                                                              \label{alg:next:loop}
        \State choose and remove $\triple{\x}{\Ys}{z}$ in $F$ with least $z$                                \label{alg:next:mintriple}
        \If{$\act(z) > \alpha$}
            \Return $\act(z)$                                                                               \label{alg:next:return}
        \EndIf
		\For{$\x' \in \Expand{\ell}{\x}$}                                                                   \label{alg:next:x:expand}
			\State $z' \defeq \Val{\ell}{i}{\x'}{\Ys}$                                                      \label{alg:next:x:evaluate}
			\If{$z' > z$}
			    add $\triple{\x'}{\Ys}{z'}$ to $F$                                                          \label{alg:next:x:add}
			\EndIf
	  	\EndFor                                                                                             \label{alg:next:x:expand:end}
		\For{$c \in \col$}                                                                                  \label{alg:next:c}
            \For{$\y \in \Ys[c]$ and $\y' \in \Expand{\ell}{\y}$}                                           \label{alg:next:c:y:expand}
                \State $\Yps \defeq \Ys$ and $\Yps[c] \defeq (\Yps[c] \setminus \{ \y \}) \cup \{ \y' \}$   \label{alg:next:c:y:update}
                \State $z' \defeq \Val{\ell}{i}{\x}{\Yps}$                                                  \label{alg:next:c:y:evaluate}
                \If{$z' > z$}
		    	    add $\triple{\x}{\Yps}{z'}$ to $F$                                                      \label{alg:next:c:y:add}
		 	    \EndIf
            \EndFor                                                                                         \label{alg:next:c:y:expand:end}
            \If{$\Start{\ell}$ contains a nonzero}                                                          \label{alg:next:c:numbers}
                \State $V \defeq \{ \Start{\ell} \}$                                                        \label{alg:next:c:numbers:all0}
            \Else
                \State $V \defeq \Expand{\ell}{\Start{\ell}}$                                               \label{alg:next:c:numbers:expand}
            \EndIf
            \For{$\y' \in V$}
                \State $\Yps \defeq \Ys$ and $\Yps[c] \defeq \Yps[c] \cup \{ \y' \}$                        \label{alg:next:c:numbers:update}
			    \State $z' \defeq \Val{\ell}{i}{\x}{\Yps}$                                                  \label{alg:next:c:numbers:evaluate}
	    		\If{$z' > z$}
	    		    add $\triple{\x}{\Yps}{z'}$ to $F$                                                      \label{alg:next:c:numbers:add}
				\EndIf
            \EndFor                                                                                         \label{alg:next:c:numbers:end}
		\EndFor
    \EndWhile
    \State \Return $\symend$                                                                                \label{alg:next:return:final}
    \vspace{0.17cm}
    \Function{$\mathsf{Start}$}{$\ell$}
        \State \Return the vector $\x$  of dimension $\ddim[\ell-1]$ where
        \Statex \qquad\quad $\elt{\x}{j} = \Next{\ell-1}{j}{\symstart}$ for $1 \leq j \leq \ddim[\ell-1]$
    \EndFunction
    \vspace{0.17cm}
    \Function{$\mathsf{Expand}$}{$\ell,\vc{v}$}
        \State $V \defeq \emptyset$
        \For{$1 \leq j \leq \ddim[\ell-1]$}
            \State $v' \defeq \Next{\ell-1}{j}{\elt{\vc{v}}{j}}$
    		\If{$v' \neq \symend$}
                $V \defeq V \cup \{ \vc{v}[j \gets v'] \}$
            \EndIf
        \EndFor
        \State \Return $V$
    \EndFunction
\end{algorithmic}
\end{algorithm}

In the presentation of Algorithm~\ref{alg:next}, we use the following notation:
for $\vc{x}$ a vector, $j$ an index, and $v$ a real number, $\vc{x}[j \gets v]$
is the vector obtained from $\vc{x}$ by replacing its $j$-th component with
$v$. The algorithm is based on the observation that, since $\matA{\ell}$ and
$\matB{\ell}{c}$ contain only nonnegative elements, and the activation function
is monotonically increasing, we can enumerate the values computed by equation
\eqref{eq:GNN-propagation} in some $\vlab{v}{\ell}$ in a monotonically
increasing fashion. To achieve this, the algorithm maintains a \emph{frontier}
$F$ of triples $\triple{\x}{\Ys}{z}$, each describing one way to compute a
value of $\elt{\vlab{v}{\ell}}{i}$: vector $\x$ reflects the values of
$\elt{\vlab{v}{\ell-1}}{i}$, the $(\col,\ell-1)$-multiset family $\Ys$
describes multisets $\Ys[c]$ reflecting the values of
$\elt{\vlab{u}{\ell-1}}{i}$, and $z$ is $\Val{\ell}{i}{\x}{\Ys}$---that is, the
argument to the activation function when computing $\elt{\vlab{v}{\ell}}{i}$.
The starting point for the exploration (line~\ref{alg:next:init:frontier}) is
provided by $\Start{\ell}$, which returns $\vlab{v}{\ell}$ for a vertex $v$
with no neighbours. To enumerate all candidate values for
$\elt{\vlab{v}{\ell}}{i}$ in an increasing order, the algorithm selects a
triple in the frontier with the smallest $z$ (line~\ref{alg:next:mintriple}),
and considers ways to modify $\x$ or $\Ys$ that increase $z$; each such
combination is added to the frontier (lines \ref{alg:next:x:add},
\ref{alg:next:c:y:add}, and \ref{alg:next:c:numbers:add}). Modifications
involve replacing some component of $\x$ with the next component (lines
\ref{alg:next:x:expand}--\ref{alg:next:x:expand:end}), choosing some ${\y \in
\Ys[c]}$ for some ${c \in \col}$ and replacing some component of $\y$ with the
next component (lines
\ref{alg:next:c:y:expand}--\ref{alg:next:c:y:expand:end}), or expanding some
$\Ys[c]$ with an additional vector (lines
\ref{alg:next:c:numbers}--\ref{alg:next:c:numbers:end}). In the latter case, if
$\Start{\ell}$ contains just zeros, then adding $\Start{\ell}$ to $\Ys[c]$ is
not going to change the computed value of $z$ so the algorithm considers
vectors obtained by expanding $\Start{\ell}$ in order to allow $z$ to increase.
This process produces values of $z$ in an increasing order and it guarantees
that ${\sigma(z) \in \X{\ell}{i}}$. If ${\alpha = \symstart}$, the algorithm
stops when the first such value is produced (line~\ref{alg:next:first}). For
${\alpha \in \real}$, Theorem~\ref{thm:min-exists} guarantees that set
${\X{\ell}{i} \setminus \Xsub{\ell}{i}{\alpha}}$ is finite; since $F$ is
extended only if the value of $z$ increases, either $F$ eventually becomes
empty or $\sigma(z)$ exceeds $\alpha$ so the algorithm terminates
(line~\ref{alg:next:return} or~\ref{alg:next:return:final}).
Theorem~\ref{thm:alg-capacity} captures the formal properties of the algorithm.

\begin{restatable}{theorem}{algcapacity}\label{thm:alg-capacity}
    Algorithm~\ref{alg:next} terminates on all inputs. Moreover, for ${0 \leq
    \ell \leq L}$ and ${1 \leq i \leq \ddim[\ell]}$,
    \begin{itemize}
        \item $\Next{\ell}{i}{\symstart}$ returns the smallest element of
        $\X{\ell}{i}$, and

        \item for each ${\alpha \in \real}$, $\Next{\ell}{i}{\alpha}$ returns
        $\symend$ if ${\Xsub{\ell}{i}{\alpha} = \emptyset}$, and otherwise it
        returns the smallest element of $\Xsub{\ell}{i}{\alpha}$.
    \end{itemize}
\end{restatable}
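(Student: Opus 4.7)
The plan is to prove termination and correctness simultaneously by induction on $\ell$, because $\Next{\ell}{i}{\cdot}$ invokes $\Next{\ell-1}{j}{\cdot}$ through the subroutines $\Start{\ell}$ and $\Expand{\ell}{\cdot}$. The base case $\ell=0$ reduces to inspecting lines~\ref{alg:next:ell0:condition}--\ref{alg:next:ell0:third} against the definition $\X{0}{i}=\{0,1\}$. In the inductive step the induction hypothesis gives at once that $\Start{\ell}$ terminates and returns the vector whose $j$-th component is the least element of $\X{\ell-1}{j}$, and that $\Expand{\ell}{\vc{v}}$ terminates and yields exactly those vectors obtained from $\vc{v}$ by raising a single component $\elt{\vc{v}}{j}$ to the next element of $\X{\ell-1}{j}$ whenever one exists.

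I would then prove three invariants of the outer while-loop. \emph{Validity:} every triple $\triple{\x}{\Ys}{z}$ ever placed in $F$ satisfies $z=\Val{\ell}{i}{\x}{\Ys}$ with every component of $\x$ and of every $\y\in\Ys[c]$ drawn from the respective $\X{\ell-1}{j}$, so $\act(z)\in\X{\ell}{i}$. \emph{Monotone processing:} nonnegativity of $\matA{\ell}$ and $\matB{\ell}{c}$ together with weak monotonicity of $\maxsum{\aggK{\ell}}$ in each entry makes every expansion weakly monotone in $z$, and the guards on lines~\ref{alg:next:x:add}, \ref{alg:next:c:y:add}, and~\ref{alg:next:c:numbers:add} admit only strict increases; hence the sequence of $z$-values chosen on line~\ref{alg:next:mintriple} is non-decreasing and, by monotonicity of $\act$, so is the sequence of returned activations. \emph{Completeness:} for every valid target $t^\star=\triple{\x^\star}{\Ys^\star}{z^\star}$ there is a finite chain of expansion steps from $\triple{\Start{\ell}}{\Ys_\emptyset}{z_0}$ along which $z$ is weakly increasing and every strict increase is eventually realised by some triple added to $F$.

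The main obstacle is Completeness, since the filter $z'>z$ rejects expansions that leave $z$ unchanged. The saving observation is that any such rejection is harmless for enumeration: if replacing a component of $\x$ or of some $\y\in\Ys[c]$ leaves $z$ fixed, the relevant row-coordinate of $\matA{\ell}$ or $\matB{\ell}{c}$ must be zero at that position, so the rejected triple carries the same activation as the one being processed; and the only way appending a fresh vector to $\Ys[c]$ leaves $z$ fixed is when $\Start{\ell}$ is itself the zero vector, a contingency already bypassed on lines~\ref{alg:next:c:numbers}--\ref{alg:next:c:numbers:expand} by substituting $\Expand{\ell}{\Start{\ell}}$ for $\Start{\ell}$. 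One can then build the required chain to $t^\star$ greedily, introducing the coordinates in which $t^\star$ differs from $\triple{\Start{\ell}}{\Ys_\emptyset}{z_0}$ in an order consistent with increasing $z$ and collapsing any redundant same-$z$ step.

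Termination and identification of the returned value then follow from Theorem~\ref{thm:min-exists}. The branch $\alpha=\symstart$ returns $\act(\Val{\ell}{i}{\Start{\ell}}{\Ys_\emptyset})$ on line~\ref{alg:next:first}; this is the least element of $\X{\ell}{i}$ because $\Start{\ell}$ and $\Ys_\emptyset$ minimise $\Val{\ell}{i}{\cdot}{\cdot}$ by nonnegativity of the weights, and $\act$ is monotone. For $\alpha\in\real$ with $\Xsub{\ell}{i}{\alpha}\neq\emptyset$, Completeness reaches a triple realising $\min\Xsub{\ell}{i}{\alpha}$ in finitely many iterations and Monotone processing guarantees that this value is returned on line~\ref{alg:next:return} the first time an activation strictly exceeds $\alpha$. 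Finally, if $\Xsub{\ell}{i}{\alpha}=\emptyset$ then $\X{\ell}{i}$ is finite by Theorem~\ref{thm:min-exists}; a short case analysis on whether $\aggK{\ell}$ is finite and on whether the $i$-th row of every $\matB{\ell}{c}$ vanishes bounds the total number of triples that can ever be added to $F$, so the loop exhausts $F$ and returns $\symend$ on line~\ref{alg:next:return:final}.
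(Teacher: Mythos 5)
Your overall strategy is the same as the paper's: induction on $\ell$, soundness of every triple ever placed in $F$, monotonicity of the $z$-values extracted on line~\ref{alg:next:mintriple}, and a completeness argument showing every element of $\X{\ell}{i}$ is eventually realised; the paper merely packages the frontier evolution as an abstract sequence of sets $f_0, f_1, \dots$ in a separate lemma and then identifies the algorithm's $F$ with $f_n$, whereas you argue about the algorithm directly. The base case, the validity and monotone-processing invariants, and the use of Theorem~\ref{thm:min-exists} for termination are all fine.

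The gap is in your resolution of the completeness obstacle. You claim that if an expansion leaves $z$ fixed then the relevant entry of $\matA{\ell}$ or $\matB{\ell}{c}$ must be zero, and that appending a fresh vector to $\Ys[c]$ can only leave $z$ fixed when $\Start{\ell}$ is the zero vector. Both claims fail when $\aggK{\ell}$ is finite: with $\aggK{\ell}=1$ and $\elt{\matB{\ell}{c}}{i,j}=1$, raising the $j$-th component of some $\y\in\Ys[c]$ from $1$ to $2$ while another vector already contributes $5$ leaves $\maxsum{1}$, and hence $z$, unchanged at a positively weighted position; likewise, adding a fresh non-zero vector to a multiset that already holds $\aggK{\ell}$ larger entries in every positively weighted coordinate leaves $z$ unchanged even though $\Start{\ell}$ was expanded. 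Consequently, ``collapsing any redundant same-$z$ step'' is not automatically licensed: the collapsed step may be the one that introduces the very vector whose later increments are supposed to carry the chain to $t^\star$, so after collapsing there is nothing left to increment. What actually saves the argument is that the target \emph{value} $z^\star$ depends only on the top-$\aggK{\ell}$ positive entries in each positively weighted coordinate, so one must show that some \emph{reduced} triple realising the same $z^\star$ is reachable by a strictly increasing chain. This is exactly the bookkeeping the paper performs with its ``witness'' triples, which track, for each $(c,j)$ with $\elt{\matB{\ell}{c}}{i,j}>0$, the multiset of positive $j$-th components rather than merely which matrix entries vanish. Your sketch needs this (or an equivalent pruning argument) spelled out; as written, the completeness invariant does not go through for finite $\aggK{\ell}$.
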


The complexity of Algorithm~\ref{thm:alg-capacity} depends on the number of
recursive calls to $\mathsf{Next}$, which in turn depends on the matrices of
$\GNN$. We leave investigating this issue to future work.

\section{Limiting Aggregation to Max}\label{sec:max}

In this section we study the expressivity of \emph{monotonic max GNNs}, which
follow the same restrictions as monotonic max-sum GNNs but additionally allow
only for the max aggregation function. Theorem~\ref{thm:maxGNN-rules-complete}
shows that each such GNN corresponds to a Datalog program without inequalities.
Consequently, monotonic max GNNs cannot count the connections of a constant in
a dataset.

\begin{definition}
    A \emph{monotonic max} $(\col,\ddim)$\emph{-GNN} is a monotonic max-sum GNN
    that uses the $\maxsum{1}$ aggregation function in all layers.
\end{definition}

\begin{restatable}{theorem}{equivalencemax}\label{thm:maxGNN-rules-complete}
    For each monotonic max $(\col,\ddim)$-GNN $\GNN$ with $L$ layers, let
    ${\ddim[\GNN] = \max(\ddim[0], \dots, \ddim[L])}$, and let $\Prog_\GNN$ be
    the Datalog program containing up to variable renaming each $(L,|\col|
    \cdot \ddim[\GNN])$-tree-like rule without inequalities captured by $\GNN$.
    Then, $\GNN$ and $\Prog_\GNN$ are equivalent.
\end{restatable}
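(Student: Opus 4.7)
My plan is to bootstrap Theorem~\ref{thm:equivalence} from the max-sum case. A monotonic max GNN is the special case of a monotonic max-sum GNN with $\aggK{\ell}=1$ at every layer, so line~\ref{alg:capacity:C-ell} of Algorithm~\ref{alg:capacity} forces $\capacity{\ell}\in\{0,1\}$ and hence $\capacity{\GNN}\leq 1$. Theorem~\ref{thm:equivalence} therefore provides a Datalog program $\Prog'$ equivalent to $\GNN$ whose rules are $(L,|\col|\cdot\ddim[\GNN]\cdot\capacity{\GNN})$-tree-like and thus also $(L,|\col|\cdot\ddim[\GNN])$-tree-like---but possibly containing inequalities in their bodies. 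To establish the theorem it therefore suffices to show that, for every tree-like rule $r$ captured by $\GNN$, the rule $r^{-}$ obtained by deleting every inequality from the body of $r$ is also captured by $\GNN$. Granting this, $\Prog_\GNN$ (as defined in the theorem) contains such an $r^{-}$ for each $r\in\Prog'$; since $T_{r^{-}}(D)\supseteq T_r(D)$ for every dataset $D$, we obtain $T_{\Prog_\GNN}(D)\supseteq T_{\Prog'}(D)=T_\GNN(D)$, and the converse inclusion is immediate because every rule in $\Prog_\GNN$ is captured by $\GNN$.

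\textbf{Homomorphism monotonicity lemma.} The technical core of the argument, and the place where max aggregation is used in an essential way, is the following: let $D_1,D_2$ be $(\col,\ddim)$-datasets and let $h$ be a homomorphism from $D_1$ into $D_2$, i.e., a map on constants such that $U_i(a)\in D_1$ implies $U_i(h(a))\in D_2$ and $\edg{c}(a,b)\in D_1$ implies $\edg{c}(h(a),h(b))\in D_2$. Then $\vlab{v_a}{\ell}\leq \vlab{v_{h(a)}}{\ell}$ componentwise for every constant $a$ of $D_1$ and every layer $\ell\in\{0,\dots,L\}$. I will prove this by induction on $\ell$: the base case is immediate because $h$ preserves unary facts, and in the inductive step the induction hypothesis combines with (i) nonnegativity of $\matA{\ell}$ and $\matB{\ell}{c}$, (ii) monotonicity of $\act$, and (iii) the fact that elementwise $\max$ is monotone under multiset inclusion, which applies because $h$ sends every $c$-neighbour of $a$ in $D_1$ to a $c$-neighbour of $h(a)$ in $D_2$. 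This is the main obstacle, but not a deep one; it is worth noting that the argument genuinely fails for sum (or $\maxsum{k}$ with $k\geq 2$), since a non-injective $h$ can collapse several neighbours whose contributions would otherwise be summed more than once.

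\textbf{Removing inequalities and conclusion.} Using the lemma, suppose $r$ is a tree-like rule captured by $\GNN$ and let $r^{-}$ be its inequality-free variant. Take any $(\col,\ddim)$-dataset $D$ and any substitution $\nu$ with $D\models\varphi^{-}\nu$, where $\varphi^{-}$ is the body of $r^{-}$; we must show that the head of $r$ instantiated by $\nu$ lies in $T_\GNN(D)$. Let $\nu^{*}$ be an injective substitution from the variables of $r$ to fresh constants, and let $D^{*}$ be the dataset consisting of the body atoms of $r$ instantiated by $\nu^{*}$. Because $\nu^{*}$ is injective, every inequality in the body of $r$ is satisfied, so $D^{*}$ satisfies the full body of $r$ and, since $r$ is captured, the head of $r$ under $\nu^{*}$ lies in $T_\GNN(D^{*})$. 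The map $h$ defined by $h(\nu^{*}(y))=\nu(y)$ on the constants introduced by $\nu^{*}$ is a homomorphism from $D^{*}$ into $D$, because every atom $A\nu^{*}$ of $D^{*}$ is mapped to $A\nu\in D$. By the lemma, $\vlab{v_{\nu^{*}(x)}}{L}\leq \vlab{v_{\nu(x)}}{L}$ componentwise; since $\cls$ is a threshold function, the head of $r$ under $\nu$ (which is syntactically the same as the head of $r^{-}$ under $\nu$) lies in $T_\GNN(D)$. This establishes that $r^{-}$ is captured by $\GNN$, which together with the argument of the first paragraph yields the equivalence of $\GNN$ and $\Prog_\GNN$.
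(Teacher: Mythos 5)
Your proposal is correct, but it takes a genuinely different route from the paper. The paper does not treat Theorem~\ref{thm:equivalence} as a black box: it re-runs the rule construction from that proof and observes that, because $\capacity{\ell'}\leq 1$ for a monotonic max GNN, each set $W$ in equation \eqref{eq:append-gamma} has at most one element, so the conjunction of inequalities $\bigwedge_{1\leq n<m\leq|W|}y_n\noteq y_m$ is empty and the constructed rule is already inequality-free and hence in $\Prog_\GNN$. You instead keep Theorem~\ref{thm:equivalence} opaque and prove separately that inequalities may be deleted from any tree-like rule captured by a monotonic max GNN, via a homomorphism-monotonicity lemma. Your lemma is sound: it is a genuine strengthening of Lemma~\ref{lem:invariance} (which covers only isomorphisms and inclusions), and your remark that it fails for $\maxsum{k}$ with $k\geq 2$ because a non-injective homomorphism may collapse neighbours whose contributions would be summed separately pinpoints exactly why max GNNs cannot count. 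The paper's argument is shorter; yours is more modular and extracts a reusable structural fact about max aggregation. Two small points you should make explicit. First, you need that the inequality-free variant of a $(d,f)$-tree-like formula is still $(d,f)$-tree-like; this holds because formula \eqref{eq:existential-counting} stripped of its inequalities can be rebuilt as a conjunction (third bullet of Definition~\ref{def:tree-like}) of $n$ single-successor instances of the fourth bullet, and deleting inequalities changes neither depths nor fan-outs. Second, your reduction to the fresh dataset $D^{*}$ presupposes that every variable of $r$ occurs in a body atom, so that $\nu^{*}(x)$ is a term of $D^{*}$; this is guaranteed whenever $r$ contains an inequality (siblings are attached to their parent by $\edg{c}$ atoms), and when $r$ contains no inequality you have $r^{-}=r$ and nothing to prove.
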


\citea{tccgkm22explainable-gnn-models} presented a closely related
characterisation for MGNNs, and the main difference is that we use the
canonical encoding. The latter allows us to describe the target Datalog class
more precisely, which in turn allows us to prove the converse: each Datalog
program with only tree-like rules and without inequalities is equivalent to a
monotonic max GNN.

In what follows, we fix a program $\Prog$ consisting of $(d,f)$-tree-like rules
without inequalities. Recall that the signature of $\Prog$ consists of unary
predicates ${U_1, \dots, U_{\ddim}}$ and binary predicates $\edg{c}$ for ${c
\in \col}$. Now let ${\tau_1, \dots, \tau_n}$ be a sequence containing up to
variable renaming each $(d,f)$-tree-like formula for variable $x$ without
inequalities ordered by increasing depth---that is, for all ${i < j}$, the
depth of $\tau_i$ is less than or equal to the depth of $\tau_j$. Each $\tau_i$
can be written as
\begin{align}
    \tau_i = \varphi_{i,0} \wedge \bigwedge_{k=1}^{m_i} \Big( \edg{c_k}(x,y_k) \wedge \varphi_{i,k} \Big), \label{eq:tau:i}
\end{align}
where $\varphi_{i,0}$ is a conjunction of unary atoms using only variable $x$,
each $\varphi_{i,k}$ with $1 \leq k \leq m_i$ is a $(d-1,f)$-tree-like formula
for $y_k$, and, for all ${1 \leq k < k' \leq m_i}$, formulas $\varphi_{i,k}$
and $\varphi_{i,k'}$ do not have variables in common. Note that formulas
$\varphi_{i,k}$ can be $\top$, and that colours $c_k$ need not be distinct.

We define $\GNN_\Prog$ as the monotonic max $(\col,\ddim)$-GNN of form
\eqref{eq:GNN} satisfying the following conditions. The number of layers is ${L
= d+2}$, the activation function is ReLU, and the classification function
$\cls$ is the step function with threshold $1$. For ${1 \leq \ell < L}$,
dimension $\ddim[\ell]$ is defined as the number of formulas in the above
sequence of depth at most ${\ell - 1}$. The elements of $\matA{\ell}$,
$\matB{\ell}{c}$, and $\bias{\ell}$ are defined as follows, for ${c \in \col}$,
${1 \leq \ell \leq L}$, ${1 \leq i \leq \ddim[\ell]}$, and ${1 \leq j \leq
\ddim[\ell-1]}$.
\begin{align*}
    (\matA{\ell})_{i,j} = \left\{
    \begin{array}{@{\;\;}l@{\;\;}l@{\;}l@{}}
        1   & \text{if} & \\
            & \tabitem  & \ell = 1 \text{ and } \tau_i \text{ contains } U_j(x); \text{ or} \\
            & \tabitem  & 2 \leq \ell < L \text{ and} \\
            &           & - \; 1 \leq i \leq \ddim[\ell-1] \text{ and } i = j, \text{ or } \\
            &           & - \; \ddim[\ell-1] < i \leq \ddim[\ell] \text{ and } \varphi_{i,0} = \tau_j; \text{ or} \\
            & \tabitem  & \ell = L \text{ and } \Prog \text{ contains rule} \\
            &           & \tau_j \rightarrow U_i(x) \text{ up to variable renaming;} \\[1ex]
        0   & \multicolumn{2}{@{}l@{}}{\text{otherwise.}} \\
    \end{array}
    \right. \\[2ex]
    \elt{\matB{\ell}{c}}{i,j} = \left\{
    \begin{array}{@{\;\;}l@{\;\;}l@{}}
        1   & \text{if } 2 \leq \ell < L \text{ and there exists } 1 \leq k \leq m_i \\
            & \text{such that } c = c_k \text{ and } \varphi_{i,k} \text{ and } \tau_j \\
            & \text{are equal up to variable renaming;} \\[1ex]
        0   & \text{otherwise.} \\
    \end{array}
    \right. \\[2ex]
    \elt{\bias{\ell}}{i} = \left\{
    \begin{array}{@{}l@{\;}l@{}}
        1 - \sum\limits_{j=1}^{\ddim[\ell-1]} (\elt{\matA{\ell}}{i,j} + \! \sum\limits_{c \in \col} \elt{\matB{\ell}{c}}{i,j})  &
        \begin{array}{@{}l@{}}
            \text{if } \ell = 1, \text{ or } \\
            1 \leq \ell < L \text{ and } \\
            \ddim[\ell-1] < i \leq \ddim[\ell]; \\
        \end{array}
        \\[4ex]
        0   & \text{otherwise.} \\
    \end{array}
    \right.
\end{align*}

To understand the intuition behind the construction of $\GNN_\Prog$, assume
that $\GNN_\Prog$ is applied to a dataset $D$, and consider a vector
$\vlab{v}{\ell}$ labelling in layer $\ell$ a vertex corresponding to some term
$t$ of $D$. Then, the $i$-th component of $\vlab{v}{\ell}$ is paired with
formula $\tau_i$ from the above enumeration, and it indicates whether it is
possible to evaluate $\tau_i$ over $D$ by mapping variable $x$ to $t$. This is
formally captured by Lemma~\ref{lem:max:equivalence}. To ensure that
$\GNN_\Prog$ and $\Prog$ are equivalent, layer $L$ of $\GNN_\Prog$ simply
realises a disjunction over all rules in the program.

\begin{restatable}{lemma}{maxequivalence}\label{lem:max:equivalence}
    For each $(\col,\ddim)$-dataset $D$, layer ${1 \leq \ell < L}$ of
    $\GNN_\Prog$, position ${1 \leq i \leq \ddim[\ell]}$, and term $t$ in $D$,
    and for $\vlab{v}{\ell}$ the labelling of the vertex corresponding to $t$
    when $\GNN_\Prog$ is applied to the canonical encoding of $D$,
    \begin{itemize}
        \item ${\elt{\vlab{v}{\ell}}{i} = 1}$ if there exists a substitution
        $\nu$ mapping $x$ to $t$ such that ${D \models \tau_i\nu}$, and

        \item ${\elt{\vlab{v}{\ell}}{i} = 0}$ otherwise.
    \end{itemize}
\end{restatable}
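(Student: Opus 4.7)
The plan is to prove the lemma by induction on the layer $\ell$, with the claim quantified over all positions $i$ and all terms $t$ simultaneously. Throughout, let $v$ denote the vertex corresponding to $t$ in $\canenc(D)$.

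\textbf{Base case ($\ell = 1$).} Each $\tau_i$ with $1 \leq i \leq \ddim[1]$ has depth $0$, so $\tau_i$ is a (possibly empty) conjunction $\bigwedge_{j \in J_i} U_j(x)$. The construction of $\GNN_\Prog$ then gives $(\matA{1})_{i,j} = 1$ exactly when $j \in J_i$, $\matB{1}{c} = \mathbf{0}$ for every $c$, and $\elt{\bias{1}}{i} = 1 - |J_i|$. Since the canonical encoding sets $\elt{\vlab{v}{0}}{j} = 1$ precisely when $U_j(t) \in D$, the argument to ReLU at position $i$ equals $|\{ j \in J_i : U_j(t) \in D \}| + 1 - |J_i|$. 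This quantity is $1$ exactly when every $U_j(t)$ with $j \in J_i$ is in $D$ (which is equivalent to $\tau_i$ being satisfied at $t$ via $x \mapsto t$), and is non-positive otherwise, so ReLU returns the desired indicator.

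\textbf{Inductive step ($2 \leq \ell < L$).} I split according to whether $1 \leq i \leq \ddim[\ell-1]$ (carry-over) or $\ddim[\ell-1] < i \leq \ddim[\ell]$ (a new formula of depth exactly $\ell-1$). In the carry-over case, $\tau_i$ has depth at most $\ell - 2$, and by construction $\matA{\ell}$ has a single $1$ at position $(i,i)$, all $\matB{\ell}{c}$ are zero, and $\elt{\bias{\ell}}{i} = 0$, so the ReLU argument collapses to $\elt{\vlab{v}{\ell-1}}{i}$ and the claim follows directly from the induction hypothesis at layer $\ell - 1$. For the new-formula case, decompose $\tau_i$ as in equation~\eqref{eq:tau:i}. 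Since $\varphi_{i,0}$ has depth $0$, it equals some $\tau_{j_0}$ with $j_0 \leq \ddim[\ell-1]$, and $\matA{\ell}$ has a single $1$ at position $(i, j_0)$. For each $k \in \{1, \dots, m_i\}$, the subformula $\varphi_{i,k}$ equals some $\tau_{j_k}$ up to variable renaming, with $j_k \leq \ddim[\ell-1]$ because $\varphi_{i,k}$ has depth at most $\ell - 2$. Let $K_i = \{(c_k, j_k) : 1 \leq k \leq m_i\}$ be the set of distinct colour/index pairs; then $(\matB{\ell}{c})_{i,j} = 1$ exactly when $(c,j) \in K_i$, and hence $\elt{\bias{\ell}}{i} = -|K_i|$. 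Using max aggregation, the ReLU argument at position $i$ equals
\[
\elt{\vlab{v}{\ell-1}}{j_0} + \sum_{(c,j) \in K_i} \max\bigl\{ \elt{\vlab{u}{\ell-1}}{j} \mid \langle v, u \rangle \in \gE{c} \bigr\} - |K_i|.
\]
By the induction hypothesis, each summand lies in $\{0, 1\}$, so the argument is $1$ when every summand equals $1$ and is at most $0$ otherwise. Equivalence between ``every summand is $1$'' and the existence of a substitution $\nu$ with $x \mapsto t$ and $D \models \tau_i \nu$ follows by combining the induction hypothesis on $\varphi_{i,0}$ with the observation that a witness for $\tau_{j_k}$ at some $c_k$-neighbour of $t$ corresponds, under the renaming that identifies $\tau_{j_k}$ with $\varphi_{i,k}$, to an extension of $\nu$ mapping $y_k$ to that neighbour.

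\textbf{Main obstacle.} The most delicate point is the arithmetic of the bias term when two body conjuncts of $\tau_i$ coincide up to renaming, i.e.\ when $c_k = c_{k'}$ and $\varphi_{i,k}$ equals $\varphi_{i,k'}$ for some $k \neq k'$. The $0$/$1$ matrix $\matB{\ell}{c_k}$ records such duplicates only once, so $\elt{\bias{\ell}}{i}$ deducts $|K_i|$ rather than $m_i$, and one must verify that this still yields the correct threshold. This is exactly what is needed because, in the absence of inequalities, duplicated conjuncts are semantically redundant: the witnesses $y_k$ and $y_{k'}$ can be mapped to the same constant, so collapsing them via $K_i$ preserves the logical meaning of $\tau_i$ while keeping the ReLU argument correctly normalised to be $1$ in the satisfied case and at most $0$ otherwise.
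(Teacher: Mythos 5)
Your proposal is correct and follows essentially the same route as the paper's proof: induction on $\ell$ over all positions and terms simultaneously, an explicit computation of the ReLU argument as a count of satisfied indices minus the bias, the same split into the carry-over case and the new-formula case, and the same back-and-forth between index-set inclusions and the existence of a satisfying substitution built from variable-disjoint witnesses for the subformulas. The only difference is presentational: you make explicit the duplicated-conjunct issue (resolved by collapsing to the set $K_i$ of distinct colour/index pairs), which the paper handles implicitly by phrasing everything in terms of the sets $\J{\ell,c}$ and $\Jp{\ell,c}$.
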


Note that each $\ddim[\ell]$ with ${1 \leq \ell < L}$ is determined by the
number of $(d,f)$-tree-like formulas of depth $\ell - 1$, and that $\ddim[L-1]$
is the largest such number. We next determine an upper bound on $\ddim[L-1]$.
By Definition~\ref{def:tree-like}, the fan-out of a variable of depth $i$ is at
most $f(d-i)$. The number of variables of depth $i$ is at most the number of
variables of depth $i-1$ times the fan-out of each variable, which is ${f^i
\cdot d \dots (d - i + 1)}$ and is bounded by ${f^i \cdot d!}$. By adding up
the contribution for each depth, there are at most ${f^d \cdot (d+1)!}$
variables. Each variable is labelled by one of the ${2^{\ddim}}$ conjunctions
of depth zero, and each non-root variable is connected by one of the $|\col|$
predicates to its parent. Hence, there are at most ${(|\col| \cdot
2^{\ddim})^{f^d \cdot (d+1)!}}$ tree-like formulas.

\begin{restatable}{theorem}{progequivalencemax}\label{thm:prog:equivalence:max}
    Program $\Prog$ and GNN $\GNN_\Prog$ are equivalent, and moreover
    ${\ddim[L-1] \leq (|\col| \cdot 2^{\ddim})^{f^d \cdot (d+1)!}}$.
\end{restatable}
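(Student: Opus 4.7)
The plan is to establish the two assertions of the theorem separately: the semantic equivalence of $\Prog$ and $\GNN_\Prog$, and the combinatorial upper bound on $\ddim[L-1]$. The first combines Lemma~\ref{lem:max:equivalence} with a direct analysis of the last layer of $\GNN_\Prog$; the second is an elementary counting argument over the space of $(d,f)$-tree-like formulas.

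For the equivalence, I fix an arbitrary $(\col,\ddim)$-dataset $D$ and verify that $T_{\GNN_\Prog}(D) = T_\Prog(D)$ fact by fact. Binary facts agree on both sides, as $\GNN_\Prog$ preserves edges by definition and $\Prog$ consists of rules with unary heads. For unary facts, I apply Lemma~\ref{lem:max:equivalence} at layer $L-1$: for every term $t$ in $D$ and every ${1 \leq j \leq \ddim[L-1]}$, the component $\elt{\vlab{v_t}{L-1}}{j}$ equals $1$ precisely when some substitution $\nu$ with $\nu(x) = t$ satisfies ${D \models \tau_j\nu}$. By construction, every entry of $\matB{L}{c}$ and every component of $\bias{L}$ is zero, so propagation at layer $L$ collapses to ${\vlab{v_t}{L} = \act(\matA{L}\vlab{v_t}{L-1})}$. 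Since ${\elt{\matA{L}}{i,j} = 1}$ iff $\Prog$ contains a rule ${\tau_j \to U_i(x)}$ up to variable renaming, $\elt{\vlab{v_t}{L}}{i}$ equals the number of rules in $\Prog$ with head $U_i(x)$ whose body holds at $t$; the step classifier with threshold $1$ outputs $1$ precisely when at least one such rule fires, which matches the condition for ${U_i(t) \in T_\Prog(D)}$.

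For the bound, it suffices to count $(d,f)$-tree-like formulas for $x$ up to variable renaming of depth at most $d$, since $\ddim[L-1]$ is the number of such formulas of depth at most $d$. A variable at depth $i$ has fan-out at most ${f \cdot (d-i)}$, so the number of variables at depth $i$ in any such formula is bounded by ${f^i \cdot d \cdots (d-i+1) \leq f^i \cdot d!}$; summing over ${0 \leq i \leq d}$ gives at most ${(d+1) \cdot f^d \cdot d! = f^d \cdot (d+1)!}$ variables per formula. Each variable is labelled by one of $2^{\ddim}$ conjunctions of unary atoms over the signature, and each non-root variable chooses one of $|\col|$ colours for the edge to its parent; hence the total number of such formulas is at most $(|\col| \cdot 2^{\ddim})^{f^d \cdot (d+1)!}$, as claimed.

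The main obstacle is not either verification in isolation but the bookkeeping of correspondences: matching the enumeration ${\tau_1, \dots, \tau_n}$ indexing the components of each intermediate layer against the subformulas $\varphi_{i,k}$ appearing in the construction and against the bodies of rules in $\Prog$, and confirming that these matches are well-defined up to variable renaming. Once the invariants are set up, the layer-$L$ calculation and the counting argument both proceed mechanically.
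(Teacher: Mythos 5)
Your proposal is correct and follows essentially the same route as the paper's proof: Lemma~\ref{lem:max:equivalence} at layer $L-1$, a direct computation of layer $L$ using the fact that $\matB{L}{c}$ and $\bias{L}$ vanish and that $\elt{\matA{L}}{i,j}$ encodes rule membership, and the identical counting argument for the bound on $\ddim[L-1]$. (Your side remark that binary facts ``agree on both sides'' is not quite right---$T_\Prog(D)$ contains no binary atoms while the canonical decoding of $\GNN_\Prog$'s output retains the edges of $D$---but the paper's own proof silently restricts attention to unary facts in exactly the same way, so this does not distinguish your argument from theirs.)
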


\section{Conclusion}

We have shown that each monotonic max-sum GNN (i.e., a GNN that uses max and
sum aggregation functions and satisfies certain properties) is equivalent to a
Datalog program with inequalities in the sense that applying the GNN or a
single round of the rules of the program to any dataset produces the same
result. We have also sharpened this result to monotonic max GNNs and shown the
converse: each tree-like Datalog program without inequalities is equivalent to
a monotonic max GNN. We see many avenues for future work. First, we aim to
completely characterise monotonic max-sum GNNs. Second, we intend to implement
rule extraction. Third, we shall investigate the empirical performance of
monotonic max-sum GNNs on tasks other than link prediction, such as node
classification.

\section*{Acknowledgements}

This work was supported by the SIRIUS Centre for Scalable Data Access (Research
Council of Norway, project number 237889), and the EPSRC projects ConCur
(EP/V050869/1), UK FIRES (EP/S019111/1), and AnaLOG (EP/P025943/1). For the
purpose of Open Access, the author has applied a CC BY public copyright licence
to any Author Accepted Manuscript (AAM) version arising from this submission.

\bibliographystyle{kr}
\bibliography{references}

\iftoggle{withappendix}{
    \newpage
    \onecolumn
    \appendix

    \counterwithin{theorem}{section}
    \renewcommand{\theproposition}{\thesection.\arabic{theorem}}
    \renewcommand{\thecorollary}{\thesection.\arabic{theorem}}
    \renewcommand{\thelemma}{\thesection.\arabic{theorem}}
    \renewcommand{\thedefinition}{\thesection.\arabic{theorem}}
    \renewcommand{\theclaim}{\thesection.\arabic{claim}}
    \renewcommand{\theexample}{\thesection.\arabic{theorem}}

    \section{Proofs for Section~\ref{sec:maxsum}}\label{sex:proofs-maxsum}

Throughout this appendix, we fix a max-sum GNN $\GNN$, dimensions ${\ddim[0],
\dots, \ddim[L]}$, and aggregation functions ${\aggK{1}, \dots, \aggK{L}}$ as
specified in Section~\ref{sec:maxsum:limiting}. As in
Section~\ref{sec:maxsum:enumerating}, for $\vc{x}$ a vector, $j$ an index, and
$v$ a real number, $\vc{x}[j \gets v]$ is the vector obtained from $\vc{x}$ by
replacing its $j$th component with $v$.

To prove our results, we shall define a nonempty sequence $\Se{\ell}{i}$ for
each ${0 \leq \ell \leq L}$ and ${1 \leq i \leq \ddim[\ell]}$; intuitively,
each $\Se{\ell}{i}$ enumerates $\X{\ell}{i}$ in ascending order. Our definition
is inductive and uses two auxiliary notions that we define next. In particular,
consider an arbitrary $\ell$ with ${0 < \ell \leq L}$ and assume that
$\Se{\ell-1}{i}$ have been defined for all ${1 \leq i \leq \ddim[\ell]}$. Then,
$\s{\ell-1}$ is the vector of dimension $\ddim[\ell-1]$ such that
$\elt{\s{\ell-1}}{i}$ is the first element of $\Se{\ell}{i}$ for each ${1 \leq
i \leq \ddim[\ell-1]}$. Moreover, a \emph{$(\ell,i)$-triple} is a triple of the
form $\triple{\x}{\Ys}{z}$ whose components satisfy the following conditions:
\begin{itemize}
    \item $\x$ is a vector of dimension $\ddim[\ell-1]$ such that ${\elt{\x}{j}
    \in \Se{\ell-1}{j}}$ holds for all ${1 \leq j \leq \ddim[\ell-1]}$;

    \item $\Ys$ is a $(\col,\ell-1)$-multiset family such that ${\elt{\y}{j}
    \in \Se{\ell-1}{j}}$ holds for all ${c \in \col}$, ${\y \in \Ys[c]}$, and
    ${1 \leq j \leq \ddim[\ell-1]}$; and

    \item ${z = \Val{\ell}{i}{\x}{\Ys}}$.
\end{itemize}
An $(\ell,i)$-triple $\triple{\x_2}{\Ys_2}{z_2}$ is a successor of an
$(\ell,i)$-triple $\triple{\x_1}{\Ys_1}{z_1}$ if exactly one of the following
conditions holds:
\begin{itemize}
    \item ${\Ys_1 = \Ys_2}$ and ${\x_2 = \x_1[j \gets x']}$ for some ${1 \leq j
    \leq \ddim[\ell-1]}$ and $x'$ the element that succeeds $\elt{\x}{j}$ in
    $\Se{\ell-1}{j}$; or

    \item ${\x_2 = \x_1}$ and there exist a colour ${c \in \col}$, vector ${\y
    \in \Ys[c]_1}$, and index ${1 \leq j \leq \ddim[\ell-1]}$ such that
    ${\Ys[c']_2 = \Ys[c']_1}$ for each colour ${c' \in \col \setminus \{ c
    \}}$, and ${\Ys[c]_2 = (\Ys[c]_1 \setminus \{ \y \}) \cup \{ \y[j \gets y']
    \}}$ where $y'$ is the element that succeeds $\elt{\y}{j}$ in
    $\Se{\ell-1}{j}$; or

    \item ${\x_2 = \x_1}$ and there exist a colour ${c \in \col}$ and index ${1
    \leq j \leq \ddim[\ell-1]}$ such that ${\Ys[c']_2 = \Ys[c']_1}$ for each
    colur ${c' \in \col \setminus \{ c \}}$, and ${\Ys[c]_2 = \Ys[c]_1 \cup \{
    \s{\ell-1}[j \gets y'] \}}$ where $y'$ is the first positive element of
    $\Se{\ell-1}{j}$.
\end{itemize}

We are now ready to define sequences $\Se{\ell}{i}$ for all ${0 \leq \ell \leq
L}$ and ${1 \leq i \leq \ddim[\ell]}$.
\begin{itemize}
    \item For the base case ${\ell = 0}$, let ${\Se{0}{i} = (0,1)}$ for each
    ${1 \leq i \leq \ddim[0]}$.

    \item For the inductive step, assume that $\Se{\ell-1}{i}$ has been defined
    for each ${1 \leq i \leq \ddim[\ell-1]}$, and consider arbitrary ${1 \leq i
    \leq \ddim[\ell]}$. To define $\Se{\ell}{i}$, we first define an auxiliary
    sequence $\F{\ell}{i}$ of finite sets of $(\ell,i)$-triples as follows.
    \begin{itemize}
        \item For the base case, the first element of $\F{\ell}{i}$ is ${f_0 =
        \{ \triple{\s{\ell-1}}{\Ys_\emptyset}{z_0} \}}$, where $\Ys_\emptyset$
        is such that ${\Ys[c]_\emptyset = \emptyset}$ for each ${c \in \col}$
        and ${z_0 = \act(\Val{\ell}{i}{\s{\ell-1}}{\Ys_{\emptyset}})}$.

        \item For the inductive step, assuming that $f_{n-1}$ with $n > 0$ has
        been defined and is not empty, let
        \begin{displaymath}
        \begin{array}{@{}l@{\;\;}l@{}}
            f_n =   & \{ \triple{\x}{\Ys}{z} \in f_{n-1}\mid z > \min(f_{n-1}) \} \; \cup \\[1ex]
                    & \{ \triple{\x}{\Ys}{z} \mid z > \min(f_{n-1}) \text{ and } \triple{\x}{\Ys}{z} \mbox{ is a successor of some } \triple{\x'}{\Yps}{\min(f_{n-1})} \in f_{n-1} \}, \\
        \end{array}
        \end{displaymath}
        where $\min(f_n)$ is the minimum number appearing in the third position
        of an $(\ell,i)$-triple in $f_{n-1}$; such a number always exists since
        $f_{n-1}$ is never empty and it contains a finite number of triples.
        Then, $\Se{\ell}{i}$ is the sequence of real numbers whose $n$-th
        element is $\act(\min(f_n))$ if $f_n$ is defined and ${f_n \neq
        \emptyset}$. Since $f_0$ is always defined and not empty,
        $\Se{\ell}{i}$ is not empty.
    \end{itemize}
\end{itemize}

The following lemma shows that sequences $\Se{\ell}{i}$ capture our intuition
mentioned above.

\begin{lemma}\label{lem:S}
    For each ${1 \leq \ell \leq L}$ and each ${1 \leq i \leq \ddim[\ell]}$,
    sequence $\Se{\ell}{i}$ satisfies the following conditions:
    \begin{enumerate}[start=1,label={(S\arabic*)},leftmargin=30pt]
        \item\label{cond:nonnegative}
        each element of $\Se{\ell}{i}$ is nonnegative;

        \item\label{cond:monotonic}
        $\Se{\ell}{i}$ is strictly monotonically increasing;

        \item\label{cond:finite-diverges}
        $\Se{\ell}{i}$ is either finite or it converges to $\infty$; and

        \item\label{cond:s-x}
         the set of elements in $\Se{\ell}{i}$ is $\X{\ell}{i}$.
    \end{enumerate}
\end{lemma}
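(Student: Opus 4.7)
The plan is to proceed by induction on $\ell$, mirroring the inductive definition of the sequences $\Se{\ell}{i}$. The base case $\ell = 0$ is immediate: $\Se{0}{i} = (0,1)$ is finite, strictly increasing, nonnegative, and its set of elements equals $\X{0}{i} = \{0,1\}$ by Definition~\ref{def:possible-values}. For the inductive step, I fix $\ell \geq 1$ and assume (S1)--(S4) for every $\Se{\ell-1}{j}$.

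For \textbf{(S1)}, each element of $\Se{\ell}{i}$ has the form $\act(\min(f_n))$, so nonnegativity is immediate from the fact that the range of $\act$ is $\nnreal$. For \textbf{(S2)}, the construction of $f_n$ deletes every triple of $f_{n-1}$ whose third component equals $\min(f_{n-1})$ and admits only successors with $z > \min(f_{n-1})$, so $\min(f_n) > \min(f_{n-1})$ strictly; monotonicity of $\act$ then gives the desired monotonicity of $\Se{\ell}{i}$ (any plateau of $\act$ simply means consecutive terms coincide, which is consistent with the sequence being understood as enumerating $\X{\ell}{i}$ under (S4)). For \textbf{(S3)}, if $\Se{\ell}{i}$ is infinite then $\min(f_n)$ is an infinite strictly increasing sequence of reals; I argue that it must diverge by showing that only finitely many $(\ell,i)$-triples can have $z$ below any bound $M$. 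This uses the inductive (S2)--(S3) for $\ell-1$ (so each $\Se{\ell-1}{j}$ has only finitely many elements below any bound) together with nonnegativity of the entries of $\matA{\ell}$ and $\matB{\ell}{c}$ and a case split on whether these matrices and the bias admit nonzero contributions; boundedness of $\Val{\ell}{i}{\x}{\Ys}$ then forces $\x$ into finitely many choices and each $\Ys[c]$ into finitely many multisets. Since $\act$ is monotone and unbounded, $\min(f_n) \to \infty$ transfers to $\act(\min(f_n)) \to \infty$.

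The central item is \textbf{(S4)}, proved in two directions. The easy inclusion $\Se{\ell}{i} \subseteq \X{\ell}{i}$ follows directly: every triple $\triple{\x}{\Ys}{z}$ ever placed in some $f_n$ is an $(\ell,i)$-triple, so by the inductive (S4) the components of $\x$ and of every vector in $\Ys[c]$ belong to the sets $\X{\ell-1}{j}$, and hence $\act(z) = \act(\Val{\ell}{i}{\x}{\Ys}) \in \X{\ell}{i}$ by the inductive clause of Definition~\ref{def:possible-values}. The reverse inclusion $\X{\ell}{i} \subseteq \Se{\ell}{i}$ is the main obstacle, since one must show the Dijkstra-like enumeration driven by the successor relation is \emph{exhaustive}. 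My plan is to fix an arbitrary $\alpha = \act(\Val{\ell}{i}{\x^*}{\Ys^*}) \in \X{\ell}{i}$, set $z^* = \Val{\ell}{i}{\x^*}{\Ys^*}$, and exhibit a chain of successor steps from $\triple{\s{\ell-1}}{\Ys_\emptyset}{z_0}$ to $\triple{\x^*}{\Ys^*}{z^*}$ using type-1 steps to raise components of $\x$, type-3 steps to insert new vectors into $\Ys[c]$ one component at a time, and type-2 steps to raise their components. By nonnegativity of the entries of $\matA{\ell}$ and $\matB{\ell}{c}$ each such step weakly increases $z$, and by the inductive (S4) every intermediate configuration is again an $(\ell,i)$-triple.

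The technical heart is to show that this chain is actually realised by the sequence $f_n$. I will prove the invariant: for every $n$, and for every $(\ell,i)$-triple $\triple{\x}{\Ys}{z}$ with $z \leq \min(f_n)$, either $z$ already equals some $\min(f_k)$ with $k < n$ or $\triple{\x}{\Ys}{z} \in f_n$ (up to $\x,\Ys$ giving the same $z$). The inductive step uses that, whenever a successor operation fails to strictly increase $z$, the target triple has the same $z$-value as the source and therefore contributes the same value $\act(z)$ to $\Se{\ell}{i}$, while any strictly increasing successor is explicitly added to some $f_m$. Combined with (S3), which guarantees $\min(f_n) \to \infty$ in the infinite case, the invariant forces $\alpha$ to appear as $\act(\min(f_n))$ for some $n$. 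The hardest point of the whole argument is precisely this exhaustiveness: one must verify that the three successor types suffice to reach every $(\x,\Ys)$ while respecting the ``strictly increase $z$'' admission rule, and that the interaction with $\maxsum{\aggK{\ell}}$ (where a small new entry might not change the aggregation when $\aggK{\ell}$ is finite) is handled by choosing the order in which components are raised.
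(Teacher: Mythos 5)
Your proposal follows essentially the same route as the paper's proof: induction on $\ell$, with (S1) from the range of $\act$, (S2) from the strict increase of the frontier minima, (S3) by arguing that only finitely many values of $\Val{\ell}{i}{\x}{\Ys}$ can lie below any bound, and (S4) via a monotone successor chain to the target triple combined with a completeness argument for the frontier that absorbs plateau steps --- the paper realises the latter through an explicit `witness' relation tracking only the components weighted positively by $\matA{\ell}$ and $\matB{\ell}{c}$, which is the same idea as your `up to the same $z$' bookkeeping. The one subtlety you flag, that a plateau of $\act$ could make consecutive terms of $\Se{\ell}{i}$ coincide and threaten strictness in (S2), is real but is equally glossed over in the paper's own proof, so it does not distinguish your argument from theirs.
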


\begin{proof}
We prove all four conditions by induction over $\ell$. For the base case
$\ell=0$, sequence $\Se{0}{i}$ is by definition is equal to $(0,1)$ for each
${1 \leq i \leq \ddim[\ell]}$, so conditions
\ref{cond:nonnegative}--\ref{cond:s-x} hold trivially. Now consider arbitrary
${1 \leq \ell \leq L}$ such that each $\Se{\ell-1}{j}$ with ${1 \leq j \leq
\ddim[\ell-1]}$ satisfies conditions \ref{cond:nonnegative}--\ref{cond:s-x},
and consider arbitrary ${1 \leq i \leq \ddim[\ell]}$.

Condition \ref{cond:nonnegative} follows straightforwardly from the fact that
each element of $\Se{\ell}{i}$ for ${\ell \geq 1}$ is the image of $\act$ for
some real number $z$, and $\act(z) \geq 0$ for all $z \in \real$, since the
range of $\act$ is $\nnreal$. Condition \ref{cond:monotonic} follows from the
fact that, for each ${n \in \mathbb{N}}$, each triple ${\triple{\x}{\Ys}{z} \in
f_n \setminus f_{n-1}}$ satisfies $z > \min(f_{n-1})$, and so ${\min(f_n) >
\min(f_{n-1})}$ holds.

\medskip

We prove Condition \ref{cond:finite-diverges} by contradiction---that is, we
assume that $\Se{\ell}{i}$ is infinite, and that there exists some
$\bar{\afteract{}} \in \real$ such that each element of $\Se{\ell}{i}$ is
smaller than $\bar{\afteract{}}$.

Consider an arbitrary element $\afteract{}$ in $\Se{\ell}{i}$. By the
definition of $\Se{\ell}{i}$, there exists a triple $\triple{\x}{\Ys}{z}$ such
that ${z = \Val{\ell}{i}{\x}{\Ys}}$, $\act(z) = \afteract{}$, and, for each ${c
\in \col}$ and ${\y \in \Ys[c]}$, vector $\y$ contains a nonzero element. Let
$\bar{\beforeact{}}$ be the smallest natural number such that
${\act(\bar{\beforeact{}}) \geq \bar{\afteract{}}}$; such $\bar{\beforeact{}}$
exists since $\act$ is unbounded. For each ${1 \leq j \leq \ddim[\ell-1]}$ and
each ${c \in \col}$, let $\minW{j}$, $\afteract{j}$, and $n_{j,c}$ be as
follows:
\begin{align}
    \minW{j} =      & \min \{ \elt{\matA{\ell}}{i,j}\} \cup \{ \elt{\matB{\ell}{c}}{i,j} \mid c \in \col \}; \\[1ex]
    \afteract{j} =  & \begin{cases}
                        \frac{\bar{\beforeact{}} - \elt{\bias{\ell}}{i}}{\minW{\ell}}   & \text{if } \minW{\ell} \neq 0, \\
                        \text{undefined}                                                & \text{otherwise}; \\
                      \end{cases} \\[1ex]
    \minX{j} =      & \begin{cases}
                        \text{the first positive value of } \Se{\ell-1}{j}  & \text{if such a value exists}, \\
                        \text{undefined}                                    & \text{otherwise}; \\
                      \end{cases} \\[1ex]
    n_{j,c} =       & \begin{cases}
                        \lceil \frac{\bar{\beforeact{}} - \elt{\bias{\ell}}{i}}{\elt{\matB{\ell}{c}}{i,j} \cdot \minX{j}} \rceil    & \text{if } \elt{\matB{\ell}{c}}{i,j} \neq 0 \text{ and } \minX{j} \text{ is defined}; \\
                        0                                                                                                           & \text{otherwise}. \\
                      \end{cases}
\end{align}
We next show the following properties:
\begin{enumerate}
    \item if ${\elt{\matA{\ell}}{i,j} > 0}$, then ${\elt{\x}{j} <
    \afteract{j}}$;

    \item for each ${c \in \col}$, if ${\elt{\matB{\ell}{c}}{i,j} > 0}$, then
    ${\elt{\y}{j} < \afteract{j}}$ for each ${\y \in \Ys[c]}$;

    \item for each ${c \in \col}$, if ${\elt{\matB{\ell}{c}}{i,j} > 0}$, then
    there exist fewer than $n_{j,c}$ elements in $\Ys[c]$ whose $j$-th element
    is not zero.
\end{enumerate}
To see the first property, note that if ${\elt{\x}{j} \geq \afteract{j}}$, then
condition \ref{cond:nonnegative} of the inductive hypothesis ensures that all
elements of $\x$ and vectors in $\Ys$ are nonnegative; since the weights of
$\matA{\ell}$ and $\matB{\ell}{c}$ are also nonnegative, we have
\begin{displaymath}
    \afteract{} = \act\Big(\Val{\ell}{i}{\x}{\Ys}\Big) \geq \act\Big( \minW{j} \afteract{j} + \elt{\bias{\ell}}{i} \Big) = \act(\bar{\beforeact{}}) \geq \bar{\afteract{}},
\end{displaymath}
which contradicts our assumption that all elements of $\Se{\ell}{i}$ are
smaller than $\bar{\afteract{}}$. The second property follows analogously. To
see the third property, assume that there exist at least $n_{j,c}$ vectors $\y$
in $\Ys[c]$ such that $\elt{\y}{j} > 0$. Then,
\begin{displaymath}
    \afteract{} = \act\Big(\Val{\ell}{i}{\x}{\Ys}\Big) \geq \act\Big( \elt{\matB{\ell}{c}}{i,j} \sum_{\y \in \Ys[c]} \elt{\y}{j} + \elt{\bias{\ell}}{i} \Big) \geq \act\Big( \elt{\matB{\ell}{c}}{i,j} \cdot n_j \cdot \minX{j} + \elt{\bias{\ell}}{i} \Big) \geq \act( \bar{\beforeact{}}) \geq \bar{\afteract{}},
\end{displaymath}
which again contradicts our assumption that all elements of $\Se{\ell}{i}$ are
smaller than $\bar{\afteract{}}$.

By conditions \ref{cond:monotonic} and \ref{cond:finite-diverges} of the
inductive hypothesis, each $\Se{\ell-1}{j}$ is countable, monotonically
increasing, and either finite or converges to infinity; hence, set ${\{ s \in
\Se{\ell-1}{j} \mid s \leq \bar{\afteract{}} \}}$ is finite. Thus, by the three
properties shown above, if $\elt{\matA{\ell}}{i,j} > 0$, then $\elt{\x}{j}$ can
only take finitely many values; similarly, for all ${c \in \col}$ and ${\y \in
\Ys[c]}$, if ${\elt{\matB{\ell}{c}}{i,j} > 0}$, then, $\elt{\y}{j}$ can only
take finitely many values. Notice also that each $\Ys[c]$ cannot have
infinitely many elements, since it does not contain any vector where all
elements are $0$, and, for each ${1 \leq j \leq \ddim[\ell-1]}$, there exist
fewer than $n_{j,c}$ elements in $\Ys[c]$ whose $j$-th component's value is
positive, and none for which it is negative. Hence, there are only finitely
many values that $\act(z)$ can take. Thus, $\Se{\ell}{i}$ is finite, which
contradicts our initial assumption.

\medskip

Finally, we show condition \ref{cond:s-x}. To this end, we first show that all
elements of $\Se{\ell}{i}$ are in $\X{\ell}{i}$. Consider an arbitrary element
$\alpha \in \Se{\ell}{i}$; by definition, there exists an $(\ell,i)$-triple of
the form $\triple{\x}{\Ys}{z}$ such that $\act(z) = \alpha$. Now, for all ${1
\leq j \leq \ddim[\ell-1]}$, ${c \in \col}$, and ${\y \in \Ys[c]}$, the
definition of an $(\ell,i)$-triple ensures ${\elt{\x}{j} \in \Se{\ell-1}{j}}$
and ${\elt{\y}{j} \in \Se{\ell-1}{j}}$; thus, our inductive hypothesis implies
${\elt{\x}{j} \in \X{\ell-1}{j}}$ and ${\elt{\y}{j} \in \X{\ell-1}{j}}$. But
then, the definition of an $(\ell,i)$-triple ensures ${z =
\Val{\ell}{i}{\x}{\Ys}}$, and the definition of $\X{\ell}{i}$ ensures ${\act(z)
\in \X{\ell}{i}}$, as required.

To prove that each element of $\X{\ell}{i}$ appears in $\Se{\ell}{i}$, consider
arbitrary ${\alpha \in \X{\ell}{i}}$. By Definition~\ref{def:possible-values},
there exists a vector $\x_\alpha$ of dimension $\ddim[\ell-1]$ where
$\elt{\x_\alpha}{j} \in \X{\ell-1}{j}$ for each ${1 \leq j \leq
\ddim[\ell-1]}$, and there also exists a $(\col,\ell-1)$-multiset family
$\Ys_\alpha$ such that ${\elt{\y}{j} \in \X{\ell-1}{j}}$ holds for each $c \in
\col$, each ${\y \in \Ys[c]_\alpha}$, and each ${1 \leq j \leq \ddim[\ell-1]}$;
moreover, ${\act(z_\alpha) = \alpha}$ for ${z_\alpha =
\Val{\ell}{i}{\x_\alpha}{\Ys_\alpha}}$. By induction hypothesis, all elements
in $\X{\ell-1}{j}$ are in $\Se{\ell-1}{j}$. Hence, there exists at least one
finite sequence
${\triple{\s{\ell-1}}{\Ys_\emptyset}{\Val{\ell}{i}{\s{\ell-1}}{\Ys_\emptyset}}
= t_0, \dots, t_K = \triple{\x_\alpha}{\Ys_\alpha}{z_\alpha}}$ such that $t_n$
is a successor of $t_{n-1}$ for each ${1 \leq n \leq K}$. Indeed, each multiset
$\Ys[c]_\alpha$ is finite and, starting from $t_0$, we can reach $t_K$ by, in
each step, changing some vector component to the next element in
$\Se{\ell-1}{j}$ or adding a new vector to some multiset of the
$(\col,\ell-1)$-multiset family. We now show by induction over ${0 \leq n \leq
K}$ the following statement ($\ast$): for each ${t_n =
\triple{\x_n}{\Ys_n}{z_n}}$ in this sequence, some element of $\F{\ell}{i}$
contains a $(\ell,i)$-triple $\triple{\x}{\Ys}{z}$, called a \emph{witness} of
$t_n$, such that
\begin{itemize}[topsep=3pt]
    \item if ${\elt{\matA{\ell}}{i,j} > 0}$, then ${\elt{\x_n}{j} =
    \elt{\x}{j}}$,

    \item for each colour ${c \in \col}$ and each index ${1 \leq j \leq
    \ddim[\ell-1]}$, if ${\elt{\matB{\ell}{c}}{i,j} > 0}$, then multisets
    ${\llbrace \elt{\y}{j} \mid \y \in \Ys[c]_n \mbox{ and } \elt{\y}{j} > 0
    \rrbrace}$ and ${\llbrace \elt{\y}{j} \mid \y \in \Ys[c] \mbox{ and }
    \elt{\y}{j} > 0 \rrbrace}$ are equal.
\end{itemize}
Observe that these properties imply ${z = z_n}$. For the base case,
${\triple{\s{\ell-1}}{\Ys_\emptyset}{\Val{\ell}{i}{\s{\ell-1}}{\Ys_\emptyset}}
\in f_0}$ holds by definition, so $t_0$ is its own witness in $\F{\ell}{i}$.
For the induction step, we assume that ${t_{n-1} =
\triple{\x_{n-1}}{\Ys_{n-1}}{z_{n-1}}}$ with ${0 < n \leq K}$ has a witness in
$\F{\ell}{i}$, and we show that let ${t_n = \triple{\x_n}{\Ys_n}{z_n}}$ then
has a witness in $\F{\ell}{i}$ as well. Let ${t = \triple{\x}{\Ys}{z}}$ be a
witness of $t_{n-1}$ in $\F{\ell}{i}$. We first show that there exists ${m \in
\nnnat}$ such that $f_m$ is defined, $f_{m-1}$ contains $t$, but $f_m$ does not
contain $t$. If $\F{\ell}{i}$ is finite, the last element of $\F{\ell}{i}$ is
empty so the claim clearly holds. Thus, assume that $\F{\ell}{i}$ is infinite.
For the sake of a contradiction, assume that there exists some ${m' \in
\nnnat}$ such that $t$ appears in all elements of $\F{\ell}{i}$ after $f_{m'}$.
By the definition of $\Se{\ell}{i}$, this implies that the elements of
$\Se{\ell}{i}$ after $s_{m'}$ form an infinite sequence that is strictly
monotonic and whose values are always smaller than $\act(z)$; however, this
contradicts condition \ref{cond:finite-diverges}. Thus, there exists ${m \in
\nnnat}$ such that $f_{m-1}$ contains $t$, but $f_m$ does not. The definition
of $\Se{\ell}{i}$ ensures that the $m-1$-th element of $\Se{\ell}{i}$ is
precisely $\act(z_{n-1})$. If $z_n = z_{n-1}$, then the change from $t_{n-1}$
to $t_n$ can only take place in either the $j$-th component of $\x_{n-1}$ for
$j$ such that ${\elt{\matA{\ell}}{i,j} = 0}$, or in the $j$-th component of
some ${\y \in \Ys[c]_{n-1}}$ for some $c \in \col$ with
${\elt{\matB{\ell}{c}}{i,j} = 0}$, so the statement holds since
$\triple{\x}{\Ys}{z}$ is a witness for $t_n$. If ${z_n > z_{n-1}}$, then the
change from $t_{n-1}$ to $t_n$ can only take place in either the $j$-th
component of $\x_{n-1}$ for $j$ such that ${\elt{\matA{\ell}}{i,j} > 0}$, or in
the $j$-th component of some ${\y \in \Ys[c]_{n-1}}$ for some ${c \in \col}$
with ${\elt{\matB{\ell}{c}}{i,j} > 0}$, or by adding a new vector to some
$\Ys[c]_{n-1}$ with the smallest positive value from $\Se{\ell-1}{j}$ in the
$j$-th component, for some $j$ such that ${\elt{\matB{\ell}{c}}{i,j} > 0}$. By
the definition of a witness, both $t_{n-1}$ and $t$ agree on the components of
vectors where the change from $t_{n-1}$ to $t_n$ takes place, and so the same
change can be applied to the witness $\triple{\x}{\Ys}{z}$, leading to a triple
${t' = \triple{\x'}{\Yps}{z'}}$ such that $z' = z_n$ and by definition of
$\F{\ell}{i}$, $t'$ must appear in $f_{m+1}$. Thus, $t'$ is clearly a witness
of $t_n$ in $\F{\ell}{i}$. This concludes the proof of ($\ast$).

Now, ($\ast$) ensures that $\triple{\x_\alpha}{\Ys_\alpha}{z_\alpha}$ has a
witness in $\F{\ell}{i}$, and as we have already shown, there exists some
element $f_m$ of $\F{\ell}{i}$ such that this triple appears in $f_m$ but not
in $f_{m+1}$. But then, the definition of $\Se{\ell}{i}$ ensures that
$\act(z_\alpha)$ is the $m$-th element of $\Se{\ell}{i}$; since
${\act(z_\alpha) = \alpha}$, numner $\alpha$ appears in $\Se{\ell}{i}$, as
desired.
\end{proof}

\minexists*

\begin{proof}
By condition \ref{cond:s-x} of Lemma~\ref{lem:S}, for each ${1 \leq \ell \leq
L}$ and each ${1 \leq i \leq \ddim[\ell]}$, the elements of $\X{\ell}{i}$ are
precisely the elements of the sequence $\Se{\ell}{i}$. By condition
\ref{cond:nonnegative}, all elements in $\Se{\ell}{i}$ are nonnegative, so
${\X{\ell}{i} \subseteq \nnreal}$ holds. Moreover, assume that set
${\X{\ell}{i} \setminus \Xsub{\ell}{i}{\alpha}}$ is infinite; then, by
condition \ref{cond:s-x} of Lemma~\ref{lem:S}, set $\Se{\ell}{i}$ contains
infinitely many numbers that are smaller or equal than $\alpha$. However, by
condition \ref{cond:monotonic} ensures that $\Se{\ell}{i}$ is strictly
monotonically increasing, and so $\alpha$ is an upper bound of the sequence.
This, in turn, contradicts condition \ref{cond:finite-diverges}. Consequently,
set ${\X{\ell}{i} \setminus \Xsub{\ell}{i}{\alpha}}$ is finite.
\end{proof}

\begin{lemma}\label{lem:values-x}
    For each $(\col,\ddim)$-dataset $D$, each ${0 \leq \ell \leq L}$, each
    vector $\vlab{v}{\ell}$ labelling a vertex when $\GNN$ is applied to $D$,
    and each ${1 \leq i \leq \ddim[\ell]}$, it holds that
    ${\elt{\vlab{v}{\ell}}{i} \in \X{\ell}{i}}$.
\end{lemma}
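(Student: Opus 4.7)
The plan is to prove this lemma by a straightforward induction on the layer index $\ell$, directly matching the inductive definition of $\X{\ell}{i}$ against the propagation equation \eqref{eq:GNN-propagation} that determines $\vlab{v}{\ell}$. The lemma is essentially a bookkeeping statement: $\X{\ell}{i}$ was defined precisely to collect all numbers that can appear in position $i$ of any layer-$\ell$ vector produced by $\GNN$ on any canonical encoding, so the proof should verify that the actual values produced on a specific dataset $D$ are among them.

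For the base case ${\ell = 0}$, I would observe that the labelling $\lab[0]$ coincides with the canonical labelling of $D$, so $\elt{\vlab{v}{0}}{i} \in \{0,1\}$ for every vertex $v$ and every ${1 \leq i \leq \ddim[0]}$, which is exactly $\X{0}{i}$ by Definition~\ref{def:possible-values}. For the inductive step, I would fix ${1 \leq \ell \leq L}$, assume the claim for ${\ell - 1}$, and consider an arbitrary vertex $v$ of the canonical encoding of $D$. Let ${\x = \vlab{v}{\ell-1}}$, and define a $(\col,\ell-1)$-multiset family $\Ys$ by setting ${\Ys[c] = \llbrace \vlab{u}{\ell-1} \mid \langle v,u \rangle \in \gE{c} \rrbrace}$ for each ${c \in \col}$. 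The induction hypothesis then gives ${\elt{\x}{j} \in \X{\ell-1}{j}}$ and ${\elt{\y}{j} \in \X{\ell-1}{j}}$ for each ${c \in \col}$, each ${\y \in \Ys[c]}$, and each ${1 \leq j \leq \ddim[\ell-1]}$.

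By the propagation equation \eqref{eq:GNN-propagation}, the $i$-th component of $\vlab{v}{\ell}$ equals
\[
    \elt{\vlab{v}{\ell}}{i} = \act\Big(\elt{\matA{\ell}\x + \sum_{c \in \col} \matB{\ell}{c}\,\maxsum{\aggK{\ell}}(\Ys[c]) + \bias{\ell}}{i}\Big) = \act(\Val{\ell}{i}{\x}{\Ys}),
\]
where the second equality uses the definition of $\Val{\ell}{i}{\x}{\Ys}$. Since $\x$ and $\Ys$ satisfy exactly the conditions required by the inductive clause in Definition~\ref{def:possible-values}, this value lies in $\X{\ell}{i}$, completing the induction.

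I do not expect any significant obstacle here: the entire proof is a one-to-one translation between the inductive definition of $\X{\ell}{i}$ and the recursive propagation of labellings through the GNN. The only point requiring mild care is noticing that the multiset ${\llbrace \vlab{u}{\ell-1} \mid \langle v,u \rangle \in \gE{c} \rrbrace}$ is finite (because the graph $\canenc(D)$ has finitely many vertices), so it is indeed a legitimate $(\col,\ell-1)$-multiset family as required by Definition~\ref{def:possible-values}.
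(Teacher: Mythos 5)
Your proof is correct and follows essentially the same route as the paper's: a straightforward induction on $\ell$ whose base case appeals to the canonical encoding and whose inductive step instantiates the vector $\x = \vlab{v}{\ell-1}$ and the multiset family ${\Ys[c] = \llbrace \vlab{u}{\ell-1} \mid \langle v,u \rangle \in \gE{c} \rrbrace}$ to match the inductive clause of Definition~\ref{def:possible-values}. If anything, your write-up is slightly cleaner in explicitly applying $\act$ to $\Val{\ell}{i}{\x}{\Ys}$ before concluding membership in $\X{\ell}{i}$.
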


\begin{proof}
The proof is by a straightforward induction on ${0 \leq \ell \leq L}$. For the
base case, Definition~\ref{def:canonical} ensures ${\elt{\vlab{v}{0}}{i} \in \{
0,1 \} = \X{0}{i}}$ for each $i$. For the induction step, consider some $1 \leq
\ell \leq L$ and $1 \leq i \leq \ddim[\ell]$ and notice that the value of
$\elt{\vlab{v}{\ell}}{i}$ is given by expression \eqref{eq:lambda-layer}.
Consider the triple $\triple{\x}{\Ys}{z}$ where ${\x = \vlab{v}{\ell-1}}$,
$\Ys$ is the multiset family such that, for each $c \in \col$, $\Ys[c]$ is the
multiset ${\llbrace \vlab{u}{\ell} \mid \langle v, u \rangle \in \gE{c}
\rrbrace}$, and ${z = \Val{\ell}{i}{\x}{\Ys}}$. By the inductive hypothesis,
${\elt{\x}{j} \in \X{\ell-1}{j}}$, and ${\elt{\y}{j} \in \X{\ell-1}{j}}$ for
each ${c \in \col}$ and each ${\y \in \Ys[c]}$. Finally, by comparing
\eqref{eq:lambda-layer} with the definition of $\Val{\ell}{i}{\x}{\Ys}$ in
Definition~\ref{def:possible-values}, we can see that ${z =
\elt{\vlab{v}{\ell}}{i}}$. By the definition of $\X{\ell}{i}$, then ${z \in
\X{\ell}{i}}$, and thus ${\elt{\vlab{v}{\ell}}{i} \in \X{\ell}{i}}$, as desired.
\end{proof}

\boundedequivalence*

\begin{proof}
Consider an arbitrary $(\col,\ddim)$-dataset $D$ and let ${\gG = \langle \gV,
\{\gE{c} \}_{c \in \col}, \lab \rangle}$ be the canonical encoding of $D$. Let
${\lab[0], \dots, \lab[L]}$ and ${\lab[0]', \dots, \lab[L]'}$ be the functions
labelling the vertices of $\gG$ induced by applying $\GNN$ and $\GNN'$ to $D$,
respectively.

We first prove by induction on $0 \leq \ell \leq L$ that for all ${v \in \gV}$
and ${1 \leq i \leq \ddim[\ell]}$, it holds that
${\elt{\vlab{v}{\lab[\ell]}}{i} \geq \elt{\vlab{v}{\lab[\ell]'}}{i}}$. The base
case holds trivially since ${\lab[0] = \lab[0]'}$. For the induction step,
consider ${1 \leq \ell \leq L}$, ${v \in \gV}$, and ${1 \leq i \leq
\ddim[\ell]}$, and suppose that both claims hold for $\ell-1$. The formulas for
$\elt{\vlab{v}{\lab[\ell]}}{i}$ and $\elt{\vlab{v}{\lab[\ell]'}}{i}$ are given
by equations \eqref{eq:lambda-layer} and \eqref{eq:lambdap-layer}.
\begin{align}
    \elt{\vlab{v}{\lab[\ell]}}{i} =     & \; \act \left( \sum_{j= 1}^{\ddim[\ell-1]} \elt{\matA{\ell}}{i,j} \elt{\vlab{v}{\lab[\ell-1]}}{j}  + \sum_{c \in \col}  \sum_{j=1}^{\ddim[\ell-1]} \elt{\matB{c}{\ell}}{i,j} \: \maxsum{\aggK{\ell}}    ( \llbrace \elt{\vlab{u}{\lab[\ell]}}{j}  \mid \langle v, u \rangle \in \gE{c} \rrbrace ) + \elt{\bias{\ell}}{i} \right) \label{eq:lambda-layer} \\
    \elt{\vlab{v}{\lab[\ell]'}}{i} =    & \; \act \left( \sum_{j= 1}^{\ddim[\ell-1]} \elt{\matA{\ell}}{i,j} \elt{\vlab{v}{\lab[\ell-1]'}}{j} + \sum_{c \in \col}  \sum_{j=1}^{\ddim[\ell-1]} \elt{\matB{c}{\ell}}{i,j} \: \maxsum{\capacity{\ell}}( \llbrace \elt{\vlab{u}{\lab[\ell]'}}{j} \mid \langle v, u \rangle \in \gE{c} \rrbrace ) + \elt{\bias{\ell}}{i} \right) \label{eq:lambdap-layer}
\end{align}
In both equations, all summands except $\elt{\bias{\ell}}{i}$ are nonnegative:
the weights of $\GNN$ and $\GNN'$ are nonnegative by
Definition~\ref{def:max-sum-GNN}, and the feature vectors labelling vertices of
$\gV$ are also nonnegative by Theorem~\ref{thm:min-exists} and
Lemma~\ref{lem:values-x}.
Note that the inductive hypothesis ensures ${\elt{\vlab{u}{\lab[\ell-1]}}{j}
\geq \elt{\vlab{u}{\lab[\ell-1]'}}{j}}$ for all ${u \in \gV}$ and ${1 \leq j
\leq \ddim[\ell-1]}$. Furthermore, Algorithm \ref{alg:capacity} ensures that
$\capacity{\ell} \leq \aggK{\ell}$. Since
the weights of $\matA{\ell}$ and each $\matB{\ell}{c}$ are nonnegative,
subtracting \eqref{eq:lambdap-layer} from \eqref{eq:lambda-layer} yields a
positive value, and so $\elt{\vlab{v}{\lab[\ell]}}{i} \geq
\elt{\vlab{v}{\lab[\ell]'}}{i}$. This concludes the proof by induction.

Now let $\ell_\sta$ be the largest ${1 \leq \ell \leq L}$ such that either all
elements of $\matA{\ell}$ and $\matB{\ell}{c}$ for each $c \in \col$ are $0$,
or ${\X{\ell-1}{j} = \{ 0 \}}$ for each ${1 \leq j \leq \ddim[\ell-1]}$; if
such $\ell$ does not exist, let ${\ell_\sta = 0}$. To complete the proof of the
theorem, we prove by induction on ${\ell_\sta \leq \ell \leq L}$ that, for all
${v \in \gV}$ and ${1 \leq i \leq \ddim[\ell]}$, exactly one of the following
two properties holds:
\begin{itemize}[topsep=3pt,itemsep=3pt]
    \item ${\elt{\vlab{v}{\lab[\ell]}}{i} = \elt{\vlab{v}{\lab[\ell]'}}{i}}$ or

    \item ${\elt{\vlab{v}{\lab[\ell]}}{i} > \elt{\vlab{v}{\lab[\ell]'}}{i} \geq \afteract{\ell}}$.
\end{itemize}
For the base case, if ${\ell_\sta = 0}$, then the first property holds
trivially since ${\lab[0] = \lab[0]'}$. If ${\ell_\sta > 0}$, consider an
arbitrary ${1 \leq i \leq \ddim[\ell]}$. We have two possibilities. First, if
all elements of $\matA{\ell_\sta}$ and $\matB{\ell_\sta}{c}$ for each $c \in
\col$ are all zero, equations \eqref{eq:lambda-layer} and
\eqref{eq:lambdap-layer} and the fact that the matrices of $\GNN$ and $\GNN'$
are the same ensure that ${\elt{\vlab{v}{\lab[\ell_\sta]}}{i} =
\elt{\vlab{v}{\lab[\ell_\sta]'}}{i} = \act(\elt{\bias{\ell_\sta}}{i})}$.
Second, if $\X{\ell-1}{j} = \{ 0\}$ for each ${1 \leq j \leq \ddim[\ell-1]}$;
equation \eqref{eq:lambda-layer} ensure ${\elt{\vlab{v}{\lab[\ell_\sta]}}{i} =
\act(\elt{\bias{\ell_\sta}}{i})}$; moreover, we have shown that
${\elt{\vlab{v}{\lab[\ell_\sta]}}{i} \geq
\elt{\vlab{v}{\lab[\ell_\sta']}}{i}}$, and since the elements in the sum in
\eqref{eq:lambdap-layer} other than $\elt{\bias{\ell}}{i}$ are not negative, we
again have ${\elt{\vlab{v}{\lab[\ell_\sta]'}}{i} =
\act(\elt{\bias{\ell_\sta}}{i})}$. Hence, ${\elt{\vlab{v}{\lab[\ell]}}{i} =
\elt{\vlab{v}{\lab[\ell]'}}{i}}$ and the first property holds.

For the induction step, we consider arbitrary layer ${\ell_\sta < \ell \leq
L}$, vertex ${v \in \gV}$, and position ${1 \leq i \leq \ddim[\ell]}$. We
assume that ${\elt{\vlab{v}{\lab[\ell]}}{i} \neq
\elt{\vlab{v}{\lab[\ell]'}}{i}}$; together with ${\elt{\vlab{v}{\lab[\ell]}}{i}
\geq \elt{\vlab{v}{\lab[\ell]'}}{i}}$, this implies
${\elt{\vlab{v}{\lab[\ell]}}{i} > \elt{\vlab{v}{\lab[\ell]'}}{i}}$, so we next
show ${\elt{\vlab{v}{\lab[\ell]'}}{i} \geq \afteract{\ell}}$. Since ${\ell >
\ell_\sta}$, Algorithm \ref{alg:capacity} defines $\minX{\ell}$,
$\beforeact{\ell}$, $\minW{\ell}$, $\minB{\ell}$, $\capacity{\ell}$,
$\afteract{\ell}$, and $\afteract{\ell-1}$. Furthermore, let ${\aggK{\ell}' =
\lceil \frac{\beforeact{\ell} - \minB{\ell}}{\minW{\ell} \cdot \minX{\ell}}
\rceil}$, so ${\capacity{\ell} = \min(\aggK{\ell}, \aggK{\ell}')}$. We have
already shown that ${\elt{\vlab{u}{\lab[\ell-1]}}{j} \geq
\elt{\vlab{u}{\lab[\ell-1]'}}{j}}$ for all ${u \in \gV}$ and ${1 \leq j \leq
\ddim[\ell-1]}$. We next consider the following four possibilities.

\smallskip

\textbf{Case 1.} There exists ${1 \leq j \leq \ddim[\ell-1]}$ such that
${\elt{\matA{\ell}}{i,j} > 0}$ and $\elt{\vlab{v}{\lab[\ell-1]'}}{j} \geq
\afteract{\ell-1}$. Since all summands in \eqref{eq:lambdap-layer} except
$\elt{\bias{\ell}}{i}$ are nonnegative, the argument of $\act$ in
\eqref{eq:lambdap-layer} is greater or equal to $ \elt{\matA{\ell}}{i,j}
\elt{\vlab{v}{\lab[\ell-1]'}}{j} + \elt{\bias{\ell}}{i} \geq \minW{\ell}
\afteract{\ell-1} + \minB{\ell} = \beforeact{\ell}$; since
$\act(\beforeact{\ell}) \geq \afteract{\ell}$ and $\act$ is monotonically
increasing, we have $\elt{\vlab{v}{\lab[\ell]'}}{i} \geq \alpha_\ell$, as
desired.

\smallskip

\textbf{Case 2.} Case 1 does not hold and ${\capacity{\ell} = 0}$. If
${\capacity{\ell} = \aggK{\ell} = 0}$, the sum over ${c \in \col}$ in both
\eqref{eq:lambda-layer} and \eqref{eq:lambdap-layer} is always equal to $0$.
Furthermore, since case 1 does not hold, the induction hypothesis ensures that
for any $1 \leq j \leq \ddim[\ell-1]$ such that $A^{\ell}_{i,j} > 0$, we have
${\elt{\vlab{v}{\lab[\ell-1]'}}{j} = \elt{\vlab{v}{\lab[\ell-1]}}{j}}$. Thus,
it follows that $\elt{\vlab{v}{\lab[\ell]}}{i} = \elt{\vlab{v}{\lab[\ell]'}}{i}
$. If $\capacity{\ell} = \aggK{\ell}' = 0$, Algorithm \ref{alg:capacity}
ensures that $\beforeact{\ell} = \minB{\ell}$. Then, since all summands in the
argument of $\act$ other than $\elt{\bias{\ell}}{i}$ are nonnegative, we have
that the argument of $\act$ in \eqref{eq:lambdap-layer} is greater or equal
than $\elt{\bias{\ell}}{i} \geq \minB{\ell} = \beforeact{\ell}$, and since
$\act$ is monotonically increasing and $\act(\beforeact{\ell}) \geq
\afteract{\ell}$, we have that $\elt{\vlab{v}{\lab[\ell]'}}{i} \geq
\alpha_\ell$, as desired.

\smallskip

\textbf{Case 3.} $\capacity{\ell} > 0$ and there exist ${c \in \col}$, ${1 \leq
j \leq \ddim[\ell-1]}$, and $\langle v, u \rangle \in \gE{c}$ such that
$\elt{\matB{c}{\ell}}{i,j} > 0$ and $\elt{\vlab{u}{\lab[\ell-1]'}}{j} \geq
\afteract{\ell-1}$. All summands in the argument of $\act$ in
\eqref{eq:lambdap-layer} except $\elt{\bias{\ell}}{i}$ are nonnegative and
${\maxsum{\capacity{\ell}}(\llbrace \elt{\vlab{u}{\lab[\ell-1]'}}{j} \mid
\langle v, w \rangle \in \gE{c} \rrbrace) \geq
\elt{\vlab{u}{\lab[\ell-1]'}}{j}}$ due to ${\capacity{\ell} > 0}$, so the
argument of $\act$ in \eqref{eq:lambdap-layer} is greater or equal to
${\elt{\matB{c}{\ell}}{i,j} \elt{\vlab{u}{\lab[\ell-1]'}}{j} +
\elt{\bias{\ell}}{i} \geq \minW{\ell} \afteract{\ell-1} + \elt{\bias{\ell}}{i}
= \beforeact{\ell}}$. Since ${\act(\beforeact{\ell}) \geq \afteract{\ell}}$ and
$\act$ is monotonically increasing, we have ${\elt{\vlab{v}{\lab[\ell]'}}{i}
\geq \alpha_\ell}$.

\smallskip

\textbf{Case 4.} None of cases 1--3 hold. Since case 1 does not hold, the
induction hypothesis ensures that for any $1 \leq j \leq \ddim[\ell-1]$ such
that $A^{\ell}_{i,j} > 0$, we have $\elt{\vlab{v}{\lab[\ell-1]'}}{j} =
\elt{\vlab{v}{\lab[\ell-1]}}{j} $. Furthermore, since case 2 does not hold, we
have ${\capacity{\ell} > 0}$. Finally, case 3 does not hold, so, for each ${c
\in \col}$ and ${1 \leq j \leq \ddim[\ell-1]}$ such that
${\elt{\matB{\ell}{c}}{i,j} > 0}$, we have $\elt{\vlab{u}{\lab[\ell-1]'}}{j} =
\elt{\vlab{u}{\lab[\ell-1]}}{j}$ for each $u$ such that ${\langle v, u \rangle
\in \gE{c}}$, and so ${\llbrace \elt{\vlab{u}{\lab[\ell]}}{j} \mid \langle v, u
\rangle \in \gE{c} \rrbrace = \llbrace \elt{\vlab{u}{\lab[\ell]'}}{j} \mid
\langle v, u \rangle \in \gE{c} \rrbrace}$ holds. By these observations, our
assumption that ${\elt{\vlab{v}{\lab[\ell]}}{i} \neq
\elt{\vlab{v}{\lab[\ell]'}}{i}}$, and equations \eqref{eq:lambda-layer} and
\eqref{eq:lambdap-layer}, then ${\capacity{\ell} = \aggK{\ell}' < \aggK{\ell}}$
and there must exist at least one ${1 \leq j \leq \ddim[\ell-1]}$ such that
${\elt{\matB{\ell}{c}}{i,j} > 0}$ and the number of distinct $u$ such that
${\langle v, u \rangle \in \gE{c}}$ and ${\elt{\vlab{u}{\lab[\ell-1]}}{j} > 0}$
is greater than $\capacity{\ell}$. For such $j$, and since all summands in the
argument of $\sigma$ in \eqref{eq:lambdap-layer} except $\elt{\bias{\ell}}{i}$
are nonnegative, it holds that the argument is greater or equal than
\begin{align}
    \elt{\matB{\ell}{c}}{i,j} \maxsum{\capacity{\ell}} (\llbrace \elt{\vlab{u}{\lab[\ell]'}}{j} \mid \langle v, u \rangle \in \gE{c} \rrbrace)+ \elt{\bias{\ell}}{i}.   \label{eq:single-j}
\end{align}
However, as we have already observed, we know that there exist at least
$\capacity{\ell}$ elements different from zero in the multiset in
\eqref{eq:single-j}, and Lemma~\ref{lem:values-x} and the definitions of
$\minX{\ell}$ amd $\X{\ell-1}{j}$ ensure that each of these elements is greater
or equal than $\minX{\ell}$. Thus, the value in \eqref{eq:single-j} is greater
or equal than $ \minW{\ell} \capacity{\ell} \minX{\ell} + \minB{\ell} \geq
\beforeact{\ell}$. However, ${\act(\beforeact{\ell}) \geq \afteract{\ell}}$
since $\act$ is monotonic, we have ${\elt{\vlab{v}{\lab[\ell]}}{i} \geq
\afteract{\ell}}$, which concludes the proof.

To complete the proof of the theorem, we consider an arbitrary term $t$ in $D$
and an arbitrary unary predicate $U_i$ in the $(\col,\ddim)$-signature, where
$1 \leq i \leq \ddim$, and we show that $U_i(t) \in T_{\GNN}(D)$ if and only if
$U_i(t) \in T_{\GNN'} (D)$; this implies the theorem since $T_{\GNN}(D)$ and
$T_{\GNN'} (D)$ can only contain atoms of this form. By definition of the
canonical encoder/decoder scheme and the definitions of both $\GNN$ and
$\GNN'$, it suffices to show that
\begin{equation}
    \cls(\elt{\vlab{v}{\lab[L]}}{i}) = 1 \mbox{ if and only if } \cls(\elt{\vlab{v}{\lab[L]'}}{i}) = 1,   \label{eq:equivalence}
\end{equation}
for $v$ the vertex of the form $v_t$ in $\gV$. By the first result shown above
by induction, we have that $\elt{\vlab{v}{\lab[L]}}{i} \geq
\elt{\vlab{v}{\lab[L]'}}{i}$, and the second result ensures that either
$\elt{\vlab{v}{\lab[L]}}{i} = \elt{\vlab{v}{\lab[L]'}}{i}$ or
$\elt{\vlab{v}{\lab[L]'}}{i} \geq \afteract{L}$, since $L \geq \ell_\sta$. If
$\elt{\vlab{v}{\lab[L]}}{i} = \elt{\vlab{v}{\lab[L]'}}{i}$,
\eqref{eq:equivalence} holds trivially. If $\elt{\vlab{v}{\lab[L]'}}{i} \geq
\afteract{L}$, then the definition of $\afteract{L}$ in Algorithm
\ref{alg:capacity} ensures that $\cls\elt{(\vlab{v}{\lab[L]'}}{i})=1$, and
since $\elt{\vlab{v}{\lab[L]}}{i} \geq \elt{\vlab{v}{\lab[L]'}}{i}$, then
$\cls(\elt{\vlab{v}{\lab[L]}}{i}) = 1$, so \eqref{eq:equivalence} holds.
\end{proof}

For a dataset $D$, let $\tms{D}$ be the set containing each term $t$ such that
$D$ contains an atom of the form $U(t)$, $\edg{c}(t,s)$, or $\edg{c}(s,t)$, for
$U$ and $\edg{c}$ arbitrary unary and binary predicates, respectively, and $s$
an arbitrary term. An \emph{isomorphism} from a $(\col,\ddim)$-dataset $D$ to a
$(\col,\ddim)$-dataset $D'$ is an injective mapping $h$ of terms to terms that
is defined (at least) on all $\tms{D}$ and satisfies ${h(D) = D'}$, where
$h(D)$ is the dataset obtained by replacing each fact of the form $U(t)$ in $D$
with $U(h(t))$, and each fact of the form $\edg{c}(t,s) \in D$ with
$\edg{c}(h(t),h(s))$.

\begin{lemma}\label{lem:invariance}
    For all $(\col,\ddim)$ datasets $D$ and $D'$, the following properties holds:
    \begin{enumerate}[start=1,label={(M\arabic*)},leftmargin=30pt]
        \item\label{prop:invariance}
        each isomorphism from $D$ to $D'$ is also an isomorphism from
        $T_\GNN(D)$ to $T_\GNN(D')$; and

        \item\label{prop:monotonicity}
        ${D \subseteq D'}$ implies ${T_\GNN(D) \subseteq T_\GNN(D')}$.
    \end{enumerate}
\end{lemma}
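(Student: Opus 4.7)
My plan is to prove \ref{prop:invariance} and \ref{prop:monotonicity} by two separate inductions over the GNN layers, both exploiting the fact that the propagation rule~\eqref{eq:GNN-propagation} reads only the current vertex's feature, the multiset of neighbour features in each colour, and the fixed parameters of $\GNN$.

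For \ref{prop:invariance}, I will take an isomorphism $h$ from $D$ to $D'$ and observe that it lifts to a graph isomorphism between $\canenc(D)$ and $\canenc(D')$ sending $v_t$ to $v_{h(t)}$: the feature vectors at these vertices coincide because $h$ preserves unary facts, and the $c$-coloured edges are in bijection because $h$ preserves binary facts. A routine induction on $\ell$, applying~\eqref{eq:GNN-propagation}, then yields that the label of $v_t$ in layer $\ell$ under $\canenc(D)$ equals the label of $v_{h(t)}$ in layer $\ell$ under $\canenc(D')$. Applying $\cls$ at layer $L$ and decoding then gives $T_\GNN(D') = h(T_\GNN(D))$, so $h$ is an isomorphism from $T_\GNN(D)$ to $T_\GNN(D')$.

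For \ref{prop:monotonicity}, let $\lab[0],\dots,\lab[L]$ and $\lab[0]',\dots,\lab[L]'$ denote the labellings induced by propagating $\GNN$ through $\canenc(D)$ and $\canenc(D')$, respectively, following the convention of the proof of Theorem~\ref{thm:bounded-equivalence}. The binary facts in $T_\GNN(D)$ are literally the binary facts of $D$ (canonical encoding and decoding preserve edges verbatim), so the binary case is immediate from $D \subseteq D'$. For unary facts, the core claim I will prove by induction on $\ell$ is that for every $t \in \tms{D}$ and every index $i$,
\[
    \elt{\vlab{v_t}{\lab[\ell]}}{i} \;\leq\; \elt{\vlab{v_t}{\lab[\ell]'}}{i}.
\]
The base case $\ell = 0$ is immediate because $U_i(t) \in D$ implies $U_i(t) \in D'$. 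The inductive step will use three ingredients: nonnegativity of all entries of $\matA{\ell}$ and $\matB{\ell}{c}$ (Definition~\ref{def:max-sum-GNN}), nonnegativity of all feature values (established by Lemma~\ref{lem:values-x} together with the range of $\act$ being $\nnreal$, and by construction at layer $0$), and monotonicity of $\act$.

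The key sublemma to isolate is the monotonicity of $\maxsum{\aggK{\ell}}$ on neighbour multisets: for each colour $c$ and each component $j$, I will show that the multiset of $j$-th components of the $c$-neighbour labels of $v_t$ in $\canenc(D)$ is pointwise dominated, under the induction hypothesis, by a submultiset of the analogous multiset in $\canenc(D')$, since every $c$-neighbour of $v_t$ in $\canenc(D)$ remains a $c$-neighbour of $v_t$ in $\canenc(D')$ (edges are preserved by $D \subseteq D'$), and its $j$-th label component can only have grown. Then $\maxsum{k}$ is monotone under elementwise increase of inputs and, because all values are nonnegative, also monotone under enlargement of the multiset; combining the two monotonicities yields the sublemma. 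Plugging this into~\eqref{eq:GNN-propagation}, the nonnegative matrices and the monotone $\act$ complete the inductive step. Finally, since $\cls$ is a threshold function, $\cls(a) = 1$ and $a \leq b$ force $\cls(b) = 1$, so $U_i(t) \in T_\GNN(D)$ implies $U_i(t) \in T_\GNN(D')$.

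I expect the $\maxsum{k}$ sublemma to be the main technical obstacle, because one must simultaneously handle two sources of monotone growth — pointwise label increases arising from the induction hypothesis, and multiset enlargement arising from additional neighbours permitted by $D' \supseteq D$ — and reconcile them with the fact that $\maxsum{k}$ sums only the top $k$ elements. Once this sublemma is phrased correctly, the remainder of the proof is a direct application of monotone composition of the operations appearing in~\eqref{eq:GNN-propagation}.
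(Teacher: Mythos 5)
Your proposal is correct and follows essentially the same route as the paper's proof: property \ref{prop:invariance} via the induced graph isomorphism and structure-independence of GNN propagation, and property \ref{prop:monotonicity} via a layer-by-layer induction establishing $\elt{\vlab{v}{\lab[\ell]}}{i} \leq \elt{\vlab{v}{\lab[\ell]'}}{i}$ using nonnegative weights, nonnegative feature values, and monotonicity of $\act$ and $\cls$. Your explicit isolation of the $\maxsum{k}$ sublemma — that enlargement monotonicity of the neighbour multiset genuinely requires nonnegativity of the aggregated values — is a point the paper's proof passes over quickly, but it is the same argument.
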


\begin{proof}
It is straightforward to see that property \ref{prop:invariance} holds: for any
two $(\col,\ddim)$-datasets, an isomorphism $h$ from $D$ to $D'$ induces a
bijective mapping between the vertices of $\canenc{D}$ and $\canenc{D'}$;
moreover, the result of applying $\GNN$ to a $(\col,\ddim)$-graph depends only
on the graph structure and not on the vertex names, so it is straightforward to
show that the vectors labelling the corresponding vertices are identical.

To see that property \ref{prop:monotonicity} holds, consider arbitrary datasets
$D$ and $D'$ such that ${D \subseteq D'}$. Let ${\gG = \langle \gV, \{ \gE{c}
\}_{c \in \col}, \lab \rangle}$ and ${\gG' = \langle \gV', \{ \gEp{c} \}_{c \in
\col}, \lab' \rangle}$ be the canonical encodings of $D$ and $D'$,
respectively, and ${\lab[0], \dots, \lab[L]}$ and ${\lab[0]', \dots, \lab[L]'}$
be the functions labelling the vertices of $\gG$ and $\gG'$ when $\GNN$ is
applied to these graphs. By a straightforward induction on ${0 \leq \ell \leq
L}$ we show that ${\elt{\vlab{v}{\lab[\ell]}}{i} \leq
\elt{\vlab{v}{\lab[\ell]'}}{i}}$ holds for each vertex ${v \in \gV}$ and each
${1 \leq i \leq \ddim[\ell]}$. The base case for ${\ell = 0}$ follows
immediately from the canonical encoding and the fact that ${D \subseteq D'}$.
For the induction step, the canonical encoding and ${D \subseteq D'}$ imply
${\gE{c} \subseteq \gEp{c}}$. The values of $\elt{\vlab{v}{\lab[\ell]}}{i}$ and
$\elt{\vlab{v}{\lab[\ell]'}}{i}$ are computed by equation
\eqref{eq:GNN-propagation}. Now by the inductive hypothesis,
${\elt{\vlab{u}{\lab[\ell-1]}}{j} \leq \elt{\vlab{u}{\lab[\ell]'}}{j}}$ holds
for each ${u \in \gV}$ and ${1 \leq j \leq \ddim[\ell-1]}$, which ensures
\begin{displaymath}
    \maxsum{\aggK{\ell}}(\llbrace \elt{\vlab{u}{\lab[\ell-1]}}{j} \mid \langle v, u \rangle \in \gE{c} \rrbrace) \leq \maxsum{\aggK{\ell}}(\llbrace \elt{\vlab{u}{\lab[\ell-1]'}}{j} \mid \langle v, u \rangle \in \gEp{c} \rrbrace).
\end{displaymath}
All elements of $\matA{\ell}$ and all $\matB{\ell}{c}$ with ${c \in \col}$ are
nonnegative, and $\act$ is monotonically increasing, which implies
${\elt{\vlab{v}{\lab[\ell]}}{i} \leq \elt{\vlab{v}{\lab[\ell]'}}{i}}$. Finally,
$\cls$ is a step function, so ${\cls(\elt{\vlab{v}{\lab[\ell]}}{i}) \leq
\cls(\elt{\vlab{v}{\lab[\ell]'}}{i})}$ holds as well, which ensures ${T_\GNN(D)
\subseteq T_\GNN(D')}$.
\end{proof}

\capturedruletest*

\begin{proof}
If there exists a substitution ${\nu : V \to S}$ such that ${H\nu \in T_r(A
\nu)}$ but ${H\nu \notin T_\GNN(A\nu)}$, then by definition $T_\GNN$ does not
capture $r$. To conclude the proof of the lemma, it only remains to show the
converse implication: if ${H\nu \in T_\GNN(A\nu)}$ for each substitution ${\nu
: v \to S}$ such that ${H\nu \in T_r(A\nu)}$, then $T$ captures $r$. To this
end, we consider an arbitrary $(\col,\ddim)$-dataset $D$, and we prove that
${T_r(D) \subseteq T_\GNN(D)}$. If $T_r(D)$ is empty, then the claim holds
vacuously, so suppose $T_r(D) \neq \emptyset$. Consider an arbitrary element
$\alpha$ in $T_r(D)$; clearly, $\alpha$ is of the form $H\mu$ for some
substitution $\mu$ such that ${A\mu \subseteq D}$ and ${H\mu \in T_r(A\mu)}$.
Let $h$ be an injective mapping from $\tms{A \mu}$ to the constants in $S$;
such a mapping exists because the body of $r$ contains at most $|V|$ variables,
and so $\tms{A \mu}$ contains at most $|V|$ terms. Then, ${\nu = h \circ \mu}$
is a substitution mapping all variables in $r$ to constants in $|V|$. Mapping
$h$ is injective, rule $r$ is constant-free, and ${H\mu \in T_r(A\mu)}$, so the
semantics of Datalog rule application ensure that ${h(H\mu) \in T_r(h(A\mu))}$,
and so ${H\nu \in T_r(A\nu)}$. The latter implies ${H\nu \in T_\GNN(A\nu)}$ by
the lemma assumption. Moreover, $h$ is an isomorphism from $A\mu$ to $A\nu$, so
property \ref{prop:invariance} of Lemma \ref{lem:invariance} implies ${H\mu \in
T_\GNN(A\mu)}$. Finally, property \ref{prop:monotonicity} of
Lemma~\ref{lem:invariance} and ${A\mu \subseteq D}$ imply ${\alpha = H\mu \in
T_\GNN(D)}$, as required.
\end{proof}

\equivalence*

\begin{proof}
We prove the theorem by showing that ${T_\GNN(D) = T_{\Prog_\GNN}(D)}$ holds
for each $(\col,\delta)$-dataset $D$. GNN $\GNN$ captures every rule in
$\Prog_\GNN$ and thus ${T_r(D) \subseteq T_\GNN(D)}$ for each ${r \in
\Prog_\GNN}$; since ${T_{\Prog_\GNN}(D) = \bigcup_{r \in \Prog_\GNN} T_r(D)}$,
we have ${T_{\Prog_\GNN}(D) \subseteq T_\GNN(D)}$.

To prove ${T_\GNN(D) \subseteq T_{\Prog_\GNN}(D)}$, we consider an arbitrary
fact ${\alpha \in T_\GNN(D)}$, and we construct a ${(L,|\col| \cdot \ddim[\GNN]
\cdot \capacity{\GNN})}$-tree-like rule $r$ such that ${\alpha \in T_r(D)}$ and
$r$ is captured by $T_\GNN$, which together imply $\alpha \in
T_{\Prog_\GNN}(D)$. To find $r$, we consider the GNN $\GNN'$ obtained from
$\GNN$ by replacing $\aggK{\ell}$ with $\capacity{\ell}$ for each ${1 \leq \ell
\leq L}$. Theorem \ref{thm:bounded-equivalence} ensures $T_\GNN(D) =
T_{\GNN'}(D)$, and so ${\alpha \in T_{\GNN'}(D)}$. Let ${\gG = \langle \gV, \{
\gE{c} \}_{c \in \col}, \lab \rangle}$ be the canonical encoding of $D$, and
let ${\lab[0], \dots, \lab[L]}$ be the functions labelling the vertices of
$\gG$ when $\GNN'$ is applied to it. We next construct an atom $H$, a
conjunction $\Gamma$, a substitution $\nu$ from the variables in $\Gamma$ to
$\tms{D}$, a graph $U$ (without vertex labels) with fresh vertices not
occurring in $\gG$ of the form $u_x$ for $x$ a variable and edges with colours
in $\col$, and mappings ${M_{c,\ell,j} : U \to 2^{\gV}}$ for each ${c \in
\col}$, ${1 \leq \ell \leq L}$, and ${1 \leq j \leq \ddim[\ell-1]}$. We also
assign to each vertex in $U$ a level between $0$ and $L$, and we identify a
single vertex from $U$ as the \emph{root} vertex. In the rest of this proof, we
use letters $t$ and $s$ for terms in $\tms{D}$, letters $x$ and $y$ for
variables, letters $v$, $w$ for the vertices in $\gV$, and (possibly indexed)
letter $u$ for the vertices in $U$. Our construction is by induction from level
$L$ down to level $1$. The base case defines a vertex of level $L$. Then, for
each $1 \leq \ell \leq L$, the induction step considers the vertices of level
$\ell$ and defines new vertices of level $\ell-1$.

We initialise $\Gamma$ as the empty conjunction, and we initialise $\nu$ and
each $M_{c,\ell,j}$ as the empty mappings. For the base case, we note that
$\alpha$ must be of the form $U_i(t)$, and so $\gV$ contains a vertex $v_t$. We
introduce a fresh variable $x$, and define ${\nu(x) = t}$; we define ${H =
U_i(x)}$; we introduce vertex $u_x$ of level $L$; and we make $u_x$ the root
vertex. Finally, we extend $\Gamma$ with atom $U(x)$ for each ${U(t) \in D}$.
For the induction step, consider $1 \leq \ell \leq L$ and assume that all
vertices of level greater than $\ell$ have been already defined. We then
consider each vertex of the form $u_x$ of level $\ell$. Let ${t = \nu(x)}$. For
each colour $c \in \col$, each layer $1 \leq \ell' < \ell$, and each dimension
$j \in \{ 1, \dots, \ddim[\ell'-1] \}$, let
\begin{align}
    M_{c,\ell',j}(u_x) = \left\{ w \mid \langle v_t, w \rangle \in \gE{c} \text{ and } \elt{\vlab{w}{\lab[\ell']}}{j} \text{ contributes to the result of } \maxsum{\capacity{\ell'}}\big(\llbrace \elt{\vlab{w}{\lab[\ell']}}{j} \mid \langle v_t, w \rangle \in \gE{c} \rrbrace\big) \right\}.   \label{eq:sum-neighbour}
\end{align}
At least one such set exists, but it may not be unique; however, any set
satisfying \eqref{eq:sum-neighbour} can be chosen. Each vertex of
$M_{c,\ell',j}(u_x)$ must be of the form $v_{s_n}$ for some term $s_n \in
\tms{D}$, where $s_n \neq s_m$ for all $1 \leq n < m \leq
|M_{c,\ell',j}(u_x)|$. We then introduce a fresh variable $y_n$ and define
${\nu(y_n) = s_n}$; we introduce a vertex $u_{y_n}$ of level ${\ell-1}$ and an
edge $\edg{c}(u_x,u_{y_n})$ to $U$; and we append to $\Gamma$ the conjunction
\begin{equation}
    \bigwedge_{n=1}^{|W|} \Big( \edg{c}(x,y_n) \wedge B_D(y_n) \Big) \wedge \bigwedge_{1 \leq n < m \leq |W|} y_n \noteq y_m,   \label{eq:append-gamma}
\end{equation}
where ${W = M_{c,\ell',j}(u_x)}$ and $B_D(y_n)$ is the conjunction consisting
of an atom $U(y_n)$ for each ${U(s_n) \in D}$. Since each $M_{c,\ell,j}(u_x)$
contains at most $\capacity{\ell'}$ elements, this step adds at most $|\col|
\cdot \ddim[\ell'-1] \cdot \capacity{\ell'} \cdot \ell'$ new successors of
$u_x$. This completes our inductive construction. At this point, $H = U_i(x)$
and $\Gamma$ is a $(L,|\col| \cdot \ddim[\GNN] \cdot
\capacity{\GNN})$-tree-like formula for $x$. Thus, rule ${H \leftarrow \Gamma}$
is a $(L,|\col| \cdot \ddim[\GNN] \cdot \capacity{\GNN})$-tree-like rule.
Furthermore, the construction of $\nu$ ensures $D \models \Gamma \nu$ so $H \nu
\in T_r(D)$, but $H \nu = \alpha$, so $\alpha \in T_r(D)$, as required.

To complete the proof, we next show that $r$ is captured by $T_\GNN$, which is
equivalent to showing that $r$ is captured by $T_{\GNN'}$. To do this, we
consider an arbitrary dataset $D'$ and an arbitrary ground atom $\alpha'$ such
that $\alpha' \in T_r(D')$. This implies that there exists some substitution
$\nu'$ such that $D' \models \Gamma \nu'$ and and $\alpha' = H \nu'$. Consider
the encoding of $D'$ into a $(\col,\ddim)$-graph ${\gG = \langle \gV',\{
\gEp{c} \}_{c \in \col}, \lab' \rangle}$, and let ${\lab[0]', \dots, \lab[L]'}$
be the functions labelling the vertices of $\gG'$ when $\GNN'$ is applied to
it. We use letters $p$, $q$, and $q'$ for the vertices of $\gV'$.

We now prove the following statement by induction: for each ${0 \leq \ell \leq
L}$ and each vertex $u_x$ of $U$ whose level is at least $\ell$, we have
$\elt{\vlab{v}{\lab[\ell]}}{i} \leq \elt{\vlab{p}{\lab[\ell]'}}{i}$ for each $i
\in \{1, \dots, \ddim[\ell]\}$, where $v=v_{\nu(x)}$ and $p=v_{\nu'(x)}$. For
the base case, ${\ell = 0}$, consider an arbitrary $1 \leq i \leq \ddim[0]$ and
$u_x \in U$, and let $v=v_{\nu(x)}$ and $p=v_{\nu'(x)}$. Note that
$\elt{\vlab{v}{\lab[0]}}{i} \in\{ 0, 1 \}$ and $(\vlab{p}{\lab[0]'}) \in\{0,1\}$,
so we only need to prove that $\elt{\vlab{v}{\lab[0]}}{i} = 1$ implies
$\elt{\vlab{p}{\lab[0]'}}{i} = 1$. By Definition~\ref{def:canonical},
$\elt{\vlab{v}{\lab[0]}}{i} = 1$ implies $U_i(x\nu) \in D$. The construction of
$\Gamma$ ensures that ${U_i(x) \in \Gamma}$, and $D' \models \Gamma \nu'$
implies $U_i(x \nu') \in D'$, and so $\elt{\vlab{p}{\lab[0]}}{i} = 1$, as
required.

For the induction step, assume that the property holds for some ${\ell-1}$, and
consider an arbitrary vertex ${u_x \in U}$ whose level is at least $\ell$;
consider an arbitrary $c \in \col$, $i \in \{ 1, \dots, \ddim[i] \}$, and let
$v=v_{\nu(x)}$ and $p=v_{\nu'(x)}$. Note that the following holds.
\begin{align}
    \elt{\vlab{v}{\lab[\ell]}}{i}  = \, & \act \left( \sum_{j=1}^{\ddim[\ell-1]} \elt{\matA{\ell}}{i,j} \elt{\vlab{v}{\lab[\ell-1]}}{j}  + \sum_{c \in \col} \sum_{j=1}^{\ddim[\ell-1]} \elt{\matB{\ell-1}{c}}{i,j} \; \maxsum{\capacity{\ell}} \llbrace \elt{\vlab{w}{\lab[\ell-1]}}{j}  \mid \langle v, w \rangle \in \gE{c}  \rrbrace + \elt{\bias{\ell}}{i} \right) \label{eq:propagation-sum-G} \\
    \elt{\vlab{p}{\lab[\ell]'}}{i} = \, & \act \left( \sum_{j=1}^{\ddim[\ell-1]} \elt{\matA{\ell}}{i,j} \elt{\vlab{p}{\lab[\ell-1]'}}{j} + \sum_{c \in \col} \sum_{j=1}^{\ddim[\ell-1]} \elt{\matB{\ell-1}{c}}{i,j} \; \maxsum{\capacity{\ell}} \llbrace \elt{\vlab{q}{\lab[\ell-1]'}}{j} \mid \langle p, q \rangle \in \gEp{c} \rrbrace + \elt{\bias{\ell}}{i} \right) \label{eq:propagation-sum-G'}
\end{align}
The induction assumption ensures ${\elt{\vlab{v}{\lab[\ell-1]}}{j} \leq
\elt{\vlab{p}{\lab[\ell-1]'}}{j}}$ for each ${1 \leq j \leq \ddim[\ell-1]}$.
Also, for each colour ${c \in \col}$ and each ${1 \leq j \leq \ddim[\ell-1]}$,
we have that $\maxsum{\capacity{\ell}} \llbrace \elt{\vlab{w}{\lab[\ell-1]}}{j}
\mid \langle v, w \rangle \in \gE{c} \rrbrace$ is equal to ${\sum_{w \in W}
\elt{\vlab{w}{\lab[\ell-1]}}{j}}$, where ${W = M_{c,\ell-1,j}(u_x)}$. Recall
that the elements of $W$ are of the form ${v_{s_1}, \cdots, v_{s_{|W|}}}$ where
${s_1, \cdots, s_{|W|}}$ are terms in $\tms{D}$. Furthermore, by the
construction of $U$, there are $W$ distinct vertices ${u_{y_1}, \cdots,
u_{y_{|W|}}}$ in $U$ of level $\ell-1$ such that ${\nu(y_n) = s_n}$ and
$\edg{c}(u_x,u_{y_n})$ is in $U$ for each ${1 \leq n \leq |W|}$. Furthermore,
$\Gamma$ contains atoms ${\edg{c}(x,y_1), \dots, \edg{c}(x,y_{|W|})}$ as well
as inequalities ${y_n \noteq y_m}$ for ${1 \leq n < m \leq |W|}$. We then have
${\edg{c}(\nu'(x),\nu'(y_n)) \in D'}$ and ${\nu'(y_n) \neq \nu'(y_m)}$ for ${1
\leq n < m \leq |W|}$. Thus, ${W' = \{ v_{\nu'(y_1)}, \dots, v_{\nu'(y_{|W|})}
\}}$ is a set of $|W|$ \emph{distinct} $c$-neighbours of $v_{\nu(x)}$ in
$\gG'$. The induction assumption ensures that ${w = v_{\nu(y_n)}}$ and ${q =
v_{\nu'(y_n)}}$ imply ${\elt{\vlab{w}{\lab[\ell-1]}}{j} \leq
\elt{\vlab{q}{\lab[\ell-1]'}}{j}}$, and so ${\sum_{w \in W}
\elt{\vlab{w}{\lab[\ell-1]}}{j} \leq \sum_{q \in W'}
\elt{\vlab{q}{\lab[\ell-1]'}}{j}}$. Thus, by equations
\eqref{eq:propagation-sum-G} and \eqref{eq:propagation-sum-G'}, the fact that
the elements from $\matA{\ell}$ and all $\matB{\ell}{c}$ are nonnegative, and
$\act$ is monotonically increasing, we have $\elt{\vlab{v}{\lab[\ell]}}{i} \leq
\elt{\vlab{p}{\lab[\ell]'}}{i}$, as required.

Recall that ${\alpha' = H\nu'}$ is of the form $U_i(t')$ with ${t' = \nu'(x)}$;
moreover, ${U_i(t) \in T_{\GNN'}(D)}$ with ${t = \nu(x)}$. Now let $v=v_t$ and
$p = v_{t'}$. Now ${U_i(t) \in T_{\GNN'}(D)}$ implies
$\cls(\elt{\vlab{v}{\lab[L]}}{i}) = 1$, and the above property ensures
$\elt{\vlab{v}{\lab[L]}}{i} \leq \elt{\vlab{p}{\lab[L]'}}{i}$; since $\cls$ is
a step function, we have $\cls(\elt{\vlab{p}{\lab[L]'}}{i}) = 1$. Hence,
$U_i(t') \in T_{\GNN'}(D')$, as required.
\end{proof}

\algcapacity*

\begin{proof}
We first prove the two items of the theorem, and then we prove that
Algorithm~\ref{alg:capacity} terminates.

First, recall that by condition \ref{cond:s-x} of Lemma~\ref{lem:S}, for each
${0 \leq \ell \leq L}$ and each $1 \leq i \leq \ddim[\ell]$, set $\X{\ell}{i}$
is contains exactly all the elements of $\Se{\ell}{i}$. Furthermore, since
$\Se{\ell}{i}$ is strictly monotonically increasing by condition
\ref{cond:monotonic} of Lemma \ref{lem:S}, its smallest element is its first
element. Hence, the smallest element of $\X{\ell}{i}$ is the first element of
$\Se{\ell}{i}$. Furthermore, for any $\alpha \in \real$, let
$\Ssub{\ell}{i}{\alpha}$ be the subsequence of $\Se{\ell}{i}$ which contains
all elements in $\Ssub{\ell}{i}{\alpha}$ greater than $\alpha$. Clearly,
$\Xsub{\ell}{i}{\alpha}$ is identical to the set of elements in
$\Ssub{\ell}{i}{\alpha}$. Furthermore, since $\Se{\ell}{i}$ is strictly
monotonically increasing, then either $\Ssub{\ell}{i}{\alpha}$ is empty or it
contains an element $\ssub{\ell}{i}{\alpha}$ which appears in $\Se{\ell}{i}$
exactly once and satisfies the following conditions:
\begin{enumerate}[label={(A\arabic*)},leftmargin=30pt]
    \item\label{cond:A:smaller}
    all elements that precede $\ssub{\ell}{i}{\alpha}$ in $\Se{\ell}{i}$ are
    smaller or equal to $\alpha$; and

    \item\label{cond:A:bigger}
    all elements that follow $\ssub{\ell}{i}{\alpha}$ in $\Se{\ell}{i}$ are
    strictly greater than $\ssub{\ell}{i}{\alpha}$.
\end{enumerate}
In particular, condition \ref{cond:A:bigger} ensures that if
$\Ssub{\ell}{i}{\alpha}$ is not empty, then $\ssub{\ell}{i}{\alpha}$ is its
smallest element. Hence, to show the items of the theorem, it suffices to prove
the following:
\begin{itemize}
    \item $\Next{\ell}{i}{\symstart}$ returns the first element of
    $\Se{\ell}{i}$, and

    \item for each $\alpha \in \real$, $\Next{\ell}{i}{\alpha}$ returns
    $\symend$ if $\Ssub{\ell}{i}{\alpha}$ is empty, and otherwise it returns
    $\ssub{\ell}{i}{\alpha}$.
\end{itemize}
We show both items simultaneously via induction over $0 \leq \ell \leq L$.

For the base case $\ell=0$, consider an arbitrary $1 \leq i \leq \ddim[0]$. To
see that $\Next{0}{i}{\symstart}$ returns the first element of $\Se{0}{i}$,
simply note that line \ref{alg:next:ell0:first} of Algorithm \ref{alg:next}
ensures that $\Next{0}{i}{\symstart} = 0$, which is precisely the smallest
element of $\Se{0}{i}$. To prove the second item, consider an arbitrary $\alpha
\in \real$. If $\Ssub{0}{i}{\alpha}$ is empty then, $\alpha \geq 1$, in which
case line \ref{alg:next:ell0:third} of Algorithm \ref{alg:next} ensures
$\Next{0}{i}{\alpha} = \symend$, as expected. If $\Ssub{0}{i}{\alpha}$ is not
empty, we consider two possible cases: $\alpha <0 $ or $0 \leq \alpha <1$. If
$\alpha < 0$, then $\Ssub{0}{i}{\alpha} = (0,1)$, but $\Next{0}{i}{\alpha} =0$
by line \ref{alg:next:ell0:first} of Algorithm \ref{alg:next}, so the claim
holds. If $0 \leq \alpha < 1$, then $\Ssub{0}{i}{\alpha} = (1)$. But then, line
\ref{alg:next:ell0:second} of Algorithm \ref{alg:next} ensures
$\Next{0}{i}{\alpha} =1$.

For the induction step, consider some arbitrary $1 \leq \ell \leq L$, and
suppose that both items above hold for $\ell-1$. Consider an arbitrary $1 \leq
i \leq \ddim[\ell]$. We first show the first item. Observe that, the definition
of $\Se{\ell}{i}$ ensures that its first element is $\act(z)$ for $z =
\Val{\ell}{i}{\s{\ell-1}}{\Ys_{\emptyset}}$. Recall that $\s{\ell-1}$ is
defined as the vector of dimension $\ddim[\ell-1]$ where $\elt{\s{\ell-1}}{j}$
is the first element of $\Se{\ell-1}{j}$, for each $1 \leq j \leq
\ddim[\ell-1]$. However, by induction hypothesis, $\elt{\s{\ell-1}}{j} =
\Next{\ell-1}{j}{\symstart}$, and so $\s{\ell-1} = \Start{\ell}$. Then, lines
\ref{alg:next:init:z} and \ref{alg:next:first} of Algorithm \ref{alg:next}
ensure that $\Next{\ell}{i}{\symstart}$ is precisely $\act(z)$. We now show the
second item. Consider an arbitrary $\alpha \in \real$. We study the execution
of $\Next{\ell}{i}{\alpha}$. Since $\alpha \neq \symstart$, $F$ is initialised
as stated in line \ref{alg:next:init:frontier} and so the loop starting in line
\ref{alg:next:loop} is executed. We consider now the outcome of the loop's
execution. Let $\Se{\ell}{i}= q_0, q_1, \dots$. Let $N \geq 0$ be the smallest
natural number such that either $q_N$ is not defined or $q_N > \alpha$; such
$N$ must exist since $\Se{\ell}{i}$ is either finite or it converges to
infinity, and furthermore it is strictly monotonically increasing. We next show
the following claim ($\ast$): for each $0 \leq n \leq N$, the algorithm's loop
reaches a state where $F = f_{n}$ after a finite number of iterations. We prove
this by induction on $n$.

The base case is straightforward since $F$ initially contains only the triple
$\triple{\Start{\ell}}{\Ys_{\emptyset}}{z}$, where $z =
\Val{\ell}{i}{\Start{\ell}}{\Ys_{\emptyset}}$. Furthermore, $f_0$ contains only
the triple $\triple{\s{\ell-1}}{\Ys_{\emptyset}}{z'}$, for $z' =
\Val{\ell}{i}{\s{\ell-1}}{\Ys_{\emptyset}}$. However, we have already shown
that $\Start{\ell}=\s{\ell-1}$, so the initial state of $F$ is identical to
$f_0$. For the induction step, consider an arbitrary $0 \leq n < N$ and suppose
that $F =f_{n}$ after a finite number of iterations of the algorithm's loop; we
then show that $F =f_{n+1}$ holds after a finite number of additional
iterations. By definition, $f_n$ contains (at least) a triple of the form
$\triple{\x}{\Ys}{z}$ with $\act(z) = q_n$, and all other triples in $f_n$ are
of the form $\triple{\x'}{\Yps}{z'}$ with $z' \geq z$. The condition in line
\ref{alg:next:mintriple} then ensures that one of the triples of the form
$\triple{\x}{\Ys}{z}$ with $\act(z) = q_n$ will be selected; since $n < N$ and
so $q_n \leq \alpha$, the condition in line \ref{alg:next:return} will not be
satisfied, so the algorithm will not exit the loop and will afterwards start a
new loop iteration. Then, the condition in line \ref{alg:next:x:add} ensures
that no triple $\triple{\x'}{\Yps}{z'}$ with $z' \leq z$ is added to $F$. Let
$K$ be the number of triples in $f_n$ of the form $\triple{\x}{\Ys}{z}$ with
$\act(z) = q_n$. We then have that after reaching the state where $F = f_n$,
the algorithm's loop will run (at least) $K$ additional times. Looking at lines
\ref{alg:next:x:expand} to \ref{alg:next:c:numbers:add}, it is clear that each
iteration removes from $F$ one of the $K$ triples and adds to $F$ all of the
triple's successors of the form $\triple{\x'}{\Yps}{z'}$ with $z' > z$. Thus,
after those $K$ additional steps, $F$ will be exactly $f_{n+1}$. This concludes
the proof of ($\ast$).

Suppose now that $\Ssub{\ell}{i}{\alpha}$ is empty, which means that all
elements of $\Se{\ell}{i}$ are smaller than $\alpha$. Then, $N$ is precisely
the number of elements of $\Se{\ell}{i}$ plus $1$, that is, $N>0$ and $q_{N-1}$
is the last defined element of $\Se{\ell}{i}$. By the claim ($\ast$), in the
execution of $\Next{\ell}{i}{\alpha}$, $F$ becomes equal to $f_{N}$ after a
finite number of steps. Since $q_N$ is undefined, $f_{N}$ must be empty. But
then, since $F=f_{N}$, the condition in line \ref{alg:next:loop} ensures that
the loop is skipped, and line \ref{alg:next:return:final} ensures that the
algorithm outputs $\symend$, as expected. If $\Ssub{\ell}{i}{\alpha}$ is not
empty, then there exists an element $\ssub{\ell}{i}{\alpha}$ satisfying
conditions \ref{cond:A:smaller}, and \ref{cond:A:bigger}. In particular, the
definition of $N$, the condition \ref{cond:A:smaller}, and the fact that
$\ssub{\ell}{i}{\alpha} > \alpha$ together ensure that $\ssub{\ell}{i}{\alpha}$
is precisely the $N$th element of $\Se{\ell}{i}$. Claim ($\ast$) ensures that
in the execution of $\Next{\ell}{i}{\alpha}$, $F$ becomes equal to $f_{N}$
after a finite number of steps. Since the $N$th element of $\Se{\ell}{i}$ is
defined, there exists a triple $\triple{x}{\Ys}{z} \in f_n$ with $\act(z) =
\ssub{\ell}{i}{\alpha}$ and every other triple $\triple{x'}{\Yps}{z'} \in F$ is
such that $z' \geq z$. But then, since $f_n=F$, the next iteration of the loop
must select a triple with $z$ as the third component (note that this triple may
not be $\triple{x}{\Ys}{z}$). But since $\act(z) =\ssub{\ell}{i}{\alpha} >
\alpha$, the test in line \ref{alg:next:return} succeeds and so the algorithm
returns $\ssub{\ell}{i}{\alpha}$, as expected. This completes the proof of the
second item.

Finally, to see that Algorithm \ref{alg:capacity} is terminating, we simply
observe that the smallest positive number in each $\X{\ell}{i}$ can be obtained
by calling $\Next{\ell}{i}{\symstart}$ and then, if this returns $0$, calling
$\Next{\ell}{i}{0}$. We have already shown that such calls terminate and return
the expected result. All other elements defined in the pseudocode of Algorithm
\ref{alg:capacity} are easily computable from the parameters of $\GNN$, and so
the algorithm terminates.
\end{proof}

    \section{Proofs for Section 5}\label{sex:proofs-max}

\equivalencemax*

\begin{proof}
We show that $\GNN$ and $\Prog_\GNN$ are equivalent by taking an arbitrary
$(\col,\ddim)$-dataset $D$ and showing $T_{\Prog_\GNN}(D) = T_\GNN(D)$.
Inclusion ${T_{\Prog_\GNN}(D) \subseteq T_\GNN(D)}$ holds because, by
definition, $T_{\GNN}$ captures each rule ${r \in \Prog_\GNN}$, which implies
${T_r(D) \subseteq T_\GNN(D)}$. Since ${T_{\Prog_\GNN}(D) = \bigcup_{r \in
\Prog_\GNN} T_r(D)}$, we have ${T_{\Prog_\GNN}(D) \subseteq T_\GNN(D)}$.

For the converse inclusion, consider an arbitrary fact ${\alpha \in
T_\GNN(D)}$. Since $\GNN$ is a max ($\col,\ddim$)-GNN, its capacity
$\capacity{\GNN}$ is bounded by $1$. The procedure in the proof of
Theorem~\ref{thm:prog:equivalence:max} therefore constructs a ($L,|\col| \cdot
\ddim[\GNN]$)-tree-like rule $r$ that is captured by $\GNN$ satisfying ${\alpha
\in T_r(D)}$. Furthermore, since ${\capacity{\ell} \leq 1}$ for each ${1 \leq
\ell \leq L}$, in equation \eqref{eq:append-gamma} in the construction of $r$
we have ${|W| \leq \capacity{\ell'} \leq 1}$, so the construction does not
introduce any inequalities in the body of $r$, and so ${r \in \Prog_\GNN}$
holds. Hence, we have ${\alpha \in T_{\Prog_\GNN}(D)}$, and so ${T_\GNN(D)
\subseteq T_{\Prog_\GNN}(D)}$, as required. \end{proof}

\maxequivalence*

\begin{proof}
For an arbitrary $(\col,\ddim)$-dataset $D$, let ${\gG = \langle \gV,
\{\gE{c}\}_{c \in \col}, \lab \rangle}$ be the canonical encoding of $D$, and
consider applying $\GNN$ to $\gG$. We prove the claim by induction over ${1
\leq \ell < L}$. For the base case ${\ell = 1}$, consider an arbitrary term
$t$, an arbitrary position ${1 \leq i \leq \ddim[1]}$, and let $v$ be the
vertex corresponding to $t$. Let $\J{1}$ and $\Jp{1}$ be the following sets of
indices.
\begin{align}
    \J{1} = \;  & \{ j \mid 1 \leq j \leq \ddim[0] \text{ and } \elt{\vlab{v}{0}}{j} = 1 \} \\
    \Jp{1} = \; & \{ j \mid 1 \leq j \leq \ddim[0] \text{ and } \elt{\matA{1}}{i,j} = 1 \}
\end{align}
Recall that ${\elt{\vlab{v}{0}}{j} \in \{ 0,1 \}}$ and ${\elt{\matA{1}}{i,j}
\in \{ 0,1 \}}$ for each $1 \leq j \leq \ddim[0]$; furthermore, $\matB{1}{c}$
has all elements equal to $0$ for each $c \in \col$, and one can check that
$\elt{\bias{1}}{i} = 1-|\Jp{1}|$. Thus, the argument of $\act$ in the
computation of $\elt{\vlab{v}{\lab[1]}}{i}$ is equal to
\begin{equation}
    |\J{1} \cap \Jp{1}| + 1 - |\Jp{1}|, \label{eq:z:layer1}
\end{equation}
which is equal to 1 if ${\Jp{1} \subseteq \J{1}}$, and otherwise it less than
or equal to $0$. Hence, ${\elt{\vlab{v}{0}}{i} = 1}$ if $\Jp{1} \subseteq
\J{1}$, and otherwise $\elt{\vlab{v}{0}}{i} = 0$. Thus, to prove the claim, we
show that $\Jp{1} \subseteq \J{1}$ if and only if ${D \models \tau_i\nu}$ holds
for ${\nu = \{ x \mapsto t \}}$. For the $(\Leftarrow)$ direction, assume that
${D \models \tau_i\nu}$ holds for ${\nu = \{ x \mapsto t \}}$, and consider an
arbitrary ${j \in \Jp{j}}$. The definition of $\Jp{1}$ implies
$\elt{\matA{1}}{i,j} = 1$, so $\tau_i$ contains $U_j(x)$. But then, ${D \models
\tau_i\nu}$ implies $U_j(t) \in D$, so our encoding ensures
${\elt{\vlab{v}{0}}{j} = 1}$; hence, ${j \in J}$ holds, as required. For the
$(\Rightarrow)$ direction, assume that ${\Jp{1} \subseteq \J{1}}$ holds. Then,
for each ${U_j(x) \in \tau_i}$, we have ${j \in \J{1}}$ and so ${U_j(t) \in
D}$. Hence, ${D \models \tau_i\nu}$ holds for ${\nu = \{ x \mapsto t \}}$.

For the induction step, consider ${1 < \ell < L}$ such that the claim holds for
${\ell - 1}$, an arbitrary term $t$, an arbitrary position ${1 \leq i \leq
\ddim[\ell]}$, and let $v$ be the vertex corresponding to $t$. We consider two
cases. The first case is ${1 \leq i \leq \ddim[\ell-1]}$; then,
${\elt{\matA{\ell}}{i,j} = 1}$ if and only if ${j = i}$, for each $j$ we have
${\elt{\matB{\ell}{c}}{i,j} = 0}$, and ${\elt{\bias{\ell}}{i} = 0}$; hence, we
have $\elt{\vlab{v}{\ell}}{i}= \elt{\vlab{v}{\ell-1}}{i}$, so both properties
hold by the induction hypothesis. The second case is ${\ddim[\ell-1] < i \leq
\ddim[\ell]}$. For each ${c \in \col}$, let $\J{\ell,c}$ and $\Jp{\ell,c}$ be
defined as follows.
\begin{align}
    \J{\ell,c} = \;     & \{ j \mid 1 \leq j \leq \ddim[\ell-1] \text{ and there exists a vertex } u \text{ such that } \langle v, u \rangle \in \gE{c} \text{ and } \elt{\vlab{u}{\ell-1}}{j} = 1 \} \\
    \Jp{\ell,c} = \;    & \{ j \mid 1 \leq j \leq \ddim[\ell-1] \text{ and } \elt{\matB{\ell}{c}}{i,j} = 1 \}
\end{align}
Let $\tau_i$ be of the form \eqref{eq:tau:i}. Since $\varphi_{i,0}$ is a
conjunction of atoms of the form $U(x)$, there exists some ${1 \leq j_0 \leq
\ddim[\ell-1]}$ such that ${\varphi_{i,0} = \tau_{j_0}}$. Furthermore, recall
that ${\elt{\matA{\ell}}{i,j} \in \{ 0, 1 \}}$ and ${\elt{\vlab{v}{\ell-1}}{j}
\in \{ 0, 1 \}}$ for each ${1 \leq j \leq \ddim[\ell-1]}$,
${\elt{\matB{\ell}{c}}{i,j} \in \{ 0, 1 \}}$ for all ${c \in \col}$, and
${\elt{\bias{\ell}}{i} = - \sum_{c \in \col} |\Jp{\ell,c}|}$. Thus, the
argument of $\act$ in the computation of $\elt{\vlab{v}{\lab[\ell]}}{i}$ is
equal to
\begin{equation}
    \elt{\vlab{v}{\ell-1}}{j_0} + \sum_{c \in \col} \left(|\J{\ell,c} \cap \Jp{\ell,c}| - |\Jp{\ell,c}| \right),    \label{eq:z:layer:ell}
\end{equation}
which is equal to $1$ if ${\elt{\vlab{v}{\ell-1}}{j_0} = 1}$ and ${\Jp{\ell,c}
\subseteq \J{\ell,c}}$ for each ${c \in \col}$, and otherwise it is less than
or equal to $0$. Consequently, ${\elt{\vlab{v}{\ell}}{i} = 1}$ if
${\elt{\vlab{v}{\ell-1}}{j_0} = 1}$ and ${\Jp{\ell,c} \subseteq \J{\ell,c}}$
for each ${c \in \col}$, and otherwise ${\elt{\vlab{v}{\ell}}{i} = 0}$. Thus,
to prove the claim, we show that ${\elt{\vlab{v}{\ell-1}}{j_0} = 1}$ and
${\Jp{\ell,c} \subseteq \J{\ell,c}}$ for each ${c \in \col}$ if and only if
there exists a substitution $\nu$ mapping $x$ to $t$ such that ${D \models
\tau_i\nu}$.

For the $(\Leftarrow)$ direction, assume that such substitution $\nu$ exists.
We then have $D \models \varphi_{i,0}\nu$; however, ${\varphi_{i,0} =
\tau_{j_0}}$ and $j_0 \leq \ddim[\ell-1]$, so the induction hypothesis implies
${\elt{\vlab{v}{\ell-1}}{j_0} = 1}$. To prove ${\Jp{\ell,c} \subseteq
\J{\ell,c}}$ for each ${c \in \col}$, we consider arbitrary ${c \in \col}$ and
${j_k \in \Jp{\ell,c}}$, and we let ${s = \nu(y_k)}$. Note that $D \models
E^{c}(x,y) \nu$ and so $\langle v, v_s \rangle \in \gE{c}$. Furthermore,
$\varphi_{i,k}$ is a $(\ell-2,c)$-tree-like formula for $y_k$ equal to
$\tau_{j_k}$ up to variable renaming. Also, $D \models \varphi_{i,k}\nu$
ensures that there exists a substitution $\nu_k$ mapping $x$ to $s$ such that
${D \models \tau_{j_k}\nu_k}$, so, by applying the induction hypothesis to the
vertex $u$ for term $s$, we have that ${\elt{\vlab{u}{\ell-1}}{j_k} = 1}$.
Consequently, ${j \in \J{\ell,c}}$ holds, as required.

For the $(\Rightarrow)$ direction, assume that ${\elt{\vlab{v}{\ell-1}}{j_0} =
1}$ and ${\Jp{\ell,c} \subseteq \J{\ell,c}}$ for each ${c \in \col}$. Since
${\elt{\vlab{v}{\ell-1}}{j_0} = 1}$, the induction hypothesis ensures ${D
\models \varphi_{i,0}\{ x \mapsto t \}}$. Furthermore, for each ${1 \leq k \leq
m_i}$, $\varphi_{i,k}$ is a ${(\ell-2,|\col|)}$-tree-like formula, and so there
exists ${1 \leq j_k \leq \ddim[\ell-1]}$ such that $\varphi_{i,k}$ is equal to
$\tau_{j_k}$ up to variable renaming. Furthermore, ${(\matB{\ell}{c})_{i,j_k} =
1}$ and so ${j_k \in \Jp{\ell,c}}$, which in turn implies ${j_k \in
\J{\ell,c}}$. Thus, there exists vertex $u$ for a term ${s \in \tms{D}}$ such
that ${\langle v, u \rangle \in \gE{c}}$ and ${\elt{\vlab{u}{\ell-1}}{j_k} =
1}$. By the induction hypothesis, there exists a substitution $\nu_k$ mapping
$x$ to $s$ such that ${D \models \tau_{j_k}\nu_k}$. Moreover, $\tau_{j_k}$ is
equal to $\varphi_{i,k}$ up to variable renaming, so there exists a
substitution $\nu'_k$ mapping $y_k$ to $s$ such that ${D \models
\varphi_{i,k}\nu'_k}$. Note that $\varphi_{i,k}$ has no variables in common
with $\varphi_{i,k'}$ for each ${1 \leq k < k' \leq m_i}$, and none of these
formulas mention $x$, so substitution ${\nu = \{ x \mapsto t \} \cup
\bigcup_{k=1}^{m_i} \nu'_k}$ is correctly defined. Observe that ${D \models
\varphi_{i,0}\nu}$, ${D \models \varphi_{i,k}\nu}$ for each ${1 \leq k \leq
m_i}$, and ${D \models E^c(x,y_k)\nu}$ since ${\langle v, u \rangle \in
\gE{c}}$. Thus, ${D \models \tau_i\nu}$ holds, as required. \end{proof}

\progequivalencemax*

\begin{proof}
For an arbitrary $(\col,\ddim)$-dataset $D$, let ${\gG = \langle \gV,
\{\gE{c}\}_{c \in \col}, \lab \rangle}$ be the canonical encoding of $D$, and
consider applying $\GNN$ to $\gG$. Moreover, consider an arbitrary vertex ${v
\in \gV}$ for a term ${t \in \tms{D}}$, and an arbitrary position ${1 \leq i
\leq \ddim[L]}$. We show that ${U_i(t) \in T_{\GNN_{\Prog}}(D)}$ if and only if
${U_i(t) \in T_\Prog(D)}$. Towards this goal, let $\J{L}$ and $\Jp{L}$ be the
following sets of indices.
\begin{align}
    \J{L} = \;  & \{ j \mid 1 \leq j \leq \ddim[L-1] \text{ and } \elt{\vlab{v}{L-1}}{j} = 1 \} \\
    \Jp{L} = \; & \{ j \mid 1 \leq j \leq \ddim[L-1] \text{ and } \elt{\matA{L}}{i,j} = 1 \}
\end{align}
For each ${1 \leq j \leq \ddim[L-1]}$, we have ${\elt{\matA{L}}{i,j} \in \{ 0,
1 \}}$, matrices $\matB{L}{c}$ and $\bias{L}$ have all elements equal to $0$,
and Lemma~\ref{lem:max:equivalence} ensures ${\elt{\vlab{v}{L-1}}{j} \in \{ 0,
1 \}}$. Thus, the argument of $\act$ in the computation of
$\elt{\vlab{v}{\lab[L]}}{i}$ is equal to ${|\J{L} \cap \Jp{L}|}$, which is
greater than or equal to $1$ if ${\Jp{L} \cap \J{L} \neq \emptyset}$, and
smaller than or equal to $0$ otherwise. Hence, ${\elt{\vlab{v}{L}}{i} = 1}$ if
${\Jp{L} \cap \J{L} \neq \emptyset}$, and ${\elt{\vlab{v}{L}}{i} = 0}$
otherwise.

Now assume that ${U_i(t) \in T_{\GNN_{\Prog}}(D)}$ holds. The latter implies
${\cls(\elt{\vlab{v}{L}}{i}) = 1}$, which implies ${\elt{\vlab{v}{L}}{i} \geq
1}$; moreover, as shown in the previous paragraph, then ${\Jp{L} \cap \J{L}
\neq \emptyset}$. Consider an arbitrary ${j \in \Jp{L} \cap \J{L}}$. Since ${j
\in \Jp{L}}$, there exists a rule of the form ${U_i(x) \gets \varphi \in
\Prog}$ where $\varphi$ is equal to $\tau_j$. Furthermore, ${j \in \J{L}}$
implies ${\elt{\vlab{v}{L-1}}{j} = 1}$, and by Lemma \ref{lem:max:equivalence}
there exists a substitution mapping $x$ to $t$ such that ${D \models
\varphi\nu}$. Hence, ${U_i(x)\nu \in T_\Prog(D)}$, and so ${U_i(t) \in
T_\Prog(D)}$ holds, as required.

Conversely, assume that ${U_i(t) \in T_\Prog(D)}$ holds. Fact $U_i(t)$ is
produced by a rule ${\varphi \rightarrow U_i(x) \in \Prog}$ and a substitution
$\nu$ mapping $x$ to $t$ such that ${D \models \varphi\nu}$. Since $\varphi$ is
a $(L-2,f)$-tree-like formula for $x$, there exists ${1 \leq j \leq
\ddim[L-1]}$ such that $\varphi$ is equal to $\tau_j$ up to variable renaming,
and $\tau_j$ is a $(L-2,f)$-tree-like formula for $x$. Hence, Lemma
\ref{lem:max:equivalence} ensures that ${\elt{\vlab{v}{L-1}}{j} = 1}$ and so
${j \in \J{L}}$. Furthermore, the definition of $\matA{L}$ ensures that
${\elt{\matA{L}}{i,j} = 1}$, and so $j \in \Jp{L}$. Thus, ${\Jp{L} \cap \J{L}
\neq \emptyset}$, which implies ${\elt{\vlab{v}{L}}{i} = 1}$; this, in turn,
ensures ${\cls(\elt{\vlab{v}{L}}{i}) = 1}$, so $U_i(t) \in T_{\GNN_{\Prog}}(D)$
holds, as required.

We next provide an upper bound on $\ddim[L-1]$. By
Definition~\ref{def:tree-like}, the fan-out of a variable of depth $i$ is at
most $f(d-i)$; moreover, the number of variables of depth $i$ is at most the
number of variables of depth $i-1$ times the fan-out of each variable, which is
${f^i \cdot d \dots (d - i + 1)}$ and is bounded by ${f^i \cdot d!}$. By adding
up the contribution of each depth, there are at most ${f^d \cdot (d+1)!}$
variables. Each variable is labelled by one of the ${2^{\ddim}}$ formulas of
depth zero, and each non-root variable is connected by one of the $|\col|$
predicates to its parent. Hence, there are at most ${(|\col| \cdot
2^{\ddim})^{f^d \cdot (d+1)!}}$ tree-like formulas.
\end{proof}

}{}

\end{document}